\newtheorem{theorem}{Theorem}
\newtheorem{lemma}{Lemma}
\newtheorem{definition}{Definition}
\newcommand{\alphav}{\mathbf{\alpha}}
\newcommand{\Xf}{\mathfrak{X}}
\newcommand\E{\mathbb{E}}
\newcommand\R{\mathbb{R}}
\newcommand\T{{\scriptscriptstyle{\mathsf{T}}}}
\newcommand\Normal{\operatorname{N}}
\newcommand{\Identity}{\operatorname{I}}
\newcommand\veps{\varepsilon}
\newcommand\simiid{\ensuremath{\sim_{\operatorname{iid}}}}
\newcommand\MSE{\operatorname{MSE}}
\newcommand\ip[1]{\langle #1 \rangle}
\newcommand{\calR}{\mathcal{R}}
\newcommand{\calX}{\mathcal{X}}
\newcommand{\calY}{\mathcal{Y}}
\newcommand{\calZ}{\mathcal{Z}}
\newcommand{\calC}{\mathcal{C}}
\newcommand{\calE}{\mathcal{E}}
\newcommand{\argmin}{\operatorname*{\arg\min}}
\newcommand{\rank}{\textrm{rank}}
\newcommand{\mean}{\textrm{mean}}
\newcommand{\op}{\textrm{op}}
\newcommand{\hGamma}{\widehat{\Gamma}}
\newcommand{\WW}{\widehat{W}}
\newcommand{\BB}{\widehat{B}}
\newcommand{\bmeps}{\bm{\varepsilon}}
\newcommand{\MM}{\widehat{M}}
\newcommand{\Mo}{\widehat{M}_{0}}
\newcommand{\VV}{\widehat{V}}
\newcommand{\trace}{\mathrm{Tr}}
\newcommand{\pushright}[1]{\ifmeasuring@#1\else\omit\hfill$\displaystyle#1$\fi\ignorespaces}
\newcommand{\pushleft}[1]{\ifmeasuring@#1\else\omit$\displaystyle#1$\hfill\fi\ignorespaces}
\newcommand{\citet}[1]{\citeauthor*{#1}~\cite{#1}}
\title{Learning Tensor  Representations for Meta-Learning}
\author{
    Samuel Deng \\
    Columbia University\\
    \texttt{sd3013@columbia.edu} \\
    \And 
    Yilin Guo \\
    Columbia University\\
    \texttt{yg2553@columbia.edu} \\
    \And 
    Daniel Hsu \\
    Columbia University\\
    \texttt{djhsu@cs.columbia.edu} \\
    \And 
    Debmalya Mandal\thanks{Authors are alphabetically ordered and this work was done when DM was at Columbia University.}\\
    MPI-SWS \\
    \texttt{dmandal@mpi-sws.org}
}
\begin{document}

\maketitle





\begin{abstract}
We introduce a tensor-based model of shared representation  for meta-learning from a diverse set of tasks.  Prior works on  learning linear representations for meta-learning assume that there is a common shared representation across different tasks, and do not consider the additional task-specific observable side information. In this work, we model the meta-parameter through an order-$3$ tensor, which can adapt to the observed task features of the task. We propose two methods to estimate the underlying tensor. The first method solves a tensor regression problem and works under natural assumptions on the data generating process. The second method uses the method of moments under additional distributional assumptions and has an improved sample complexity in terms of the number of tasks.
We also focus on the meta-test phase, and consider estimating task-specific parameters on a new task. Substituting the estimated tensor from the first step allows us estimating the task-specific parameters with very few samples of the new task, thereby showing the benefits of learning tensor representations for meta-learning. Finally, through simulation and several real-world datasets, we evaluate our methods and show that it improves over previous linear models of shared representations for meta-learning.
\end{abstract}

\section{Introduction}
One of the major challenges in modern machine learning is  training a model with limited amounts of data. This is particularly important in settings where data is scarce and new data is costly to acquire. In recent years, several techniques like data augmentation, transfer learning have been proposed to address problems with limited data. The focus of this paper is meta-learning, which has turned out to be an important framework to address such problems. The main idea behind meta-learning is to design learning algorithms that can leverage prior learning experience to adapt to a new problem quickly, and learn a useful algorithm with few samples. Such approaches have been quite successful in diverse applications like natural language processing \cite{LHCG19}, robotics \cite{NKLK20}, and healthcare \cite{ZTDZ+19}.

Meta-learning algorithms are often given a family of related tasks and attempt to use few samples on a new related task by utilizing the overlap between the new test task and already seen training tasks. In that sense, a meta-learning algorithm is learning to learn on new tasks, and performance improves with experience and number of tasks~\cite{TP98}. Despite immense success, we are yet to fully understand the theoretical foundations of meta-learning algorithms. The most promising theoretical direction stems from representation learning. The main idea is that the tasks share a common shared representation and a task-specific representation \cite{TJJ20B, TJJ20}: if the shared representation is learned from training tasks, then the task-specific representation for the new task can be learned with few samples.

Current  models of shared representations for meta-learning do not take into account two observations -- (1) the training tasks are often heterogeneous, and the shared representation cannot be captured by a single parameter, (2) tasks often come with additional task-specific observable side information, and they should be part of any representation-based model of meta-learning. The first situation often arises in robotics, and various reinforcement learning environments \cite{VSHL19}, while the latter is prevalent in recommender system \cite{VTMB+17}, where items (tasks) that users rate often come with observable features.

We aim to understand meta-learning of features for settings where task-specific observable features affect the outcome. In particular, we are interested in the following questions. (1) What is the appropriate generalization of meta-learning of linear representation with task-specific observable features? (2) Moreover, given samples from $T$ tasks, how can we efficiently learn such a representation and does it improve sample efficiency on a new task?

\subsection{Contributions}

\textbf{Tensor Based Model.} We propose a tensor based model of representations for meta-learning representations for a diverse set of tasks. In particular, we model the meta-parameter through a tensor of order-$3$, which can be thought of as a multi-linear function mapping a tuple of (input feature, observed task feature, unobserved task feature) to a real-valued output. As our model considers task-specific observed features, the meta-parameter can adapt to particular task and generalizes the matrix-based linear representations proposed by \cite{TJJ20}.

\textbf{Estimation.} We first determine the identifiable component of the shared representation based model, and estimate the first two factors of the underlying tensor in the meta-training phase. We propose two methods -- (1) tensor regression based method works with natural assumptions on the data generating process, and (2) method of moments based estimation works under additional distributional assumptions, but has improved sample complexity in terms of the number of tasks.

\textbf{Meta-Test Phase.} After estimating the shared parameters, we focus on the meta-test phase, where a new task is given. We show that substituting the estimated factors from the first step provably improves error in estimating the task-specific parameters on a new task. In particular, the excess test error on the new task is bounded by $O\left(\frac{r^2}{N_2}\right)$ where $r$ is the rank of the underlying tensor and $N_2$ is the number of samples from the new task. As tensor rank $r$ can be quite small compared to the dimensions, this highlights the benefits of learning task-adaptive representations in meta-learning. Finally, through a simulated dataset and several real-world datasets, we evaluate our methods and show that it improves over previous models of learning shared representations for meta-learning. 

\subsection{Related Work}

\citet{Baxter00} was the first  to prove generalization bound for multitask learning problem. However, they considered a model of multitask learning where tasks with shared representation are sampled from a generative model. \citet{PM13, MPR16} developed general uniform-convergence based framework to analyze multitask representation learning. However, they assume oracle access to a global empirical risk minimizer. On the other hand, we provide specific algorithms and also consider task-specific side information.

The work closest to ours is \cite{TJJ20}, who proposed a linear model for learning representation in meta-learning. Our model can be thought of as a general model of theirs as we do not assume a fixed low-dimensional representation across tasks, and can adapt to observable side-information of the tasks. We also note that \citet{TJJ20B} generalized the linear model \cite{TJJ20} to consider transfer learning with general class of functions, however, they assume oracle access to a global empirical risk minimizer, and the common representation (a shared function) does not adapt to observable features of the tasks. Finally, \citet{DHKL+20} also considered the problem of learning shared representations and obtained similar results. Compared to \cite{TJJ20B}, they consider general non-linear representations, but the representation again does not depend on the observable features of the task.

Our work is also related to the conditional meta-learning framework introduced by \cite{WDC20,DPC20}. Conditional meta-learning aims to learn a conditioning function that maps task-specific side information to a meta-parameter suitable for the task. \citet{DPC20} studies a biased regularization formulation where the goal is to find task-specific parameter close to a bias vector, possibly dependent on side-information. On the other hand, \cite{WDC20} takes a structured prediction framework, and only proves generalization bounds. 
Although our framework falls within the conditional meta-learning framework, we want to understand the benefits of representation learning on a new task. 

In this work, we aim to understand meta-learning through a representation learning viewpoint. However, in recent years, several works have attempted to improve our understanding of meta-learning from other viewpoints. These include optimization \cite{Bern21,GS20}, train-validation split \cite{BCZZ+20}, and convexity \cite{SZKA20}. Additionally, there are several recent works on understanding gradient based meta-learning \cite{FAL17,FRKL19, DCGP19, BKT19, KBT19}, but their setting is very different from ours.

Finally, we use tensors to model the meta-parameter in the presence of task-specific side information. Our estimation method uses tensor regression \cite{ZLZ13, TSHK11} and tensor decomposition \cite{AGHK+14}. For tensor regression, we build upon the algorithm proposed by \cite{TSHK11}, and for tensor decomposition we use a robust version introduced by \cite{AGM14}. 

\section{Preliminaries}
We will consider standard two-stage model of meta-learning, consisting of a meta-training phase and a meta-test phase. In the meta-training stage, we see $N$ samples from $T$ training tasks and learn a meta parameter. In the meta-test stage, we see $N_2$ samples from a fixed target task (say task $0$) and learn a target-specific parameter conditioned on the meta-parameter and  features of the new task $0$. We first define our response model which specifies the particular model of shared linear representation.

\paragraph{Response model.}
There are $T$ training tasks and each task is associated with a pair of observed and unobserved task feature vector.
Task $t$ is characterized by $(Y_t, Z_t)$ where $Y_t \in \R^{d_2}$ is the observed task feature vector and $Z_t \in \R^{d_3}$ is the unobserved task feature vector for the $t$-th task. 
A sample from task $t$ is specified by a tuple $(X,Y_t,Z_t)$, where $X$ is some user feature vector. Given such a data tuple $(X,Y_t,Z_t)$ the response is given as
\begin{align}
    R & = A(X,Y_t,Z_t) + \veps
    = \sum_{i,j,k} A_{i,j,k} X_i Y_{tj} Z_{tk} + \veps \label{eq:response-model}
\end{align}
Here the noise variable $\veps \sim \Normal(0,\sigma^2)$, and $A \in \R^{d_1 \times d_2 \times d_3}$ is the system tensor which we treat as a multi-linear real-valued function on $\R^{d_1} \times \R^{d_2} \times \R^{d_3}$. Note that, our model generalizes the linear model proposed by \cite{TJJ20}, and the meta-parameter (tensor $A$) is not a fixed parameter, and adapts to the observed feature / side information for task $t$. We will assume that the tensor $A$ has CP-rank $r$ i.e. there exist matrices $A^1 \in \R^{d_1 \times r}$, $A^2 \in \R^{d_2 \times r}$, and $A^3 \in \R^{d_3 \times r}$ such that
\[
A_{i,j,k} = \sum_{s=1}^r A^1_{si} A^2_{sj} A^3_{sk}
\]
Following \cite{KB09}, we will write $A = \llbracket \Identity_r; A^1, A^2, A^3\rrbracket$ to denote the rank-$r$ decomposition of the tensor $A$. Notice that here we assume that all the singular values of the tensor $A$ is one. Without making strong assumptions on the unobserved task features $Z_t$, general singular values cannot be identified.\footnote{We provide a counter-example in the appendix.}

\paragraph{Training data.}
Let $P$ be a distribution over the feature vectors $X_i$'s which we will often refer to as user feature vectors. Let $Q$ be a joint distribution over observed and unobserved task feature vectors.
Let $\{ X_i : i \in [N] \}$ and $\{ (Y_t,Z_t) : t \in [T] \}$ be independent random variables, where $X_1,\dotsc,X_N \simiid P$ and $(Y_1,Z_1),\dotsc,(Y_T,Z_T) \simiid Q$.
Conditional on these (random) feature vectors, let $R_{1}, \dotsc, R_{N}$ be independent realizations of $R$ from the response model in \Cref{eq:response-model}, where 
\begin{equation}\label{eq:training-samples}
R_{i} = A(X_i, Y_{t(i)}, Z_{t(i)}) + \veps_i.
\end{equation} 
Here $t: [N] \rightarrow [T]$ is a mapping that specifies, for each training instance $i$, corresponding task $t(i)$. Therefore, the training data is given as $\{X_i, Y_{t(i)},  R_i\}_{i \in [N]}$. 


\paragraph{Meta-Test data.} At test time we are given a fixed task (say $0$) with observed feature $Y_0$ and unobserved feature $Z_0$. We are given $N_2$ instances from this new task,  $\{X_i, Y_0, R_i\}_{i\in[N_2]}$ where $X_1,\ldots,X_{N_{2}} \simiid P$. Our goal is to design a predictor $f \colon \R^{d_1} \times \R^{d_2} \to \R$ that maps an input feature and an observed task feature to a predicted response. We will evaluate our predictor by its mean squared error on the new task. 
\begin{align*}
    \MSE(f)
    & = \E\sbr{ (f(X,Y_0) - A(X,Y_0,Z_0) -\veps)^2 } = \sigma^2 + \E\sbr{ (f(X,Y_0) - A(X,Y_0,Z_0))^2 } .
\end{align*}
In order to design the predictor on the new task, we need estimates of tensor $A$, and unobserved task feature on the new task $Z_0$. 

\paragraph{Notations.} 
For a matrix $B \in \R^{d_1\times d_2}$ we will write $\norm{B}_\op$ to denote its operator norm, which is defined as $\norm{B}_\op = \max_{x \in \R^{d_2}} \frac{\norm{Bx}_2}{\norm{x}_2}$. For matrix $B$, we will write $\norm{B}_F = \sqrt{\sum_{i,j} B_{ij}^2}$ to denote its Frobenius norm. 
For a tensor $A \in \R^{d_1 \times d_2 \times d_3}$ we write its spectral norm $\norm{A}_\op = \max_{\norm{x}_2 = \norm{y}_2 = \norm{z}_2 = 1} \abs{A(x,y,z)}$. Like a matrix, we will also write $\norm{A}_F = \sqrt{\sum_{i,j,k} A_{i,j,k}^2}$ to denote the Frobenius norm of the tensor $A$. We sometimes use the tensor by slices, for the $3$-order tensor $A$, we denote its horizontal slices as $A_{j::}$, for $j\in [d_{1}]$.

We will use two types of special matrix products in our paper. Given matrices $A \in \R^{d_1 \times d_2}$ and $B \in \R^{d_3 \times d_4}$, the Kronecker product $A \otimes B \in \R^{d_1 d_3 \times d_2 d_4}$ is
$$
A \otimes B = [a_1 \otimes b_1\  a_1 \otimes b_2 \ldots a_{d_2} \otimes b_{d_4-1}\  a_{d_2} \otimes b_{d_4}]
$$
For matrices $A \in \R^{d_1 \times d_2}$ and $B \in \R^{d_3 \times d_2}$, their Khatri-Rao product $A \odot B \in \R^{d_1 d_3 \times d_2}$ is
$$
A \odot B = [a_1 \otimes b_1 \ a_2 \otimes b_2 \ \ldots a_{d_2} \otimes b_{d_2}]
$$
In addition, we denote standard basis vector as $\mathbf{e}_{i}$ whose coordinates are all zero, except $i$-th equals $1$. 

\section{Estimation}
We estimate the parameters of our model in two steps -- (1)  estimate the shared tensor using the meta-training data, and (2)  estimate the parameters of the test task using the meta-test data and the estimate of the shared tensor. However, it turns out that even when there is a single task in the meta-training phase, the third factor $A^3$ cannot be identified for general tensor with orthogonal factors. In the appendix, we construct an example which shows that two different tensors with identical $A^1, A^2$ but different $A^3$ leads to the same observed outcomes. Therefore, we estimate the first two factors of the tensor $A$ in the meta-training phase. In the meta-test phase, we substitute estimates of $A^1$ and $A^2$ and recover the parameters for the new task. We provide two ways to estimate the factors. The first method uses tensor regression; the second method uses the method-of-moments.

\subsection{Tensor-Regression Based Estimation}
In order to see why tensor regression can help us recover the shared tensor, we first show an alternate way to write the response $R_i$, as defined in \Cref{eq:training-samples}. 
Define
\begin{equation}\label{eq:unobserved-task-feature}
  \calZ = \begin{bmatrix} Z_1 & \cdots & Z_T \end{bmatrix}^\T \in \R^{T\times d_3}
\end{equation}
to be the matrix corresponding to the unobserved features of the $T$ training tasks. Then $A \times_3 \calZ = A(I_{d_1}, I_{d_2}, \calZ) \in \R^{d_1 \times d_2 \times T}$ is the tensor corresponding to  unobserved parameters, defined as
\begin{equation*}
  (A\times_3 \calZ)_{i,j,t} = \sum_{k=1}^d A_{i,j,k} \calZ_{t,k}.
\end{equation*}
Additionally, we define a covariate tensor $\calX_i \in \R^{d_1 \times d_2 \times T}$ corresponding to the observed features as:
\begin{equation}\label{eq:covariate-tensor}
\begin{split}
    \calX_i(\cdot,\cdot,t) =
    \begin{cases}
         X_i Y_{t(i)}^\T & \text{if $t = t(i)$} \\
         0_{d_1 \times d_2} & \text{o.w.}
    \end{cases} 
\end{split}
\end{equation}
Then, according to \Cref{eq:training-samples}, we have the following linear regression model for the $i$-th response.
\begin{equation}\label{eq::linear-regression-model-AxZ}
  R_i =  \ip{ \calX_i, A\times_3 \calZ } + \veps_i.
\end{equation}
Therefore, we can use tensor regression to get an estimate of $A \times_3 \calZ$.  Since the three factors of $A$ are $A^1, A^2$, and $A^3$, it can be easily seen that the CP-decomposition of $A\times_3 \calZ$ is $\llbracket \Identity_r; A^1, A^2, \calZ A^3\rrbracket = \llbracket G^{-1}; A^1, A^2, \calZ A^3 G\rrbracket$. Here $G$ is a diagonal matrix with $i$-th entry $1/\norm{\calZ A^3_i}_2$ and normalizes the columns of $\calZ A^3$. Because of this particular form of the tensor $A \times_3 \calZ$, we can run a tensor decomposition of the estimate of $A\times_3 \calZ$ to recover $A^1, A^2$, and $\calZ A^3 G$. However, there is a catch as we have an estimate of $A \times_3 \calZ$, instead of the exact tensor. So we need the tensor decomposition method to be \emph{robust} to the estimation error. 
Algorithm~\ref{alg:reg_dec} describes the full algorithm for recovering $A$ from the training samples.

\begin{algorithm}[!ht]
\DontPrintSemicolon
\begin{minipage}{0.9\textwidth}
\KwInput{$(X_i,Y_{t(i)}, R_i)$ for $i=1,\ldots,N$}
	\begin{enumerate}
		\item  Solve the following tensor regression problem: 
    \begin{align}\label{eq:tensor-regression-1}
        &\BB = \argmin_{B \in \R^{d_1 \times d_2 \times T} } \left\{ \frac{1}{N} \sum_{i=1}^{N} \left(R_i - \left \langle \calX_i, B \right \rangle \right)^2 + \lambda \norm{B}_{S} \right \} 
    \end{align}
 \item   Run a robust tensor decomposition of $\BB$ of CP-rank $r$:
$$
\llbracket\WW; \BB^1, \BB^2, \BB^3\rrbracket \leftarrow \textrm{Robust-Tensor-Decomposition}(\BB, r)
$$
    \end{enumerate}
    \KwOutput{Return $\widehat{A \times_3 \calZ} = \llbracket \Identity_r; \BB^1, \BB^2, \BB^3 \WW\rrbracket$.}
\end{minipage}

\caption{Tensor-Regression Based Estimation\label{alg:reg_dec}}
\end{algorithm}

\paragraph{Tensor Regression Details and Guarantees. }
Throughout this section, we will make the following assumptions about the data generating distribution.
\begin{enumerate}[label=({A}{{\arabic*}})]
    \item \label{asn:X} $X_1,\ldots,X_N \simiid \Normal(0,\Sigma)$.
    \item \label{asn:Y} $Y_1,\ldots,Y_T \simiid \Normal(0, \Sigma_y)$.
    \item \label{asn:task} For each $i$, $t(i) \sim \textrm{Unif}\set{1,\ldots,T}$. 
\end{enumerate}
\Cref{eq:tensor-regression-1} is the tensor regression step to obtain an estimate of $B = A \times_3 \calZ$. We use a regularized 
least squared regression, introduced by \citet{TSHK11}. Here $\norm[0]{B}_{S}$ is the overlapped Schatten-1 norm of the tensor $B$, which is defined as the average of mode-wise nuclear norms i.e. $\norm[0]{B}_S = 1/3 \sum_{k=1}^3 \norm[0]{B_{(k)}}_{\star}$. Since matrix nuclear norm is a convex function, the tensor regression problem stated in \Cref{eq:tensor-regression-1} is also a convex problem, and can be solved efficiently. 

For a tensor of dimension $d_1 \times d_2 \times T$, we introduce the following notation, which will appear frequently in our bounds.
\begin{equation}\label{eq:D1-defn}
D_1 = \sqrt{d_1} + \sqrt{d_2} + \sqrt{T} + \sqrt{d_1 T} + \sqrt{d_2 T} + \sqrt{d_1 d_2}
\end{equation}
The next theorem states the guarantees of the tensor regression step.
\begin{theorem}\label{thm:tensor_regression_bound}
Suppose Assumptions~\ref{asn:X}-\ref{asn:task} hold, and $N \ge O\left(\frac{\lambda_{\max}(\Sigma)}{\lambda_{\min}(\Sigma_y)}rD_1^2\right)$. Then, with probability at least $1 - e^{-\Omega(D_1^2)}$, we have
$$
\norm[1]{\BB- B}_F \le O\left(\frac{\sigma T  D_1 \sqrt{r}}{\lambda_{\min}(\Sigma_y) \lambda_{\min}(\Sigma) \sqrt{N}} \right).
$$
\end{theorem}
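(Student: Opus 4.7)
The plan is to apply the regularized convex M-estimation framework of Negahban--Wainwright, specialized to overlapped Schatten-1 tensor regression by \citet{TSHK11}, to the structured design $\calX_i$ that arises here. The ingredients are (i) a deterministic basic inequality from the optimality of $\BB$, (ii) an upper bound on the stochastic noise term in the dual of the overlapped Schatten-1 norm, (iii) a restricted-strong-convexity (RSC) lower bound on the empirical quadratic form, and (iv) an algebraic combination. The four factors $1/\lambda_{\min}(\Sigma)$, $1/\lambda_{\min}(\Sigma_y)$, $T$, and $D_1\sqrt{r}$ appearing in the stated bound will each come from one of these ingredients.

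Writing $\Delta = \BB - B$ with $B = A\times_3\calZ$, and using $R_i = \langle \calX_i, B\rangle + \veps_i$ together with optimality of $\BB$, the basic inequality rearranges to
$$\frac{1}{N}\sum_{i=1}^N \langle \calX_i, \Delta\rangle^2 \le \frac{2}{N}\sum_{i=1}^N \veps_i\langle \calX_i, \Delta\rangle + \lambda(\|B\|_S - \|B+\Delta\|_S).$$
Applying H\"older in each mode yields $|N^{-1}\sum_i \veps_i\langle\calX_i,\Delta\rangle| \le \|E\|_{S^*}\|\Delta\|_S$ with $E = N^{-1}\sum_i \veps_i \calX_i$ and $\|E\|_{S^*}$ dominated by $\max_k \|E_{(k)}\|_\op$. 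Since each mode-$k$ unfolding of $B$ has rank at most $r$, the standard decomposability of the nuclear norm at a rank-$r$ matrix implies that any choice $\lambda \ge 2\|E\|_{S^*}$ confines $\Delta$ to a cone on which $\|\Delta\|_S \le c\sqrt{r}\,\|\Delta\|_F$. To bound $\|E\|_{S^*}$, I would condition on the design so that each $E_{(k)}$ becomes a Gaussian matrix, and then invoke standard matrix concentration to get $\|E_{(k)}\|_\op = O(\sigma\sqrt{(\mathrm{rows}_k+\mathrm{cols}_k)/N})$, so $\|E\|_{S^*} = O(\sigma D_1/\sqrt{N})$ because $D_1$ dominates $\max_k \sqrt{\mathrm{rows}_k+\mathrm{cols}_k}$; set $\lambda$ to this order.

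For the RSC step, the population calculation (using independence of $X_i,Y_t,t(i)$ and uniformity of $t(i)$) gives
$$\E\langle\calX_i,\Delta\rangle^2 = \frac{1}{T}\sum_{t=1}^T \trace(\Delta_t \Sigma_y \Delta_t^\T \Sigma) \ge \frac{\lambda_{\min}(\Sigma)\lambda_{\min}(\Sigma_y)}{T}\|\Delta\|_F^2,$$
where $\Delta_t = \Delta(\cdot,\cdot,t)$. Upgrading this to a uniform lower bound on the empirical quadratic form over the restricted cone, via an $\veps$-net/peeling argument on the low-rank cone, yields an empirical RSC constant of the same order, provided $N \gtrsim (\lambda_{\max}(\Sigma)/\lambda_{\min}(\Sigma_y))\,rD_1^2$ — which is exactly the theorem's sample-complexity condition. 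Combining the three ingredients gives $(\kappa/2)\|\Delta\|_F^2 \le 3\lambda c\sqrt{r}\,\|\Delta\|_F$ with $\kappa = \Theta(\lambda_{\min}(\Sigma)\lambda_{\min}(\Sigma_y)/T)$ and $\lambda = O(\sigma D_1/\sqrt{N})$, which rearranges to the stated bound.

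The main obstacle will be the uniform RSC lower bound. Because each $\calX_i$ is supported on a single task slice $t(i)$, the design is extremely sparse in the third mode and the per-sample signal is diluted by a factor of $1/T$, so pointwise concentration of $N^{-1}\sum_i\langle\calX_i,\Delta\rangle^2$ already requires $N\gtrsim T$; and uniform control over the restricted cone requires an $\veps$-net that simultaneously respects the anisotropy of $\Sigma,\Sigma_y$, the per-mode low-rank constraint, and the cross-mode overlap in the Schatten-1 norm. Once that step is in place, the remaining pieces slot into the TSHK11 template in a routine manner.
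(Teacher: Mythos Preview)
Your overall framework matches the paper's exactly: you invoke the Negahban--Wainwright/TSHK11 template, with the same three ingredients (basic inequality, dual-norm control of the noise to set $\lambda$, and RSC), and you correctly identify the population RSC constant $\lambda_{\min}(\Sigma)\lambda_{\min}(\Sigma_y)/T$ and that the RSC step is the crux. Your noise bound $\|E\|_{S^*}=O(\sigma D_1/\sqrt N)$ is essentially the content of the paper's Lemma~\ref{lem:bound-mean-norm}.

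The one substantive difference is how RSC is actually established. You propose to upgrade the population lower bound to a uniform empirical one ``via an $\veps$-net/peeling argument on the low-rank cone.'' The paper instead follows the NW11 route and uses the Gordon--Slepian Gaussian comparison inequality (Lemma~\ref{lem:gordon}): it writes $\|\Xf(\Delta)\|_2=\sup_{u\in S^{N-1}}\langle u,\Xf(\Delta)\rangle$, treats this as a Gaussian process in the $X_i$'s (conditionally on the $Y_t$'s and task assignments), compares it to a decoupled process $W_{u,\Delta}=T^{-1/2}(\langle g,u\rangle+\langle G,\Delta_M\rangle)$, and then peels. This is cleaner here because the design is conditionally Gaussian in $X_i$, so Gordon applies directly to the \emph{linear} process and one never has to do pointwise concentration of the quartic-in-Gaussians quantity $\langle\calX_i,\Delta\rangle^2$ followed by a union bound over a net. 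Your $\veps$-net route is plausible in principle, but controlling the covering number of the overlapped-Schatten cone and the sub-exponential tails of the per-sample terms simultaneously---while tracking the $\Sigma,\Sigma_y$ anisotropy---is exactly the kind of bookkeeping Gordon's inequality lets you sidestep; it is also how the $1/\sqrt{T}$ factor in the RSC constant falls out transparently in the paper. If you intend to complete the argument, I would recommend switching to the Gaussian-comparison approach rather than the $\veps$-net one.
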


The full proof is provided in the supplementary material. Here we provide an overview of the main steps of the proof. Our analysis builds upon the work by \citet{TSHK11}, who analyzed the performance of tensor regression with overlapped Schatten-1 norm. The main ingredient of the proof is to show that under certain assumptions \emph{restricted strong convexity} (RSC) holds. This property was introduced by \cite{NW11} in the context of several matrix estimation problems, and ensures that the loss function has sufficient curvature to ensure consistent recovery of the unknown parameter. \citet{TSHK11} proves that when the covariate tensors $\calX_i$ are normally distributed, RSC holds with a fixed constant. For our setting, the covariate tensors are defined in \Cref{eq:covariate-tensor} and are not necessarily distributed from a multivariate Gaussian distribution. However, we can generalize the original proof of \cite{NW11} to show that under Assumptions~\ref{asn:X} ($X_i$-s are normally distributed) and~\ref{asn:Y} (tasks are sampled uniformly at random), RSC still holds for our setting, but with constant $O(1/T)$. Then we show that the parameter $\lambda$ can be chosen to be a suitably large constant to get the error bounds of \Cref{thm:tensor_regression_bound}.

\paragraph{Tensor Decomposition Details and Guarantees:}
Having recovered the tensor $A \times_{3} \calZ$, we now aim to recover the factors $A^1,A^2$, and $\calZ A^3$. Since we do not have the exact tensor $A \times_{3} \calZ$, but rather an estimate of the tensor, we apply robust tensor decomposition method (step 2 of Algorithm~\ref{alg:reg_dec}) to recover the factors of  $A \times_3 \calZ$.  
For robust tensor decomposition method, we will apply the algorithm of \cite{AGM14}. It is in general impossible to recover the factors of a noisy tensor without making any assumptions. So we will make the following assumptions about the underlying tensor $A = [\Identity_r; A^1, A^2, A^3]$. We will write $d = \max\{d_1, d_2, T\}$.
\begin{enumerate}[label=({B}{{\arabic*}})]
    \item  The columns of the factors of $A$ are orthogonal i.e., $\langle A^1_i, A^1_j \rangle = \langle A^2_i, A^2_j \rangle = \langle A^3_i, A^3_j \rangle = 0$ for all $i \neq j$.
    \item The components have bounded $2\rightarrow p$ for some $p$ i.e. 
    $\exists p < 3$,
    $$ \max\set{\norm[1]{A^{1\T}}_{2 \rightarrow p}, \norm[1]{}1]{A^{2\T}}_{2 \rightarrow p}, \norm[1]{A^{3\T}}_{2 \rightarrow p}} \le 1 + o(1) . $$ 
    \item Rank is bounded i.e. $r = o(d)$.
\end{enumerate}

Additionally, recall the definition of $\calZ$, the matrix of unobserved features.
\begin{equation*}
  \calZ = \begin{bmatrix} Z_1 & \cdots & Z_T \end{bmatrix}^\T \in \R^{T\times d_3}
\end{equation*}
\begin{enumerate}[label=({Z}{{\arabic*}})]
\item $\frac{1}{d_3^{0.5+\gamma}}\Identity_{d_3} \preccurlyeq \calZ^\T \calZ \preccurlyeq \frac{1}{\sqrt{d_3}}\Identity_{d_3}$ for some $\gamma > 0$.
\item $\kappa(\calZ^\T \calZ) = \frac{\lambda_{\max}(\calZ^\T \calZ)}{\lambda_{\min}(\calZ^\T \calZ)} \le 1 + O(\sqrt{r/d})$.
\end{enumerate}

Although assumptions (Z1) and (Z2) might seem strong requirements on the matrix of unobserved features, they are usually satisfied when the unobserved task feature matrix is drawn from gaussian distribution. For example, if $Z_t \simiid \Normal(0, \nu \Identity_{d_3})$ then the assumptions hold for small enough $\nu$. 

\begin{lemma}[Informal Statement]\label{lem:tensor_decomp_guarantees}
Suppose tensor $A$ satisfies the assumptions (B1)-(B3),  the matrix of unobserved features $\calZ$ satisfies assumptions (Z1)-(Z2), and $N \ge \tilde{O}\left(\frac{\sigma^2 T^2  D_1^2 r }{\lambda_{\min}^2(\Sigma_y) \lambda^2_{\min}(\Sigma) } \right)$. Then the tensor $\hat{A} = [\Identity_r; \widehat{A^1},\widehat{A^2}, \widehat{\calZ A^3}]$ output by Algorithm~\ref{alg:reg_dec} satisfies
\begin{align*}
&\max\left\{ \norm[1]{\widehat{A^1} - A^1}_F, \norm[1]{\widehat{A^2} - A^2}_F\right\} \le \tilde{O}\left( \frac{\sigma T D_1 r}{\rho \sqrt{N}}\right), \ \ \norm[1]{\widehat{\calZ A^3} - \calZ A^3}_F \le  \tilde{O}\left( \frac{\sigma T D_1 r^{1.5}}{\rho \sqrt{ N}}\right)
\end{align*}
where $\rho = \sqrt{\lambda_{\min}(\calZ^\T \calZ) } \lambda_{\min}(\Sigma_y) \lambda_{\min}(\Sigma)$.
\end{lemma}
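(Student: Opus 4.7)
The plan is to chain Theorem~\ref{thm:tensor_regression_bound} with the robust tensor decomposition guarantee of \cite{AGM14} applied to the estimate $\BB$. Theorem~\ref{thm:tensor_regression_bound} gives a Frobenius-norm bound on $E := \BB - B$, where $B = A \times_3 \calZ$; since $\norm{E}_\op \le \norm{E}_F$, this directly yields the spectral-norm control required by \cite{AGM14}. Observe that the true decomposition can be written as
\[ B \;=\; \llbracket \Identity_r; A^1, A^2, \calZ A^3 \rrbracket \;=\; \llbracket G^{-1}; A^1, A^2, \widetilde{\calZ A^3} \rrbracket, \]
where $\widetilde{\calZ A^3}$ has unit-norm columns and $G$ is the diagonal matrix with $G_{ii} = 1/\norm{\calZ A^3_i}_2$. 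The Robust-Tensor-Decomposition subroutine targets the factors $A^1,A^2,\widetilde{\calZ A^3}$ together with the weights $G^{-1}$ (up to a permutation of components which we fix by convention); the estimated weights are then absorbed back into the third factor in the output $\BB^3 \WW$.

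\textbf{Verifying the AGM14 preconditions.} To invoke \cite{AGM14} on $B$, its three factors must satisfy the near-orthogonality, $2\to p$ norm, and rank conditions of that paper. Assumptions (B1)--(B3) supply these properties directly for $A^1$ and $A^2$. For the normalized third factor, the Gram matrix equals $D^{-1/2} (A^3)^\T \calZ^\T \calZ A^3 D^{-1/2}$ with $D_{ii} = \norm{\calZ A^3_i}_2^2$; using orthogonality of $A^3$ from (B1) together with (Z1)--(Z2), one checks both that the column norms $\norm{\calZ A^3_i}_2$ are all comparable and that this Gram matrix lies within $O(\sqrt{r/d})$ of $\Identity_r$ in operator norm. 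The $2\to p$ norm of $\widetilde{\calZ A^3}^\T$ inherits from $(A^3)^\T$ in (B2) up to the bounded condition number of $\calZ$, and (B3) gives the required rank hypothesis $r = o(d)$.

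\textbf{Per-column errors and aggregation.} Theorem~\ref{thm:tensor_regression_bound} yields $\norm{E}_F \le \epsilon_B := O\big(\sigma T D_1 \sqrt{r} / (\lambda_{\min}(\Sigma_y)\lambda_{\min}(\Sigma) \sqrt{N})\big)$ with high probability. The stated sample complexity $N \ge \tilde{\Omega}(\sigma^2 T^2 D_1^2 r / \rho^2)$ is exactly what is needed to force $\epsilon_B$ into the small-noise regime of \cite{AGM14}. Applying their theorem then produces per-column errors of order $\sqrt{r}\,\epsilon_B / \sqrt{\lambda_{\min}(\calZ^\T \calZ)}$, where $\sqrt{\lambda_{\min}(\calZ^\T \calZ)}$ lower-bounds the effective signal strength $\min_i \norm{\calZ A^3_i}_2$. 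Summing these squared errors across $r$ components gives the Frobenius bounds
\[ \norm{\BB^j - A^j}_F \;=\; \tilde{O}\!\left(\frac{\sigma T D_1\, r}{\rho \sqrt{N}}\right), \quad j \in \{1,2\}, \]
matching the stated rate.

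\textbf{Third factor and main obstacle.} For $\widehat{\calZ A^3} = \BB^3 \WW$, the identity
\[ \BB^3 \WW - \calZ A^3 \;=\; (\BB^3 - \widetilde{\calZ A^3})\,\WW \;+\; \widetilde{\calZ A^3}\,(\WW - G^{-1}) \]
splits the error into two pieces, each of which contributes an additional $\sqrt{r}$ factor over the $A^1,A^2$ bound: the first term picks up the diagonal rescaling by $\WW$, while the second involves the recovered-weight error from \cite{AGM14}, which is itself of order $\sqrt{r}\,\epsilon_B$. Combining them yields $\norm{\widehat{\calZ A^3} - \calZ A^3}_F = \tilde{O}(\sigma T D_1 r^{1.5} / (\rho \sqrt{N}))$ as claimed. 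The principal difficulty will be carefully propagating the condition-number factor $\sqrt{\lambda_{\min}(\calZ^\T \calZ)}$ and the $2\to p$ incoherence through the normalization step, and tracking the exact $r$-dependence when the estimated weights are reinserted into the third factor; although each step is individually routine, the bookkeeping is delicate because small losses compound into changes in the power of $r$ appearing in the final bound.
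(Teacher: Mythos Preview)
Your proposal is correct and follows essentially the same route as the paper: chain Theorem~\ref{thm:tensor_regression_bound} with the robust decomposition guarantee of \cite{AGM14}, verify the AGM14 preconditions for $B = A\times_3 \calZ = \llbracket G^{-1}; A^1,A^2,\calZ A^3 G\rrbracket$ from (B1)--(B3) and (Z1)--(Z2), and then recover $\calZ A^3$ by reinserting the estimated weights via the same add-and-subtract identity you wrote. One small correction: the \cite{AGM14} guarantee already delivers a Frobenius bound $\tilde{O}(\sqrt{r}\,\psi / w_r)$ directly, so no separate per-column-then-aggregate step is needed (your intermediate accounting is slightly off, though your final rates are right).
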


The proof shows that when the assumptions (B1)-(B3) and (Z1)-(Z3) are satisfied, we can apply robust tensor decomposition method to the tensor $A \times_3 \calZ$. Note that the bound for the third factor $\calZ A^3$ is worse by a factor of $\sqrt{r}$. This is because we recover an estimate of $\calZ A^3G$ for a diagonal matrix $G$ from tensor decomposition and then post-multiply this estimate by another diagonal matrix to obtain an estimate of $\calZ A^3$.

\subsubsection{Meta-Test}\label{sec:meta-test-1}
During the meta-test phase, we are given a new task (i.e. task $0$ with observed feature $Y_0 \in\R^{d_2}$, and hidden feature $Z_0 \in \R^{d_3}$), and our goal is to learn the unobserved parameter of this task with as few samples as possible. As is standard in the meta-learning literature, we get a new training sample from the new task, and our goal is to perform well on the test sample drawn from the new task. There are $N_2$ training samples from the new task, where the features $X_1,\ldots,X_{N_2}$ are drawn iid from a distribution $P$. We will assume each feature $X_i$ is mean-zero, has covariance matrix $\Sigma$ ($\E[X_i X_i^\T] = \Sigma$), and $\Sigma$-subgaussian i.e. $\E[\exp(v^\T X_i)] \le \exp\left(1/2\norm[0]{\Sigma^{1/2} v}^2_2\right)$. The observed responses on these $N_2$ points are given as
$$
R_i = A(X_i,Y_0,Z_0) + \veps_i
$$
where $\veps_i \sim_{\textrm{iid}} \Normal(0,1)$. Define $\calX_0$ to be the matrix corresponding to the features on the new task i.e.
\begin{equation*}
  \calX_0 = \begin{bmatrix} X_1 & \cdots & X_{N_2} \end{bmatrix}^T \in \R^{N_2\times d_1}
\end{equation*}

We aim to estimate $A^{3\T} Z_0$ by substituting the estimates of $A^1$ and $A^2$.
Notice that the response $R_i$ can also be expressed as
$$
R_i = (Y_0^\T A^2 \odot X_i^\T A^1) A^{3\T} Z_0 + \veps_i.
$$
Therefore, we can solve the following least square regression problem.
\begin{equation}\label{eq:meta-test-1}
\widehat{A^{3\T} Z_0} = \argmin_{\alpha_0 \in \R^r} \norm{R - (Y_0^\T \hat{A^2} \odot \calX_0 \hat{A^1}) \alpha_0}_2^2.
\end{equation}
If we write $\VV = (Y_0^\T \hat{A^2} \odot \calX_0 \hat{A^1})$, then the solution of Problem ~\ref{eq:meta-test-1} 
is given as
 $\widehat{A^{3\T} Z_0} = ( \VV^\T  \VV )^{-1} \VV^\T  R$. Now our prediction on a new test instance $X_0$ from the new task is given as $(Y_0^\T \hat{A^2} \odot X_0^\T \hat{A^1}) \widehat{A^{3\T} Z_0}$. With slight abuse of notation we will write this prediction as $\hat{A}(X_0, Y_0, \widehat{Z}_0)$. The next theorem bounds the mean squared error in the meta-test phase.


\begin{theorem}[Informal Statement]\label{thm:meta-test-1}
Suppose  $\max\{\norm[0]{\hat{A^1} - A^1}_F, \norm[0]{\hat{A^2} - A^2}_F \} \le \delta$. Additionally, $N_2 \ge \tilde{O}(r)$ and $\abs[0]{Y_0^\T \hat{A^2_i}} \ge \eta \norm[0]{Y_0}_2$ for all $i \in [r]$. Then for dimension-independent constants $B_1$, and $B_2$ we have
\begin{align*}&\E_{X_0}\left[ \left(R_0 - \hat{A}(X_0, Y_0, \widehat{Z}_0) \right)^2\right] = O\left(\sigma^2 + \frac{B_1}{\eta^2} r^2 \delta^2 + \frac{B_2}{\eta^2} \frac{r^2}{N_2} \right)
\end{align*}
with high probability.
\end{theorem}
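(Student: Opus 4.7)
Since $\veps_0$ has mean zero and is independent of everything else, $\E[(R_0 - \hat{A}(X_0,Y_0,\widehat Z_0))^2] = \sigma^2 + \E[(A(X_0,Y_0,Z_0) - \hat{A}(X_0,Y_0,\widehat Z_0))^2]$, so it suffices to bound the second term. Introduce $\alpha^\star := A^{3\T}Z_0$, $\hat\alpha := \widehat{A^{3\T}Z_0}$, and the Khatri--Rao feature vectors $v^\star(X) := (Y_0^\T A^2)\odot (X^\T A^1) \in \R^r$ and $\hat v(X) := (Y_0^\T\hat{A^2})\odot(X^\T\hat{A^1})$, so that $A(X_0,Y_0,Z_0) = \langle v^\star(X_0),\alpha^\star\rangle$ and $\hat{A}(X_0,Y_0,\widehat Z_0) = \langle\hat v(X_0),\hat\alpha\rangle$. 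The error then decomposes as
\begin{equation*}
A - \hat{A} \;=\; \underbrace{\langle v^\star(X_0) - \hat v(X_0),\,\alpha^\star\rangle}_{T_{\text{plug}}} \;+\; \underbrace{\langle\hat v(X_0),\,\alpha^\star - \hat\alpha\rangle}_{T_{\text{OLS}}},
\end{equation*}
which I bound piece by piece using $(a+b)^2\le 2a^2+2b^2$.

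For $T_{\text{plug}}$ I expand coordinate-wise, $(v^\star-\hat v)_i = (Y_0^\T A^2_i)\,X_0^\T(A^1_i - \hat{A^1_i}) + Y_0^\T(A^2_i - \hat{A^2_i})\cdot X_0^\T\hat{A^1_i}$, and use $\E_{X_0}[(X_0^\T u)^2] \le \lambda_{\max}(\Sigma)\|u\|^2$ with the column-norm control from (B1)--(B3) to obtain $\E_{X_0}\|v^\star(X_0) - \hat v(X_0)\|^2 \lesssim \|Y_0\|^2 \lambda_{\max}(\Sigma)\,\delta^2$. Cauchy--Schwarz together with $\|\alpha^\star\|^2 \le r\|A^3\|_{\op}^2\|Z_0\|^2$ then yields $\E_{X_0}[T_{\text{plug}}^2]\lesssim r\delta^2$; the further factor of $r/\eta^2$ reported in the theorem will be picked up when this plug-in residual is amplified through the OLS inversion below.

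The central step is controlling $T_{\text{OLS}}$. Stack the training covariates into $V^\star, \hat V \in \R^{N_2\times r}$ and noise into $\veps\in\R^{N_2}$, so $R = V^\star\alpha^\star + \veps$ and
\begin{equation*}
\hat\alpha - \alpha^\star \;=\; (\hat V^\T\hat V)^{-1}\hat V^\T(b + \veps),\qquad b_i := \langle v^\star(X_i)-\hat v(X_i),\alpha^\star\rangle.
\end{equation*}
The main technical hurdle is a spectral lower bound on $\hat V^\T\hat V$. Setting $D := \operatorname{diag}(Y_0^\T\hat{A^2})$ and stacking the $X_i$'s into $\calX_0\in\R^{N_2\times d_1}$, factor $\hat V = \calX_0\hat{A^1} D$, so $\hat V^\T\hat V = D\,(\hat{A^1})^\T\bigl(\sum_i X_iX_i^\T\bigr)\hat{A^1} D$. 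Applying sub-Gaussian sample-covariance concentration to the $r$-dimensional projections $(\hat{A^1})^\T X_i$, valid once $N_2=\tilde\Omega(r)$, yields $\sum_i X_iX_i^\T \succeq \tfrac12 N_2\Sigma$ on the image of $\hat{A^1}$. Combined with $\lambda_{\min}(D^2)\ge\eta^2\|Y_0\|^2$ (from the hypothesis $|Y_0^\T\hat{A^2_i}|\ge\eta\|Y_0\|$) and $\lambda_{\min}((\hat{A^1})^\T\Sigma\hat{A^1})\gtrsim\lambda_{\min}(\Sigma)$ (near-orthonormality of $\hat{A^1}$'s columns, inherited from (B1)), this gives $\lambda_{\min}(\hat V^\T\hat V)\gtrsim \eta^2 N_2$. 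The noise term then contributes $\E_\veps\|(\hat V^\T\hat V)^{-1}\hat V^\T\veps\|^2 = \sigma^2\,\trace((\hat V^\T\hat V)^{-1}) \lesssim \sigma^2 r/(\eta^2 N_2)$, and the bias term satisfies $\|(\hat V^\T\hat V)^{-1}\hat V^\T b\|^2 \le \|b\|^2/\lambda_{\min}(\hat V^\T\hat V) \lesssim r\delta^2/\eta^2$ (using $\E\|b\|^2 \le N_2\|\alpha^\star\|^2\E\|v^\star-\hat v\|^2 \lesssim N_2 r\delta^2$).

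To close, transfer $\|\alpha^\star - \hat\alpha\|^2$ back to the test point: $\E_{X_0}[T_{\text{OLS}}^2] = (\alpha^\star-\hat\alpha)^\T\E[\hat v(X_0)\hat v(X_0)^\T](\alpha^\star-\hat\alpha) \le \|\E[\hat v\hat v^\T]\|_{\op}\,\|\alpha^\star-\hat\alpha\|^2$, where a loose trace bound on the Khatri--Rao second moment $D(\hat{A^1})^\T\Sigma\hat{A^1}D$ supplies the additional factor of $r$. Summing with the $T_{\text{plug}}^2$ contribution yields the claimed $O\bigl(\sigma^2 + r^2\delta^2/\eta^2 + r^2/(\eta^2 N_2)\bigr)$. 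The principal obstacle will be the matrix-concentration argument for $\hat V^\T\hat V$ under the Khatri--Rao structure: one must condition on the meta-training data so that $\hat{A^1},\hat{A^2}$ are fixed while still controlling the sub-Gaussian moments of $(\hat{A^1})^\T X_i$ uniformly, and track the $\eta^{-2}$ dependence cleanly through the OLS inversion.
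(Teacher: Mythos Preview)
Your proposal is correct and follows essentially the same strategy as the paper: decompose the prediction error into a plug-in piece and an OLS piece, control the OLS error by factoring $\hat V = \calX_0\hat{A^1}D$ with $D=\operatorname{diag}(Y_0^\T\hat{A^2})$, invoking sub-Gaussian sample-covariance concentration on the $r$-dimensional projections to get $\lambda_{\min}(\hat V^\T\hat V)\gtrsim \eta^2\|Y_0\|^2\lambda_{\min}(\Sigma)N_2$, and then handling the noise via a trace/Hanson--Wright bound and the bias via $\|V^\star-\hat V\|_{\op}$.

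The one noteworthy difference is your choice of the ``mirror'' decomposition $\langle v^\star-\hat v,\alpha^\star\rangle+\langle\hat v,\alpha^\star-\hat\alpha\rangle$, whereas the paper uses $\langle \hat v-v^\star,\hat\alpha\rangle+\langle v^\star,\hat\alpha-\alpha^\star\rangle$. Your version multiplies the feature-perturbation term by the \emph{true} $\alpha^\star$, whose norm is immediate, while the paper must separately bound $\|\hat\alpha\|$ (it does so by re-running the OLS inversion). This makes your route marginally cleaner, and also explains why your $T_{\text{plug}}$ term comes out as $O(r\delta^2)$ without the $1/\eta^2$---it is simply dominated by the $r^2\delta^2/\eta^2$ arising from $T_{\text{OLS}}$, so your parenthetical about ``picking up'' the factor through the OLS inversion is unnecessary (and slightly misleading): the $T_{\text{plug}}$ contribution stands on its own and is just absorbed in the final $O(\cdot)$.
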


The proof of the theorem shows that the mean squared error can be bounded as $O\del[0]{ r \norm[0]{\widehat{A^{3\T} {Z_0}} - A^{3\T} Z_0}_2^2 } + O\del[0]{ \delta^2 \norm[0]{\widehat{A^{3\T} Z_0}}_2^2 }$. Then we write down the first term as a sum of bias and variance term and establish respective bounds of $O(r^2/N_2)$ and $O(r^2 \delta^2)$. Finally, we show that the $L_2$-norm of $\widehat{A^{3\T} Z_0}$ cannot be too large and is bounded by $O(r)$. Substituting these three bounds on the upper bound on the mean squared error gives us the desired result. Note that the theorem requires a lower bound on the inner product between the new task feature $Y_0$ and the columns of $\hat{A^2}$. This can be avoided with a slightly worse dependence on $r$. First, we can eliminate all columns $i$ such that $\abs[0]{Y_0^\T \hat{A^2_i}} \ge \eta \norm[0]{Y_0}_2$. If there are $r'$ such columns, we work with a tensor of rank $r-r'$ in the meta-test phase. The reduction in rank increases mean squared error by at most $O(r^2 \eta^2)$. Now if we choose $\eta = \sigma / r$ we get a bound of $O(B_1 r^4 \delta^2 + B_2 r^4/N_2)$ on the excess error.

This theorem implies that for a new task, the number of samples needed is $N_2 = O(r^2/\epsilon)$ if we want to achieve a test error of $\epsilon$ on the new task. If we were to run a least squares regression on the new task from scratch, the required number of samples would have been $O((d_1 + d_2)/\epsilon)$. As the CP-rank of the tensor $A$ can be smaller (often a constant) than the dimension of the unobserved features, transfer of the knowledge of the tensor $A$ provides a significant reduction in the number of samples on the new task. 

\section{Method-of-Moments Based Estimation}
In this section, we provide a new algorithm that estimates the underlying tensor $A$ and also has optimal dependence on the number of tasks ($T$) under some additional distributional assumptions. In particular, we will assume $X_i \simiid \Normal(0, \Identity_{d_1})$, $Y_t \simiid \Normal(0, \Identity_{d_2})$, and $Z_t \simiid \Normal(0, \Identity_{d_3})$. Our algorithm is based on repeated applications a method-of-moments based estimator proposed in \cite{TJJ20}, and we  briefly summarize that estimator.
Suppose the $i$-th response is given as $R_i = X_i^\T B \alpha_{t(i)} + \veps_i$ and each $X_i \simiid \Normal(0, I_{d_1})$, and $B \in \R^{d_1 \times r}$ has orthonormal columns. 
 Then it is possible to recover $B$ from the top $r$ singular values of the statistic $\frac{1}{N}\sum_{i=1}^N R_i^2 X_i X_i^\T $. 

\paragraph{Recovering $A^1$.} For our setting, the $i$-th response is given as $R_i = A(X_i, Y_{t(i)}, Z_{t(i)}) + \veps_i$. If we want to recover the first factor $A^1$ then we can rewrite the $i$-th response as 
\begin{align*}
R_i = X_i^\T A_{(1)} (Z_{t(i)} \otimes Y_{t(i)}) + \veps_i = X_i^\T A^{1} \underbrace{(A^{3} \odot A^{2})^\T (Z_{t(i)} \otimes Y_{t(i)})}_{:=\alpha_{t(i)}} + \veps_i.
\end{align*}
Since each $X_i$ is drawn from a standard normal distribution, we can recover $A^1$ from the top-$r$ singular values of the statistic $\frac{1}{N}\sum_{i=1}^N R_i^2 X_i X_i^\T$.

\paragraph{Recovering $A^2$.} We can recover $A^2$ through a similar method. We can rewrite the $i$-th response as
\begin{align*}
 R_i = Y_{t(i)}^\T A_{(2)} (Z_{t(i)} \otimes X_i) + \veps_i = Y_{t(i)}^\T A^{2} \underbrace{W (A^{3} \odot A^{1})^\T (Z_{t(i)} \otimes X_i)}_{:= \alpha_{t(i)}} + \veps_i
 \end{align*}
Since each $ Y_{t}$ is drawn from a standard normal distribution, we can recover $A^2$ from the top-$r$ singular values of the statistic $\frac{1}{N}\sum_{i=1}^N R_i^2 Y_{t(i)} Y_{t(i)}^\T$.


\begin{algorithm}[!ht]
\DontPrintSemicolon
\begin{minipage}{0.9\textwidth}
\KwInput{$(X_i,Y_{t(i)}, R_i)$ for $i=1,\ldots,N$.}
\begin{enumerate}
    \item $UDU^\T \leftarrow \textrm{top}-r \textrm{ SVD of } \frac{1}{N}\sum_{i=1}^N R_i^2 X_i X_i^\T$. Set $\hat{A}^1 = U$.
    \item $UDU^\T \leftarrow \textrm{top}-r \textrm{ SVD of } \frac{1}{N}\sum_{i=1}^N R_i^2 Y_{t(i)} Y_{t(i)}^\T$. Set $\hat{A}^2 = U$.
\end{enumerate}
\KwOutput{Return $\hat{A^1}$ and $\hat{A^2}$.}
\end{minipage}

\caption{Method-of-Moments Based Estimation\label{alg:mom_est}}
\end{algorithm}

\begin{theorem}\label{thm:method_of_moments}
Suppose $X_i \simiid \Normal(0, \Identity_{d_1})$, $Y_t \simiid \Normal(0, I_{d_2})$, and $Z_t \simiid \Normal(0, I_{d_3})$. Then the factors $\hat{A^1}$ and $\hat{A^2}$ returned by Algorithm~\ref{alg:mom_est} satisfies  the following guarantees
\begin{align*}
    \sin \theta (\hat{A^1}, A^1) \le O\left( \sqrt{\frac{d_1 r}{TN}} \right), \ \textrm{and}\ \sin \theta (\hat{A^2}, A^2) \le O\left(\sqrt{\frac{d_2 r}{N}} \right)
\end{align*}
with probability at least $1 - T \exp(-\Omega(\min\{d_1,d_2\}))$.
\end{theorem}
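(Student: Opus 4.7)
The plan is to recover the subspaces $A^1, A^2$ as the top-$r$ eigenspaces of population second-moment matrices and then control the sample perturbation via Davis-Kahan. First, I rewrite the response as $R_i = X_i^\T A^1 \alpha_{t(i)} + \veps_i$ with $\alpha_t := (A^3 \odot A^2)^\T (Z_t \otimes Y_t) \in \R^r$, and also as $R_i = Y_{t(i)}^\T A^2 \beta_i + \veps_i$ with $\beta_i := (A^3 \odot A^1)^\T (Z_{t(i)} \otimes X_i) \in \R^r$. Using the Khatri--Rao Gram identity $(A^k \odot A^\ell)^\T (A^k \odot A^\ell) = (A^{k\T} A^k) \circ (A^{\ell\T} A^\ell)$, combined with orthonormality of the columns of $A^1, A^2, A^3$ (implicit in the decomposition $A = \llbracket \Identity_r; A^1, A^2, A^3 \rrbracket$) and $\E[(Z \otimes Y)(Z \otimes Y)^\T] = \Identity_{d_2 d_3}$, one obtains $\E[\alpha \alpha^\T] = \E[\beta \beta^\T] = \Identity_r$.

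Applying the Gaussian moment identity $\E[(V^\T u)^2 V V^\T] = \|u\|_2^2 \Identity + 2 u u^\T$ for $V \sim \Normal(0, \Identity)$ to $V = X_i$ and $V = Y_{t(i)}$ (noting $\E[\|A^1 \alpha_t\|_2^2] = \trace(A^{1\T} A^1 \E[\alpha \alpha^\T]) = r$, and analogously for the other side), one gets
\[
M_1^* := \E[R_i^2 X_i X_i^\T] = (r + \sigma^2) \Identity_{d_1} + 2 A^1 A^{1\T}, \qquad M_2^* := \E[R_i^2 Y_{t(i)} Y_{t(i)}^\T] = (r + \sigma^2) \Identity_{d_2} + 2 A^2 A^{2\T}.
\]
Both have top-$r$ eigenspace equal to the column space of $A^k$ with a constant spectral gap of $2$, so by the sine-theta version of Davis--Kahan, $\sin\theta(\hat A^k, A^k) \le \|\hat M_k - M_k^*\|_{\op} / 2$. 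The problem therefore reduces to proving $\|\hat M_1 - M_1^*\|_{\op} \lesssim \sqrt{d_1 r/(TN)}$ and $\|\hat M_2 - M_2^*\|_{\op} \lesssim \sqrt{d_2 r / N}$.

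This concentration step is the main obstacle, because each summand $R_i^2 V_i V_i^\T$ is a degree-four polynomial in Gaussian variables and is only sub-exponential. I would handle it by conditioning on the task-level variables $\{(Y_t, Z_t)\}_{t \in [T]}$ and decomposing $\hat M_k - M_k^* = (\hat M_k - \E[\hat M_k \mid Y, Z]) + (\E[\hat M_k \mid Y, Z] - M_k^*)$, controlling the first term via matrix Bernstein for conditionally independent sub-exponential matrices (whose matrix-variance parameter can be controlled using $\|A^1 \alpha_t\|_2^2 \lesssim r$) and the second via Gaussian concentration of the $T$ independent task samples. The asymmetric $T$-dependence is the delicate point: for $k=1$, the task-level fluctuation $\E[\hat M_1 \mid Y, Z] - M_1^* = 2 A^1 (\Lambda - \Identity_r) A^{1\T}$ (plus an isotropic trace term, where $\Lambda := \tfrac{1}{N} \sum_i \alpha_{t(i)} \alpha_{t(i)}^\T$) lies entirely within the column space of $A^1$ and does not perturb its top-$r$ eigenspace, so applying Davis--Kahan against $\E[\hat M_1 \mid Y, Z]$ turns the task-level randomness into signal and leaves only the $N$-fold conditional $X$-randomness as genuine subspace noise, producing the $\sqrt{d_1 r/(TN)}$ rate. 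For $k=2$ the covariates $Y_{t(i)}$ take only $T$ distinct values so this cancellation is unavailable, yielding $\sqrt{d_2 r/N}$. The stated $1 - T\exp(-\Omega(\min\{d_1,d_2\}))$ failure probability then follows by union-bounding over $T$ Gaussian norm events (e.g.\ $\|Y_t\|_2, \|Z_t\|_2 \lesssim \sqrt{d_2 + d_3}$) alongside the matrix Bernstein tail.
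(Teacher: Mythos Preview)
Your high-level route---compute the population second moment, show its top-$r$ eigenspace is $\mathrm{col}(A^k)$ with a constant gap, then apply Davis--Kahan---is the same as the paper's, which simply packages that step into the black-box guarantee of \cite{TJJ20}: $\sin\theta(\hat B, B) \lesssim \sqrt{\kappa d r / (\nu N)}$ with $\nu = \sigma_r(\bar\Lambda)$ and $\bar\Lambda = \tfrac{1}{N}\sum_i \alpha_{t(i)}\alpha_{t(i)}^\T$. Your ``apply Davis--Kahan against $\E[\hat M_1 \mid Y,Z]$, since the task-level fluctuation lies in $\mathrm{col}(A^1)$'' is precisely the content of that black box: one conditions on $\{\alpha_t\}$, and the relevant eigengap becomes $2\nu$. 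The paper's remaining work is just to lower-bound $\nu$ (and upper-bound $\kappa$) via matrix Chernoff on $\bar\Lambda$.

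The genuine gap in your outline is the justification of the extra $1/T$ for $A^1$. After conditioning, the eigengap of $\E[\hat M_1 \mid Y,Z]$ is $2\lambda_r(\bar\Lambda)$, and the conditional deviation $\hat M_1 - \E[\hat M_1 \mid Y,Z]$ is a sum of $N$ conditionally independent sub-exponential matrices; matrix Bernstein with your own variance control $\|A^1\alpha_t\|_2^2 \lesssim r$ gives a deviation of order $\sqrt{d_1 / N}$ (times powers of $r$). To turn that into $\sqrt{d_1 r/(TN)}$ via Davis--Kahan you would need the conditional gap $2\lambda_r(\bar\Lambda)$ to be $\Omega(T)$, not $\Omega(1)$---yet by your own computation $\E[\alpha\alpha^\T]=\Identity_r$, so $\bar\Lambda$ concentrates near $\Identity_r$ and the gap is $\Theta(1)$. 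The paper does \emph{not} get the $1/T$ from a smaller deviation; it gets it by asserting $\lambda_r(\bar\Lambda) \gtrsim T$, writing $\Lambda = \tfrac{1}{T}(A^3\odot A^2)^\T(\calZ^\T\odot\calY^\T)(\calZ^\T\odot\calY^\T)^\T(A^3\odot A^2)$ and identifying the middle factor with $\calZ^\T\calZ \otimes \calY^\T\calY$, whose smallest eigenvalue is $\gtrsim T^2$. Whichever route you take, the extra $1/T$ can only come from a $\Theta(T)$ conditional eigengap, so that step---not the deviation bound---is what you are missing; and you should scrutinize that Kronecker identity carefully, since $\sum_t (Z_t Z_t^\T)\otimes(Y_t Y_t^\T)$ is not in general $(\sum_t Z_t Z_t^\T)\otimes(\sum_s Y_s Y_s^\T)$.
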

Once Algorithm~\ref{alg:mom_est} estimates $\hat{A^1}$ and $\hat{A^2}$, we  again estimate $A^{3\T} Z_0$ in the meta-test phase. We can show a meta-test theorem similar to \Cref{thm:meta-test-1}, and the details are provided in the appendix.

\section{Experiments}
\label{sec:experiments}

We first evaluate our tensor-based representation learning through a simulation setup. For this experiment, we generated data from a low-rank tensor of order-$3$. We chose a tensor of dimension $100 \times 50 \times 50$ and of CP-rank $10$. We generated a training dataset of $N=1000$ points and estimated the factors $A^1$ and $A^2$ using both the tensor regression (Algorithm~\ref{alg:reg_dec}) and the method of moments (Algorithm~\ref{alg:mom_est}). For the meta-test phase, we selected a new test task with observed feature $Y_0$ of dimension $50$ and unobserved feature $Z_0$ of dimension $50$. As described in \Cref{sec:meta-test-1}, we estimate $\widehat{A^{3\T} Z_0}$ by substituting the estimated factors from the meta-training step. 



We plot the meta-test error for various values of $N_2$, the number of samples available from the new task. As we increase $N_2$, test error for predicting outcome on a new test instance $X_0$ decreases significantly, as shown in \Cref{fig:synthetic}. We compare our method with the matrix-based representations for meta learning developed by \cite{TJJ20}. They assume that the response from a task $t$ with unobserved feature $Z_t \in \R^r$ and $i$-th feature $X_i$ is given as
\begin{equation*}
    R_i = X_i B Z_t + \veps_i
\end{equation*}
where matrix $B \in \R^{d\times r}$. Recall that, for our setting, each training instance is given as $(X_i, Y_{t(i)}, R_i)$. Since \cite{TJJ20} assume that there is no available side-information for the tasks, the most natural comparison would be to ignore the observable task features $Y_t$ and consider each input as $(X_i, R_i)$. So we consider two natural dimensions of the matrix $B$. First, we estimate a matrix of dimension $d_1 d_2 \times d_3$ where $X_i \otimes Y_{t(i)}$ is the $i$-th feature. Second, we estimate a matrix of dimension $(d_1 + d_2) \times d_3$ where $[X_i Y_{t(i)}]$ is the $i$-th input feature. 
We compare these two different types of matrix based methods with both tensor regression and method-of-moments based method. As \Cref{fig:synthetic} shows both tensor methods perform equally well, but they are significantly better than the matrix methods.

\begin{figure}
\centering
    \begin{minipage}[!b]{0.32\linewidth}
     \includegraphics[width=\linewidth]{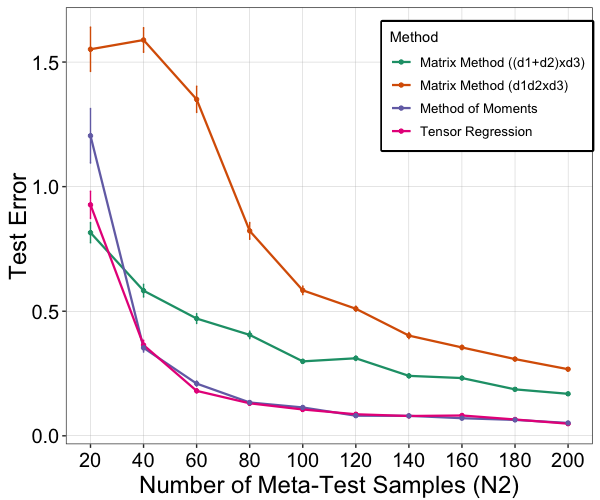}
     \subcaption{Synthetic Dataset \label{fig:synthetic}}
    \end{minipage}\hspace{0.01\linewidth}%
    \begin{minipage}[!b]{0.32\linewidth}
     \includegraphics[width=\linewidth]{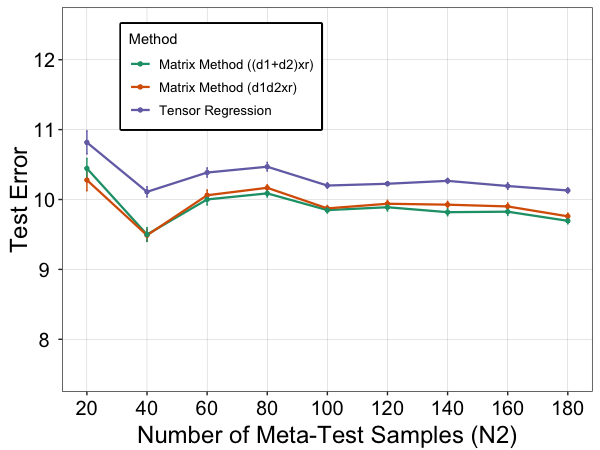}
     \subcaption{Schools Dataset \label{fig:school}}
    \end{minipage}
    \hspace{0.01\linewidth}%
    \begin{minipage}[!b]{0.32\linewidth}
     \includegraphics[width=\linewidth]{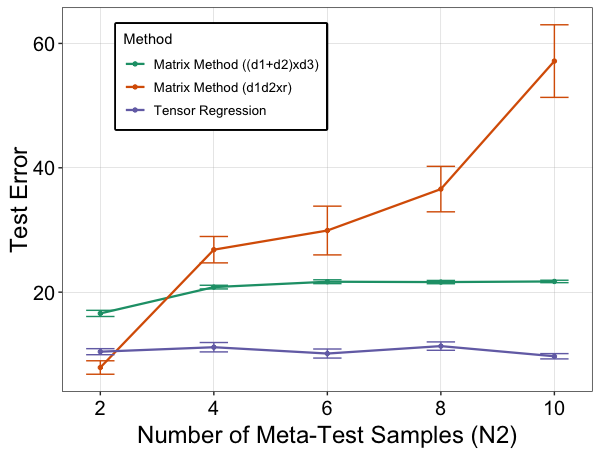}
     \subcaption{Lenk Dataset \label{fig:lenk}}
    \end{minipage}
    \caption{Test-error vs the number of samples from a new task ($N_2$) \label{fig:N2_vs_MSE}}
\end{figure}

We now consider two real-world datasets. Both the datasets were used in the context of \emph{conditional meta-learning} to show the benefits of task-specific side-information \cite{DPC20}.

\textbf{Schools Dataset} \cite{AEP08}.  This dataset consists of examination records from $T = 139$ schools (task). The number of samples per task ($n_t$) varied from $24$ to $251$. Each instance represents an individual student, and is represented by a feature of dimension $d_1 = 26$. The outcomes are their exam scores. As task specific feature of task $t$ we use $Y_t = \frac{1}{n_t} \sum_{i=1}^{n_t} \phi(x_i)$ where $\phi(x_i)$ is a vector of dimension $d_2=50$ constructed from a random Fourier feature map. This is built as follows. First sample $v$ from $\textrm{Unif}[0,2\pi]^{d_2}$. Then a matrix $U \in \R^{d_2 \times d_1}$ is sampled from $\Normal(0,\sigma^2 \Identity)$. Finally, we set 
$$
\phi(x_i) = \sqrt{\frac{2}{d_2}} \cos\left( Ux_i + v\right) \in \R^{d_2}.
$$

\textbf{Lenk Dataset} \cite{MPS16, LDGY96}. This is a computer survey data where  $T=180$ people (tasks) rated the likelihood of purchasing one of $20$ different personal computers. So there are $20$ different samples from each task. The input has dimension $d_1 = 13$ and represents different computers’ characteristics, while the output is an integer rating from $0$ to $10$. As task specific feature of a task $t$ we use $Y_t = \frac{1}{20} \sum_{i=1}^{20} \phi(z_i)$ where $\phi(z_i) = \textrm{vec}(x_i(R_i,1)^\T)$; the sum is over all $z_i$-s belonging to the task $t$. 

To construct the meta-training set, we sampled $50$ tasks uniformly at random and then sampled $n_t$ ($n_t = 20$ for Schools and $n_t =10$ for Lenk) responses from each task. Since we do not know the value of $r$, we also constructed a meta-evaluation set by selecting another set of $n_t$ samples from the selected tasks. The meta-evaluation set was used to select the best value of $r$ during meta-training phase. The meta-test set was constructed by selecting a fixed task and then gradually increasing the number of samples from that task. Figures~\ref{fig:school} and \ref{fig:lenk} respectively compare our method with two different types of matrix based representation learning for different values of $N_2$. We found that the tensor regression method performs better than the method-of-moments based estimator and only results for Algorithm~\ref{alg:reg_dec} are shown. Our method performs significantly better than the matrix based methods for Lenk. Although our method performs slightly worse on Schools, the test error increases by at most $5\%$. Overall, the performance on the synthetic dataset and two real-world datasets demonstrate the benefits of using tensor based representations for meta-learning.

\section{Conclusion and Open Questions}\label{sec:conclusion}
In this work, we develop a tensor-based model of shared representation for learning from a diverse set of tasks. The main difference with previous models on shared representations for meta-learning is that our model incorporates the observable side information of the tasks. We designed two methods to estimate the underlying tensor and compared them in terms of recovery guarantees, required assumptions on the tensor, and mean squared error on a new task. 

There are many interesting directions for future work. An interesting direction is to generalize our model and consider non-linear models of shared representations that incorporates the observable side-information of the tasks. Finally, we just leveraged the framework of order-$3$ tensor in this work, and it would be interesting to see if we can leverage higher order tensors for learning shared representations for meta-learning.


\printbibliography



\newpage

\appendix

\onecolumn

\section{Non-identifiability of General Model}
We first show that the general response model is not identifiable unless all the singular values are one.
\begin{lemma}
Consider the response model $R_i = A(X_i, Y_{t(i)}, Z_{t(i)}) + \veps_i$ specified in \eqref{eq:response-model}. Then the underlying tensor $A$ is not identifiable if the singular values are not all ones.
\end{lemma}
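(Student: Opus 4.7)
The plan is to exhibit, for any tensor $A$ whose singular values are not all one, an alternative tensor $\tilde{A}\neq A$ together with alternative latent task features $\{\tilde Z_t\}_{t=1}^{T}$ that generate the same joint distribution over the observable data $(X_i, Y_{t(i)}, R_i)$. The driving observation is that the singular values enter the response only through the product $\sigma_s \langle A^3_s, Z_t\rangle$, and because each $Z_t$ is latent it is free to absorb any rescaling of $\sigma$.

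First, I would fix any $A = \llbracket \sigma; A^1, A^2, A^3\rrbracket$ in unit-column CP form with $\sigma \neq \onevec_r$ and any latent features $Z_1,\ldots,Z_T\in\R^{d_3}$; these, together with the $X_i$, $Y_{t(i)}$, and noise $\veps_i$, generate the responses via \eqref{eq:training-samples}. For any scalar $c>0$ with $c\neq 1$, I would then define the alternate parameters
\begin{equation*}
\tilde{A} \;=\; c\cdot A \;=\; \llbracket c\sigma;\, A^1,\, A^2,\, A^3\rrbracket, \qquad \tilde Z_t \;=\; Z_t/c \text{ for every } t,
\end{equation*}
which is still in unit-column CP form but with singular values $c\sigma$, so $\tilde{A}\neq A$.

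Next, by multilinearity of tensor evaluation,
\begin{equation*}
\tilde{A}(X_i, Y_{t(i)}, \tilde Z_{t(i)}) \;=\; c\cdot A(X_i, Y_{t(i)}, Z_{t(i)}/c) \;=\; A(X_i, Y_{t(i)}, Z_{t(i)}),
\end{equation*}
so the conditional mean of $R_i$ coincides pointwise under $(\tilde{A}, \{\tilde Z_t\})$ and $(A, \{Z_t\})$. Since $\veps_i\sim \Normal(0,\sigma^2)$ is drawn from a fixed distribution independent of the parameters, the joint law of $(X_i, Y_{t(i)}, R_i)$ agrees across the two parameter settings, and the two are therefore observationally indistinguishable.

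Hence $A$ is not identifiable unless the scale is pinned down, and imposing $\sigma=\onevec_r$ as in the paper eliminates precisely this ambiguity. The only conceptual content is recognizing the scaling invariance $A\leftrightarrow cA$, $Z_t\leftrightarrow Z_t/c$, so there is really no obstacle to the proof; a routine check that $\tilde{A}$ remains in the stated parameter class (unit-column factors) is all that is needed beyond the one-line multilinearity calculation above.
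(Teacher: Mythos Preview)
Your argument is correct: the global rescaling $(\sigma,Z_t)\mapsto(c\sigma,Z_t/c)$ preserves every response and yields $\tilde A\neq A$, so the tensor is not identifiable once the singular values are allowed to be arbitrary. This establishes the lemma as stated.

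The paper, however, proves a sharper version. It reduces to the matrix model $R=x^\top B W z$ with diagonal $W$ and shows that even under the additional normalization $\norm{z}_2=1$, one can replace $W$ by a \emph{coordinate-wise} rescaling $W'$ (with $W'(1,1)=\lambda_1 W(1,1)$, $W'(2,2)=W(2,2)/\lambda_2$) and adjust $z$ accordingly so that $\norm{z'}_2=1$ is preserved and the response is unchanged. Your construction, by contrast, alters $\norm{Z_t}_2$ and would be ruled out by any such norm constraint; it also only moves all singular values by a common factor, so it does not by itself rule out the possibility of identifying the singular values up to a global scale. The paper's argument shows that the \emph{individual} singular values are unidentifiable even after pinning down the scale of the latent features, which is the substantive reason the paper fixes $\sigma=\onevec_r$ rather than merely fixing $\norm{\sigma}$ or $\norm{Z_t}$. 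Both proofs are valid for the lemma as written; the paper's buys robustness to the natural normalization of $Z_t$ and demonstrates a richer indeterminacy.
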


\begin{proof}
We show that the statement is also true for simpler matrix based linear representations for multitask learning. In that case, the responses are generated as $R = x^\top B z$ for an orthonormal matrix $B$. Now consider the model $R = x^\top B W z$ for a diagonal matrix $W$. Even if we assume that $\norm{z}_2 = 1$, given a choice of $W$ and $z$, one can choose $W' \neq W$ and $z' \neq z$ s.t. $x^\top B W z = x^\top B  W' z'$. A possible choice is $W'(1,1) = \lambda_1 W(1,1)$, $W'(2,2) = W(2,2)/\lambda_2$, $z'_1 = z_1/\lambda_1$, $z'_2 = \lambda_2 z_2$ and $z_1^2/z_2^2 = (\lambda_2^2 - 1) / (1 - 1/\lambda_1^2)$. Note that this choice guarantees that $\norm{z'}_2 = 1$.
\end{proof}

\begin{lemma}
Consider the response model $R_i = A(X_i, Y_{t(i)}, Z_{t(i)}) + \veps_i$ specified in \eqref{eq:response-model}. Then it is impossible to approximate $A^3$ either in terms of Frobenius norm or in terms of $\sin \theta$ distance. 
\end{lemma}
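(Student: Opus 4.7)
The plan is to exhibit two parameter configurations that induce identical distributions over the observables but whose third factors can be made arbitrarily far apart in the relevant metric, so that no estimator based on the observed data can be simultaneously close to both.

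First I would rewrite the response from \eqref{eq:response-model} as
\begin{equation*}
R \;=\; \sum_{s=1}^{r} \langle A^1_s, X\rangle\,\langle A^2_s, Y_t\rangle\, w_{t,s} + \veps,
\qquad
w_t := A^{3\T} Z_t \in \R^r,
\end{equation*}
which exhibits that the joint law of $\{(X_i, Y_{t(i)}, R_i)\}$ depends on the pair $(A^3, \{Z_t\}_t)$ only through the collection of $r$-vectors $\{w_t\}_{t=1}^T$. Thus the observable distribution is invariant under any reparameterization that preserves each $w_t$. Fix any full-column-rank target $\tilde A^3 \in \R^{d_3 \times r}$ (with orthonormal columns, if the all-singular-values-one normalization is imposed), and redefine the unobserved features as $\tilde Z_t := \tilde A^3\bigl(\tilde A^{3\T}\tilde A^3\bigr)^{-1} w_t$; then $\tilde A^{3\T}\tilde Z_t = w_t$, so the alternative triple $(A^1, A^2, \tilde A^3)$ with hidden features $\{\tilde Z_t\}$ induces the same observable distribution as the original. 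Any estimator of the third factor based on the observables must therefore output the same value on both models, and by the triangle inequality at least one of $A^3,\tilde A^3$ lies at distance at least $\tfrac{1}{2}\|A^3 - \tilde A^3\|_F$ (resp.\ $\tfrac{1}{2}\sin\theta(A^3,\tilde A^3)$) from that output.

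To convert this into the claimed impossibility I would choose $\tilde A^3$ adversarially. For the $\sin\theta$ metric, assuming $r \le d_3/2$, take $\tilde A^3$ to have orthonormal columns spanning a subspace orthogonal to the column span of $A^3$; then $\sin\theta(A^3,\tilde A^3) = 1$, a constant independent of $N$, $T$, and $N_2$, so no amount of data can drive the estimation error below $\tfrac12$. For the Frobenius metric under the normalization, the same choice yields $\|A^3 - \tilde A^3\|_F = \sqrt{2r}$; if the unit-column-norm normalization is dropped, scaling $\tilde A^3$ by an arbitrary constant (and rescaling $\tilde Z_t$ inversely to keep $w_t$ fixed) makes the Frobenius gap unbounded.

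The main obstacle is merely checking that the induced joint law $\tilde Q$ on $(Y_t, \tilde Z_t)$ is a valid instance of the data-generating process. Since the model places no distributional restriction on $Z_t$ beyond being jointly supported with $Y_t$ in $\R^{d_2}\times\R^{d_3}$, the deterministic reassignment $\tilde Z_t = \tilde A^3(\tilde A^{3\T}\tilde A^3)^{-1} A^{3\T}Z_t$ defines a bona fide alternative distribution, and the two configurations are genuine indistinguishable instances of the model. The rest of the argument is bookkeeping about which of the two induced $\tilde Q$'s happens to be the true one; whichever it is, the estimator errs by the stated amount.
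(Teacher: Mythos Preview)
Your argument is correct and is in fact cleaner and more general than the paper's. The paper proceeds by building one concrete counterexample: it fixes $d_1=d_2=d_3=d$, $r=d/2$, sets $A^1=A^2=A^3=\begin{bmatrix}\Identity_r\\0\end{bmatrix}$, and then writes down an explicit bidiagonal alternative $B^3$ together with specific $Y,Z_1,Z_2$ so that $A(X,Y,Z_1)=B(X,Y,Z_2)$ for every $X$; it then computes $\|A^3-B^3\|_F=O(r)$ and $\sin\theta(A^3,B^3)=1/\sqrt{2}$ by hand. You instead isolate the structural reason for non-identifiability: the observables depend on $(A^3,\{Z_t\})$ only through $w_t=A^{3\T}Z_t$, so any full-rank $\tilde A^3$ (with the same column normalization) together with $\tilde Z_t=\tilde A^3(\tilde A^{3\T}\tilde A^3)^{-1}w_t$ yields an indistinguishable model. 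Choosing $\tilde A^3$ with column span orthogonal to that of $A^3$ then gives $\sin\theta=1$ and $\|A^3-\tilde A^3\|_F=\sqrt{2r}$, which is actually sharper than the paper's explicit numbers.

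What each approach buys: the paper's hand-built example is self-contained and requires no abstraction, and it incidentally demonstrates non-identifiability between factors whose column spaces overlap (their $B^3$ is not orthogonal to $A^3$). Your reparameterization argument is dimension-agnostic (it only needs $r\le d_3/2$ for the orthogonal choice), immediately extends to any number of tasks, and makes transparent that the obstruction is an invariance of the model rather than a coincidence of one example. Your check that the pushed-forward $(Y_t,\tilde Z_t)$ is still an i.i.d.\ sample from some $\tilde Q$ (because $\tilde Z_t$ is a fixed linear image of $Z_t$) is the right thing to verify and suffices under the paper's assumptions on $Q$.
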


\begin{proof}
We construct an example where $d_1 = d_2 = d_3 = d$ and rank $r = d/2$. First consider the tensor $A = \llbracket \Identity_r; A^1, A^2, A^3 \rrbracket$ where $A^1 = A^2 = A^3 = \left[\begin{array}{c}
     \Identity_{r \times r}  \\
     0_{(d-r) \times r} 
\end{array} \right]$. Suppose the observed feature vector $Y = \left(\frac{1}{\sqrt{d}},\frac{1}{\sqrt{d}},\ldots, \frac{1}{\sqrt{d}} \right)$ and the unobserved feature vector $Z_1 = \left(\frac{1}{\sqrt{r}},\frac{1}{\sqrt{r}},\ldots, \frac{1}{\sqrt{r}}, 0, \ldots, 0\right)$. Then for any feature vector $X$ the expected response on this task is given as
\[
R = A(X, Y, Z_1) = \sum_{i=1}^r X(i) Y(i) Z_1(i) = \frac{1}{\sqrt{dr}} \sum_{i=1}^r X(i) = \frac{\sqrt{2}}{d} \sum_{i=1}^r X(i)
\]
where the last equality uses $r = d/2$. We now consider a new tensor $B = \llbracket \Identity_r; A^1, A^2, B^3\rrbracket$. The first two factors of $B$ are the same as the first two factors of $A$, but the third factor is different. Let $C^3$ be a bidiagonal matrix of dimension $(r+1)\times r$ with the leading diagonal and the diagonal entries just below the leading diagonal entries consisting of all ones. 
\[
C^3 = \begin{bmatrix}
\frac{1}{\sqrt{2}} & 0 & 0 & \dots & 0 \\
\frac{1}{\sqrt{2}} & \frac{1}{\sqrt{2}} & 0 & \dots & 0 \\
0 & \frac{1}{\sqrt{2}} & \frac{1}{\sqrt{2}} & \dots & 0 \\
\vdots & \vdots & \vdots &\ddots & \vdots \\
0 & 0 & 0 & \dots & \frac{1}{\sqrt{2}}
\end{bmatrix}
\]
Then $B^3 = \left[\begin{array}{c}
     C^3  \\
     0_{(r-1) \times r} 
\end{array} \right]$. The observed task feature $Y$ remains as it was but the new unobserved task feature is given as $Z_2 = \left(\frac{1}{\sqrt{d}},\frac{1}{\sqrt{d}},\ldots, \frac{1}{\sqrt{d}} \right)$. Then it can be checked that the new responses are given as
\[
R = B(X, Y, Z_2) = \sum_{i=1}^r X(i) \frac{1}{\sqrt{d}}\left(\frac{1}{\sqrt{2d}} + \frac{1}{\sqrt{2d}} \right) = \frac{\sqrt{2}}{d} \sum_{i=1}^r X(i)
\]
Therefore, we have two instances where the first two factors of the underlying tensors ($A^1$ and $A^2$) and the observable task feature ($Y$) are the same, but different choices of the third factor and hidden feature vector give the same response. Moreover, for the given choices of $A^3$ and $B^3$ it can be easily verified that $\norm{A^3 - B^3}_F = O(r)$ and $\sin \theta (A^3, B^3) = \norm{{A^3}^\top_\perp  B^3}_\op = \frac{1}{\sqrt{2}} = \sin (\pi/4)$. Therefore, even if we exactly know the factors $A^1$ and $A^2$, it is impossible to approximate $A^3$ either in terms of Frobenius norm or in terms of $\sin \theta$ distance.
\end{proof}
\section{Proof of Theorem \ref{thm:tensor_regression_bound} }

Our analysis builds upon the work by \citet{TSHK11}, who analyzed the performance of tensor regression with overlapped Schatten-1 norm. Recall the definition of the term $D_1 = \sqrt{d_1} +\sqrt{d_2} + \sqrt{T} +\sqrt{d_1d_2} + \sqrt{d_1 T} +\sqrt{d_2 T}$. \citet{TSHK11} showed that when (1) the true tensor $B$ has multi-way rank bounded by $r$, i.e. $\max\set{\rank(B_{(1)}), \rank(B_{(2)}), \rank(B_{(3)}} \le r$, (2) the number of samples $N \ge c_1 r D_1^2$,and (3) the covariate tensors $X_i$ are drawn iid from standard Gaussian distribution, then choosing $\lambda \ge c_2 \frac{\sigma D_1}{\sqrt{N}}$ guarantees the following:
\begin{equation}\label{eq:tensor-regression-result}
    \norm{B - \hat{B}}_F \le O\left(\frac{\sigma \sqrt{r} D_1}{\sqrt{N}} \right) 
\end{equation}
with high probability. 
In order to state the main ideas behind the proof and how they can be adapted for our setting, we  introduce the following notations. 
\begin{itemize}
    \item $\Xf : \R^{d_1 \times d_2 \times T} \rightarrow \R^N$ defined as $\Xf(W)_i = \left \langle \calX_i, W \right \rangle$.
    \item Adjoint operator $\Xf^*: \R^N \rightarrow \R^{d_1 \times d_2 \times T}$ defined as $\Xf(\overrightarrow{\veps}) = \sum_{i=1}^N \veps_i \calX_i$.
    \item Given a tensor $\Delta \in \R^{d_1 \times d_2 \times T}$ write its $k$-th mode as $\Delta_{(k)}$ as $\Delta_{(k)} = \Delta'_{(k)} + \Delta^{''}_{(k)}$ where the row and column space of $\Delta^{'}_{(k)}$ are orthogonal to the row and column spaces of $B_{(k)}$ respectively. 
    \item A constraint set $\calC = \set{\Delta \in \R^{d_1 \times d_2 \times T}: (1) \textrm{rank}(\Delta'_{(k)}) \le 2r \ \forall k \ \textrm{and} \ (2) \sum_k \norm{\Delta''_{(k)}}_{\star} \le 3 \sum_k \norm{\Delta'_{(k)}}_{\star}}$.
\end{itemize}

\begin{definition}[Restricted Strong Convexity]
There exists a constant $\kappa(\Xf)$ such that for all tensors in $\Delta \in \calC$, we have
\begin{equation*}
    \frac{\norm{\Xf(\Delta)}_2^2 }{N} \ge \kappa(\Xf) \norm{\Delta}_F^2.
\end{equation*}
\end{definition}

With this definition, \citet{TSHK11} proves the guarantee in eq. \ref{eq:tensor-regression-result} in three steps. 
\begin{enumerate}
    \item If the restricted strong convexity is satisfied with a constant $\kappa(\Xf)$ and  $\lambda$ is chosen to be at least $\frac{2}{N} \norm{\Xf^*(\overrightarrow{\veps})}_{\mean}$\footnote{$\norm{\cdot}_{\mean}$ is the dual norm of $\norm{\cdot}_S$ and is defined as $\norm{A}_{\mean} = 1/3 \sum_{k=1}^3 \norm{W_{(k)} }_{\textrm{op}}$}, then we have the following guarantee:
\begin{equation}\label{eq:tensor-regression-tjj}
\norm{B - \hat{B}}_F \le O\left(\frac{\lambda \sqrt{r} }{\kappa(\Xf) } \right).
\end{equation}
\item Gaussian design (i.e. $\calX_i \sim \Normal(0,I_{d_1\times d_2\times T})$ satisfies restricted strong convexity with constant $\kappa(\Xf) = O(1)$.
\item Additionally, Gaussian design satisfies $\norm{\Xf^*(\overrightarrow{\veps})}_{\mean} = O(\sigma D_1 \sqrt{N})$ with high probability.
\end{enumerate}

 We now carry out these steps for our setting. First, lemma \ref{lem:rsc-gaussian-design} proves that our setting satisfies \emph{restricted strong convexity} with high probability.
 As a result of this lemma, we see that our setting satisfies  restricted strong convexity with constant $\kappa(\Xf) = \frac{\lambda_{\min}(\Sigma_y) \lambda_{\max}(\Sigma) }{36 T}.$ Compared to \cite{TSHK11}, we don't get a constant independent of the number of tasks $T$ and it gets worse with increasing $T$. 
The constant is $O(1/T)$ because of uniform sampling, where each individual samples one task uniformly at random out of $T$ tasks. For other assignment scheme, the constant could be adjusted appropriately.

Recall, that we need to choose $\lambda > \frac{2}{N} \norm{\Xf^*(\overrightarrow{\veps})}_{\mean}$. Lemma ~\ref{lem:bound-mean-norm} lemma provides a  lower bound of $O(\sigma D_1 / \sqrt{N})$ on $\norm{\Xf^*(\overrightarrow{\veps})}_{\mean}$.
Now we substitute, $\lambda = O\left(\frac{\sigma D_1}{\sqrt{N}}\right)$ and $\kappa(\Xf) = \left(\frac{\lambda_{\min}(\Sigma_y) \lambda_{\max}(\Sigma) }{ T}\right)$ in equation \ref{eq:tensor-regression-tjj} to get the main result for our setting. 
If we fix $d_1$ and $d_2$, then the bound scales as $\frac{T^{3/2}}{\sqrt{N}}$. This is worse by a factor of $\sqrt{T}$ compared to the result of \cite{TSHK11}. Because of uniform sampling the number of effective samples is $\sqrt{N/T}$, and one should expect a bound of $\frac{\sqrt{T}}{\sqrt{N/T}} = \frac{T}{\sqrt{N}}$.

\begin{lemma}\label{lem:rsc-gaussian-design}
Suppose $X_1,\ldots,X_N \sim_{\textrm{iid}} \Normal(0,\Sigma)$, $Y_1, \ldots, Y_T \simiid \Normal(0,\Sigma_y)$, and $t(i) \sim \text{Unif}\set{1,\ldots,T}$ for each $i$. If $N \ge O(r D_1^2 \lambda_{\max}(\Sigma) / \lambda_{\min}(\Sigma_y))$, then for any $\Delta \in \calC$, the following holds
$$
\frac{\norm{\Xf(\Delta)}_2}{\sqrt{N}} \ge \frac{\sqrt{\lambda_{\min}(\Sigma_y)\lambda_{\min}(\Sigma)} }{6\sqrt{T}} \norm{\Delta}_F
$$
with probability at least $1 - e^{-\Omega(N/T)}$.
\end{lemma}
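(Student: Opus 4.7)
The plan is to adapt the restricted strong convexity argument of Tomioka-Suzuki (itself built on Negahban-Wainwright) to our non-isotropic design, for which (i) $\calX_i$ has only the single nonzero mode-$3$ slice $X_i Y_{t(i)}^\T$ at position $t(i)$, and (ii) that index $t(i)$ is uniform on $[T]$. First I would compute the population quantity. Writing $\Delta^{(t)} := \Delta_{:,:,t}$, we have $\ip{\calX_i,\Delta} = X_i^\T\Delta^{(t(i))}Y_{t(i)}$, and independence of $X_i$, $Y_{t(i)}$, $t(i)$ gives
$$\E\bigl[\ip{\calX_i,\Delta}^2\bigr] = \frac{1}{T}\sum_{t=1}^T \trace\bigl(\Sigma\,\Delta^{(t)}\,\Sigma_y\,\Delta^{(t)\T}\bigr) \ge \frac{\lambda_{\min}(\Sigma)\lambda_{\min}(\Sigma_y)}{T}\norm{\Delta}_F^2.$$
This is the population lower bound on $\E\norm{\Xf(\Delta)}_2^2/N$; what remains is to show that $\norm{\Xf(\Delta)}_2^2/N$ stays close to this expectation uniformly over $\calC$.

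Next I would exploit the cone structure of $\calC$ to convert the problem to a uniform bound on a low-rank-like set. Using $\norm{\Delta'_{(k)}}_\star \le \sqrt{2r}\norm{\Delta'_{(k)}}_F \le \sqrt{2r}\norm{\Delta}_F$ together with the side constraint $\sum_k\norm{\Delta''_{(k)}}_\star\le 3\sum_k\norm{\Delta'_{(k)}}_\star$, every $\Delta\in\calC$ satisfies $\sum_{k=1}^3 \norm{\Delta_{(k)}}_\star \le 12\sqrt{2r}\,\norm{\Delta}_F$. It therefore suffices to prove uniform concentration on $\mathcal{K} := \{\Delta : \norm{\Delta}_F = 1,\ \sum_k\norm{\Delta_{(k)}}_\star \le 12\sqrt{2r}\}$, which behaves like a convex hull of tensors of effective rank $O(r)$.

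Third, I would handle the uniform deviation via symmetrization and the Hölder duality between the overlapped Schatten-$1$ norm and its dual mean norm. After truncating $\norm{X_i}_2 \lesssim \sqrt{d_1\lambda_{\max}(\Sigma)}$ and $\norm{Y_t}_2 \lesssim \sqrt{d_2\lambda_{\max}(\Sigma_y)}$ on an event of probability $1-e^{-\Omega(d_1+d_2)}$ so that each $\ip{\calX_i,\Delta}$ is sub-exponential, symmetrization and Hölder reduce the task to controlling $\E\norm{\Xf^*(g)}_{\mean}/N$ for a standard Gaussian $g\in\R^N$. The adjoint $\Xf^*(g) = \sum_i g_i\calX_i$ has $t$-th slice $\sum_{i:\,t(i)=t}g_i X_i Y_t^\T$, and each mode-$k$ unfolding is a structured Gaussian matrix; spectral bounds applied slice-by-slice, combined with the effective count $N/T$ of samples per slice, yield $\E\norm{\Xf^*(g)}_{\mean} \lesssim D_1\sqrt{N/T}\cdot\sqrt{\lambda_{\max}(\Sigma)\lambda_{\max}(\Sigma_y)}$. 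A standard peeling argument along the radius $\norm{\Delta}_F$ then gives a uniform deviation of order $\sqrt{r\lambda_{\max}(\Sigma)\lambda_{\max}(\Sigma_y)/T}\cdot D_1/\sqrt{N}$.

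Combining the three steps, whenever $N \gtrsim rD_1^2\lambda_{\max}(\Sigma)/\lambda_{\min}(\Sigma_y)$ the fluctuation is at most half of the expectation, so $\norm{\Xf(\Delta)}_2^2/N \ge \frac{\lambda_{\min}(\Sigma)\lambda_{\min}(\Sigma_y)}{36T}\norm{\Delta}_F^2$ holds uniformly over $\mathcal{K}$ (and hence over $\calC$) with probability $1 - e^{-\Omega(N/T)}$; taking square roots yields the claim. The main technical obstacle will be Step 3: the original Tomioka-Suzuki analysis assumes isotropic Gaussian $\calX_i$, whereas here only one mode-$3$ slice is active per sample, so recovering the correct $D_1/\sqrt{T}$ scaling in the dual-norm bound requires carefully tracking both the Kronecker covariance $\Sigma\otimes\Sigma_y$ inside each active slice and the $1/T$-thinning induced by uniform task sampling. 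These two effects are precisely what produce the $1/T$ factor in the RSC constant and the $N/T$ exponent in the tail probability.
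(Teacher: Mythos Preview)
Your overall strategy---compute the population second moment and then prove a uniform deviation bound over the cone $\calC$ via symmetrization---is a legitimate route in principle, but Step~3 as written has a real gap, and it is precisely the gap that the paper's (Negahban--Wainwright/Tomioka--Suzuki style) argument is engineered to avoid.

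The issue is the passage from the symmetrized empirical process to $\E\norm{\Xf^*(g)}_{\mean}$. Symmetrization on the \emph{squared} process $\frac{1}{N}\sum_i\ip{\calX_i,\Delta}^2$ produces the quadratic Rademacher process $\frac{1}{N}\sum_i\epsilon_i\ip{\calX_i,\Delta}^2$, which is not $\ip{\Xf^*(\epsilon),\Delta}$. To get down to a linear functional of $\Delta$ (so that H\"older against the mean norm applies) you need a Ledoux--Talagrand contraction step, and that step costs a factor $L=\sup_{i,\,\Delta\in\mathcal{K}}|\ip{\calX_i,\Delta}|$. After your truncation this $L$ is of order $\sqrt{d_1d_2\,\lambda_{\max}(\Sigma)\lambda_{\max}(\Sigma_y)}$, which would enter the deviation bound multiplicatively and wreck the sample-complexity claim. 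Your write-up silently identifies the symmetrized quadratic process with the linear one; that identification is exactly what fails. Separately, even accepting your claimed deviation rate $\sqrt{r/T}\,D_1/\sqrt{N}$, matching it against the population level $\lambda_{\min}(\Sigma)\lambda_{\min}(\Sigma_y)/T$ forces $N\gtrsim rTD_1^2$, not $N\gtrsim rD_1^2$, so the final arithmetic in Step~4 does not close either.

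The paper sidesteps both problems by never squaring. It works with $\norm{\Xf(\Delta)}_2=\sup_{u\in S^{N-1}}\ip{u,\Xf(\Delta)}$ directly, views $(Z_{u,\Delta})$ as a Gaussian process in the $X_i$'s (conditioning on $Y_t$ and $t(i)$), and compares it via Gordon's min--max inequality to a decoupled process $W_{u,\Delta}=\frac{1}{\sqrt{T}}(\ip{g,u}+\ip{G,\Delta_M})$, where $\Delta_M$ is $\Delta$ multiplied by a square root of $\Sigma_y$ along the second mode. This yields in one shot
\[
\frac{\E\inf_{\Delta}\norm{\Xf(\Delta)}_2}{\sqrt{N}}\ \ge\ \frac{1}{2\sqrt{T}}-\frac{D_1}{3\sqrt{TN}}\norm{\Delta_M}_S,
\]
with no contraction factor, and Lipschitz concentration then gives the high-probability statement. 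The non-identity $\Sigma_y$ is absorbed into $\Delta_M$ (with $\norm{\Delta_M}_F\ge\lambda_{\min}(\Sigma_y)^{1/2}\norm{\Delta}_F$ and $\norm{\Delta_M}_S=\norm{\Delta}_S$), and general $\Sigma$ is handled at the end by the change of variables $T_\Sigma(\Delta)_{(1)}=\Sigma^{1/2}\Delta_{(1)}$. If you want to keep your population-plus-fluctuation viewpoint, you would need either a small-ball/Mendelson argument or a one-sided lower isometry that bypasses contraction; the bare symmetrization-plus-H\"older route does not deliver the stated constants.
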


\begin{proof}
We first assume $X_1,\ldots,X_N \simiid \Normal(0,\Identity)$ and derive our result. We will then see how a standard trick handles the case of general covariance matrix.

Since $\Sigma_y$ is  a positive-definite matrix, we can right its eigen-decomposition as $\Sigma_y = U^\top D U$ where $U \in \in \R^{d_2 \times d_2}$ is an orthonormal matrix. This implies that there exists a matrix $M = UD^{1/2}$ such that $\Sigma_y = M^\top M$. Moreover the columns of $M$ form an orthogonal basis of $\R^{d_2}$ and $L_2$ norm of any column of $M$ is at least $\lambda^{1/2}_{\min}(\Sigma_y)$. 
Given a tensor $\Delta \in \calC$ let us define a new tensor $\Delta_M \in \R^{d_1\times d_2 \times T}$ defined as $\Delta_M(a,b,t) =\Delta_{a:t}^\top M_b$. We first prove the following result.
\begin{align}\label{eq:temp-result}
    \frac{\norm{\Xf(\Delta)}_2}{\sqrt{N}} \ge \frac{\norm{\Delta_M}_F}{4\sqrt{T}} - \frac{D_1}{3\sqrt{TN}} \norm{\Delta_M}_{S} 
\end{align}

 We can assume that $\norm{\Delta_M}_F = 1$. Otherwise, we construct a new tensor $\tilde{\Delta} = {\Delta} / {\norm{\Delta_M}_F}$, and the new tensor has $\norm{\tilde{\Delta}_M}_S = 1/3 \sum_k \norm{\tilde{\Delta}_{M(k)}}_\star = 1/(3\norm{\Delta_M}_F) \sum_k \norm{{\Delta}_{(k)}}_\star = \norm{\Delta}_S / \norm{\Delta_M}_F$, and the claim is valid upto rescaling by ${\norm{\Delta}_F}$. We now proceed similar to the proof of proposition 1 in \cite{NW11}. First, by a peeling argument very similar to the proof of proposition 1 in \cite{NW11}, it is enough to consider the case $\norm{\Delta_M}_S \le t$ and show the following:
 $$
 \frac{\norm{\Xf(\Delta)}_2}{\sqrt{N}} \ge \frac{1}{4\sqrt{T}} - \frac{tD_1}{\sqrt{TN}} 
 $$
 for all tensors $\Delta$ in the set $\calR(t) = \set{\Gamma  \in \R^{d_1 \times d_2 \times T}: \norm{\Gamma_M}_F = 1 \ \textrm{and} \ \norm{\Gamma_M}_S \le t}$. Let $S^{N-1} = \set{u \in \R^N: \norm{u}_2 = 1}$ and for all $u \in S^{N-1}$ we define $Z_{u,\Delta} = \left \langle u, \Xf(\Delta) \right \rangle$ for any $\Delta \in \R^{d_1 \times d_2 \times T}$. Note that,
 $$
 Z_{u,\Delta} = \sum_{i=1}^N u_i \left \langle \calX_i, \Delta \right \rangle = \sum_{i=1}^N u_i \left \langle X_iY_{t(i)}^\top, \Delta_{::t(i)} \right \rangle.
 $$
 Moreover,
 \begin{align*}
     \E\left[(Z_{u,\Delta} - Z_{u',\Delta'})^2\right] &= \frac{1}{T}  \sum_{i,a,t} \E\left[ \left\{ \sum_b Y_t(b) (u_i \Delta(a,b,t) - u_i' \Delta'(a,b,t))\right\}^2|Y_t\right] \\
     &= \frac{1}{T} \sum_{i,a,t} \sum_b \Sigma_y(b,b)(u_i \Delta(a,b,t) - u_i' \Delta'(a,b,t))^2  \\ &+ \frac{1}{T}\sum_{i,a,t} \sum_{b\neq b'} \Sigma_y(b,b') (u_i \Delta(a,b,t) - u_i' \Delta'(a,b,t))^2 (u_i \Delta(a,b',t) - u_i' \Delta'(a,b',t))^2  \\
 \end{align*}
 
 We now use the eigen-decomposition of $\Sigma_y = M^\top M$ to get the following result.
 \begin{align*}
     \E\left[(Z_{u,\Delta} - Z_{u',\Delta'})^2\right] &= \frac{1}{T} \sum_{i,a,t} \norm{u_i \Delta_{a:t}M - u_i' \Delta'_{a:t}M}_2^2 \\
     &= \frac{1}{T} \sum_{i,a,t,b} \left(u_i \Delta_{a:t}^{\top} M_b - u'_i {\Delta'_{a:t}}^{\top} M_b \right)^2 \\
     &= \frac{1}{T} \norm{u \otimes \Delta_M - u' \otimes \Delta'_M}_F^2
 \end{align*}
 where in the last line we write $\Delta_M$ for the tensor $\Delta_M(a,b,t) = \Delta_{a:t}^\top M_b$. We now consider a second mean-zero gaussian process $W_{u,\Delta} = \frac{1}{\sqrt{T}} \left ( \left \langle g, u  \right \rangle + \left \langle G, \Delta_M  \right \rangle \right)$, where $g \in \R^N$ and $G \in \R^{d_1 \times d_2 \times T}$ are iid with $N(0,1)$ entries. We have
 \begin{align*}
     \E\left[(W_{u,\Delta} - W_{u',\Delta'})^2\right] = \frac{1}{T}\norm{u - u'}_2^2 + \frac{1}{T}\norm{ {\Delta_M} - {\Delta}'_M}_F^2.
 \end{align*}
 We now verify that the two gaussian processes $(Z_{u,\Delta})$ and $(W_{u,\Delta})$ satisfy the requried conditions of Gordon-Slepian's inquaility (lemma~\ref{lem:gordon}).
 We always have the following inequality  $\norm{u \otimes \Delta_M - u' \otimes \Delta'_M}_F^2 \le \norm{u - u'}_2^2 + \norm{ {\Delta_M} - {\Delta}'_M}_F^2$ for all pairs $(u,\Delta)$ and $(u',\Delta')$. Moreover,  if $\Delta = \Delta'$, then $\Delta_M = \Delta'_M$ and equality holds. 
 
 Therefore, the two required conditions of Gordon-Slepian inequality(\cref{lem:gordon}) are satisfied for the gaussian process $(W_{\Delta,u})_{\Delta \in \calR(t), u \in S^{N-1}}$ and $(Z_{\Delta,u})_{\Delta \in \calR(t), u \in S^{N-1}}$ we get the following inequality:
 \begin{align*}
     \E \inf_{\Delta \in \calR(t)} \sup_{u \in S^{N-1}} W_{\Delta, u} \le \E \inf_{\Delta \in \calR(t)} \sup_{u \in S^{N-1}} Z_{\Delta, u} 
 \end{align*}
 which helps us bound $\inf_{\Delta \in \calR(t)} \norm{\Xf(\Delta)}_2$.
 \begin{align*}
 \E\left[\inf_{\Delta \in \calR(t)} \norm{\Xf(\Delta)}_2\right] &= \E\left[\inf_{\Delta \in \calR(t)} \sup_{u \in S^{N-1}} Z_{u,\Delta} \right] \ge \E\left[\inf_{\Delta \in \calR(t)} \sup_{u \in S^{N-1}} W_{u,\Delta} \right] \\
 &= \E\left[ \sup_{u \in S^{N-1} }\frac{1}{\sqrt{T}} \left \langle g, u  \right \rangle \right]   + \E\left[ \inf_{\Delta \in \calR(t)} \frac{1}{\sqrt{T}}\left \langle G, \Delta_M  \right \rangle \right] \\
 &= \frac{1}{\sqrt{T}} \E \left[\norm{g}_2 \right] - \frac{1}{\sqrt{T}} \E\left[ \sup_{\Delta \in \calR(t)} \langle G, \Delta_M \rangle \right] \\
 &\ge \frac{\sqrt{N}}{2\sqrt{T}} - \frac{t}{\sqrt{T}} \E\left[ \norm{G}_{\textrm{mean}}\right]
 \end{align*}
 Here the last inequality uses $\langle G, \Delta_M \rangle \le \norm{G}_{\textrm{mean}} \norm{\Delta_M}_S \le t  \norm{G}_{\textrm{mean}}$. Moreover, for a random gaussian matrix of dimension $m_1 \times m_2$ the expected value of its operator norm is bounded by $\sqrt{m_1} + \sqrt{m_2}$. This gives us $\E\left[ \norm{G}_{\textrm{mean}}\right] = \frac{1}{3} \sum_k \E\left[ \norm{G_(k)}_{\textrm{op}}\right] = D_1/3$.

 \begin{align*}
     \frac{\E\left[\inf_{\Delta \in \calR(t)} \norm{\Xf(\Delta)}_2\right]}{\sqrt{N}} \ge \frac{1}{2\sqrt{T}} - \frac{tD_1}{3\sqrt{TN}}
 \end{align*}
 Now the function $f(\{X_i\}_{i \in [N]}) = \inf_{\Delta \in \calR(t)} \frac{ \norm{\Xf(\Delta)}_2}{\sqrt{N}}$ is $1/\sqrt{N}$-Lipschitz. Therefore for all $\delta > 0$, we have
 \begin{align*}
P\left(\inf_{\Delta \in \calR(t)} \frac{ \norm{\Xf(\Delta)}_2}{\sqrt{N}} \le  \frac{1}{2\sqrt{T}} - \frac{tD_1}{3\sqrt{TN}} - \delta \right) \le 2 \exp\left( -\frac{\delta^2 N}{2}\right) 
 \end{align*}
 Now substituting $\delta = 1/(4\sqrt{T})$ we get that the identity defined in eq. \ref{eq:temp-result} holds. We now relate the norms of $\Delta_M$ and $\Delta$. Let $k=1$ and $\Delta_{(1)} = U_1 D_1 V_1^\top$ be the corresponding singular value decomposition. Then $\norm{\Delta_{(1)}}_\star = \trace(D_1)$. If we define $\tilde{V}_{1}$ a new matrix with $s$-th column $\tilde{v}_{1,s}(b,t) = \sum_{b'} v_{1,s}(b',t)M(b',b)$, then we have $\Delta_{M,(1)} = U_1 D_1 \tilde{V}_1^\top$. This implies that $\norm{\Delta_{(1)}}_\star = \norm{\Delta_{M,(1)}}_\star$. Similarly, it can be shown that  $\norm{\Delta_{(2)}}_\star = \norm{\Delta_{M,(2)}}_\star$ and $\norm{\Delta_{(3)}}_\star = \norm{\Delta_{M,(3)}}_\star$. This implies that $\norm{\Delta}_S = \norm{\Delta_M}_S$. For the Frobenius norm we use the fact that the columns of $M_b$ form an orthogonal basis of $\R^{d_2}$ and get $\norm{\Delta_M}_F^2 = \sum_{a,b,t} (\Delta_{a:t}^\top M_b)^2 \ge \lambda_{\min}(\Sigma_y) \sum_{a,t} \norm{\Delta_{a:t}}_2^2 = \lambda_{\min}(\Sigma_y) \norm{\Delta}_F^2$. The previous two relations give us the following bound.
 \begin{align*}
    \frac{\norm{\Xf(\Delta)}_2}{\sqrt{N}} &\ge \frac{\norm{\Delta_M}_F}{4\sqrt{T}} - \frac{D_1}{3\sqrt{TN}} \norm{\Delta_M}_{S} 
    \ge \frac{\lambda^{1/2}_{\min}(\Sigma_y) \norm{\Delta}_F}{4\sqrt{T}} - \frac{D_1}{3\sqrt{TN}} \norm{\Delta}_{S}
\end{align*}
On the other hand, from the definition of the constraint set $\mathcal{C}$ we get $\norm{\Delta}_S = \frac{1}{3}\sum_k \norm{\Delta_{(k)}}_\star \le \frac{2}{3} \sum_k \norm{\Delta'_{(k)}}_\star \le  \frac{2}{3} \sqrt{2r} \sum_k \norm{\Delta'_{(k)}}_F \le \frac{2}{3} \sqrt{2r} \sum_k \norm{\Delta_{(k)}}_F = \sqrt{2r} \norm{\Delta}_F$. Therefore we have,
\begin{align*}
    \frac{\norm{\Xf(\Delta)}_2}{\sqrt{N}} &\ge\frac{\lambda^{1/2}_{\min}(\Sigma_y) \norm{\Delta}_F}{4\sqrt{T}} - \frac{D_1 \sqrt{2r}}{3\sqrt{TN}} \norm{\Delta}_{F} \ge \frac{\lambda^{1/2}_{\min}(\Sigma_y)}{6\sqrt{T}} \norm{\Delta}_F
\end{align*}
as long as $N \ge O(r D_1^2 / \lambda_{\min}(\Sigma_y)$.

Finally, we consider the case when $X_1,\ldots,X_N \simiid \Normal(0,\Sigma)$ for a general covariance matrix $\Sigma$. We define the following operator $T_\Sigma : \R^{d_1 \times d_2 \times T} \rightarrow \R^{d_1 \times d_2 \times T}$ defined as $T_\Sigma(\Delta)_{(1)} = \sqrt{\Sigma} \Delta_{(1)}$. We also define a gaussian random operator $\Xf' : \R^{d_1\times d_2 \times T} \rightarrow \R^N$ defined as $\Xf'_i = \left \langle \calX_i', T_\Sigma(\Delta) \right \rangle$. Here for each $i$, we define $\calX_i'$ as:
\begin{align*}
\calX_i'(\cdot, \cdot, t) =  \left\{\begin{array}{cc}
   \Sigma^{-1/2} X_i  & \textrm{ if } t(i) = t \\
    0 & \textrm{ o.w. }
\end{array}\right.
\end{align*}
Since each $\Sigma^{-1/2} X_i$ is drawn from standard gaussian distribution, we have
\begin{align*}
    \frac{\norm{\Xf'(\Delta)}_2}{\sqrt{N}}  \ge \frac{\lambda^{1/2}_{\min}(\Sigma_y)}{6\sqrt{T}} \norm{T_\Sigma(\Delta)}_F
\end{align*}
as long as $N \ge O(r D_1^2 \lambda_{\max}(\Sigma) / \lambda_{\min}(\Sigma_y)$. In deriving the above result, we use the inequality $\norm{T_\Sigma(\Delta)}_S \le \lambda^{1/2}_{\max}(\Sigma)\norm{\Delta}_S$.
Now, from the definition $\Xf'(\Delta)_i =  \left \langle \calX_i', T_\Sigma(\Delta) \right \rangle = \left \langle \calX_i, \Delta \right \rangle = \Xf(\Delta)_i$. Moreover, $\norm{T_\Sigma(\Delta)}_F = \norm{\sqrt{\Sigma} \Delta_{(1)}}_F \ge \lambda^{1/2}_{\min}(\Sigma)\norm{\Delta_{(1)}}_F = \lambda^{1/2}_{\min}(\Sigma)\norm{\Delta}_F$. Substituting this bound on the Frobenius norm gives us the desired result.
\end{proof}

\begin{lemma}\label{lem:bound-mean-norm}
$$P\left(\norm{\Xf^*(\overrightarrow{\veps})}_{\textrm{mean}} \le 20 \sigma \sqrt{N}D_1\right) \ge 1 - 2 e^{-\Omega(D_1^2)}.$$
\end{lemma}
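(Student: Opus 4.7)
The plan is to bound each of the three mode operator norms $\norm{(\Xf^*(\overrightarrow{\veps}))_{(k)}}_{\op}$ separately and then sum, since $\norm{\cdot}_{\mean} = \tfrac{1}{3}\sum_k \norm{(\cdot)_{(k)}}_{\op}$ by definition. Conditional on the design variables $\{X_i, Y_t, t(i)\}$, the tensor $\Xf^*(\overrightarrow{\veps}) = \sum_i \veps_i \calX_i$ is linear in the Gaussian vector $\overrightarrow{\veps}$, so each unfolding's operator norm is the supremum of a centered Gaussian process indexed by the product of two unit spheres --- one on the ``row'' side and one on the ``column'' side of that unfolding.

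For mode $1$, I would parametrize the test pair as $(u, W) \in S^{d_1-1} \times \{W \in \R^{d_2 \times T} : \norm{W}_F = 1\}$, so the process is $G_{u,W} = \sum_i \veps_i \langle u, X_i\rangle \langle Y_{t(i)}, W_{:, t(i)}\rangle$ with conditional variance $\sigma^2 \sum_i \langle u, X_i\rangle^2 \langle Y_{t(i)}, W_{:, t(i)}\rangle^2$. The first step is to show the uniform variance bound
\[
\sup_{u, W} \sum_i \langle u, X_i\rangle^2 \langle Y_{t(i)}, W_{:, t(i)}\rangle^2 = O(N)
\]
with probability $1 - e^{-\Omega(D_1^2)}$ over $(X, Y, t)$. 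This would combine Gaussian-matrix concentration giving $\norm{\sum_i X_i X_i^\T}_{\op} \lesssim N \lambda_{\max}(\Sigma)$, a chi-squared tail bound to get $\max_t \norm{Y_t}_2^2 \lesssim \trace(\Sigma_y)$, a balls-into-bins bound $\max_t N_t \lesssim N/T$ for $N_t := |\{i : t(i)=t\}|$ (via Assumption~\ref{asn:task}), and the Frobenius constraint $\sum_t \norm{W_{:,t}}_2^2 = 1$ that prevents a blow-up with $T$. Once the variance bound is in place, a Sudakov--Fernique comparison with an auxiliary Gaussian process $Y_{u,W} = \sigma \sqrt{V}(\langle g, u\rangle + \langle G, W\rangle)$, where $g \sim \Normal(0, \Identity_{d_1})$ and $G$ has i.i.d.\ standard Gaussian entries, yields $\E[\norm{(\Xf^*(\overrightarrow{\veps}))_{(1)}}_{\op} \mid X,Y,t] = O(\sigma\sqrt{N}(\sqrt{d_1} + \sqrt{d_2 T}))$. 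Borell's concentration inequality for Lipschitz functions of Gaussians (applied to $\veps \mapsto \norm{(\Xf^*(\overrightarrow{\veps}))_{(1)}}_{\op}$, whose Lipschitz constant is bounded by the same $\sigma\sqrt{V}$) then promotes this to a tail bound at scale $e^{-\Omega(d_1 + d_2 T)}$.

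Analogous arguments for modes $2$ and $3$ yield $O(\sigma\sqrt{N}(\sqrt{d_2} + \sqrt{d_1 T}))$ and $O(\sigma\sqrt{N}(\sqrt{T} + \sqrt{d_1 d_2}))$ respectively. Summing the three and union-bounding over the three good events gives $\norm{\Xf^*(\overrightarrow{\veps})}_{\mean} \le O(\sigma \sqrt{N} D_1)$ with probability $1 - 2e^{-\Omega(D_1^2)}$, and a careful accounting of constants brings the leading coefficient below $20$. The principal technical obstacle is the uniform variance bound: unlike the isotropic covariate case of \cite{TSHK11}, where the ``design operator'' $\sum_i \calX_i \otimes \calX_i$ is essentially $N \Identity$, our $\calX_i = X_i Y_{t(i)}^\T$ is rank-one and supported on only one slice, so the design operator is highly anisotropic. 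Ensuring the worst-case per-direction variance over rank-one test tensors is $O(N)$ rather than growing with $T$ is exactly where the uniform task-assignment assumption~\ref{asn:task} and the Gaussianity of $X_i, Y_t$ are used.
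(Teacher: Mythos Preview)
Your proposal is broadly workable but takes a genuinely different route from the paper, and one step of your variance argument does not quite close as written.

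Both proofs decompose $\norm{\Xf^*(\overrightarrow{\veps})}_{\mean}$ into the three mode-unfolding operator norms. The paper, however, conditions on $(\overrightarrow{\veps}, Y, t)$ and exploits the Gaussianity of the $X_i$: for each fixed pair $(u^j, v^l)$ in a $\tfrac14$-net of $S^{d_1-1}\times S^{d_2T-1}$, the bilinear form $Z^{jl}=\langle u^j,\Xf^*(\overrightarrow{\veps})_{(1)}v^l\rangle$ is Gaussian in $X$ with variance at most $\norm{\overrightarrow{\veps}}_2^2$ (using $\norm{Y_t}_2\le 1$), and a plain union bound over the net of size $9^{d_1+d_2T}$ finishes. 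You instead condition on $(X,Y,t)$ and use Gaussianity of $\overrightarrow{\veps}$, then invoke Sudakov--Fernique for the expectation and Borell for the tail. That is a legitimate alternative and in some ways more elegant (no nets), but it shifts all the work to the uniform variance bound
\[
V=\sup_{u,W}\sum_i \langle u,X_i\rangle^2\langle Y_{t(i)},W_{:,t(i)}\rangle^2=O(N).
\]

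Here is the gap: your sketch of $V=O(N)$ via ``$\max_t\norm{Y_t}_2^2\lesssim\trace(\Sigma_y)$'' and balls-into-bins actually produces $V\lesssim \trace(\Sigma_y)\cdot N/T$ (or $\trace(\Sigma_y)\cdot N$ with the cruder estimate), which inflates the final bound by a factor $\sqrt{\trace(\Sigma_y)}\approx\sqrt{d_2}$ and no longer fits under $20\sigma\sqrt{N}D_1$ without an extra hypothesis like $d_2\lesssim T$. The bound $V=O(N)$ does hold cleanly under the paper's implicit assumption that the observed task features are normalized, $\norm{Y_t}_2\le 1$, but then the argument is simply $\langle Y_{t(i)},W_{:,t(i)}\rangle^2\le\norm{W_{:,t(i)}}_2^2\le 1$ followed by $\sup_u\sum_i\langle u,X_i\rangle^2=\norm{\sum_i X_iX_i^\T}_{\op}=O(N)$; neither the chi-squared bound on $\norm{Y_t}$ nor the balls-into-bins bound on $N_t$ is needed. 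The paper's choice to put the Gaussian randomness on $X$ rather than $\overrightarrow{\veps}$ is precisely what makes the per-direction variance bound immediate and sidesteps the supremum-of-variances issue you flag as the ``principal technical obstacle.''
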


\begin{proof}
As $\veps_1,\ldots,\veps_N$ are iid drawn from $N(0,\sigma^2)$ and the Euclidean norm $\norm{\overrightarrow{\veps}}_2$ is $1$-Lipschitz we get,
$$
P\left(\abs{\norm{\overrightarrow{\veps}}_2 - \E \norm{\overrightarrow{\veps}}_2} > \sigma \delta \right) \le 2 \exp\left( -\delta^2/2\right)
$$
Substituting $\delta = \sqrt{N}$ and observing that $\E \norm{\overrightarrow{\veps}}_2 \le 4\sigma \sqrt{N}$, we get that with probability at least $1 - \exp(-\Omega(N))$, $ \norm{\overrightarrow{\veps}}_2$ is bounded by $5\sigma \sqrt{N}$. We will write $\calE$ to denote this event.

\begin{align*}
\norm{\Xf^*(\overrightarrow{\veps})}_{\textrm{mean}} = \frac{1}{3} \sum_{k=1}^3 \norm{\Xf^*(\overrightarrow{\veps})_{(k)}}_{\textrm{op}} 
\end{align*}
We now bound the operator norm of each of the three modes of $\Xf^*(\overrightarrow{\veps})$ separately. Our proof follows the main ideas of the proof of Corollary 10.10 of \cite{W19}. Since $\Xf^*(\overrightarrow{\veps})_{(1)} \in \R^{d_1 \times d_2 T}$, we choose $1/4$-cover $\set{u^1,\ldots,u^{M_1}}$ of the set $S^{d_1 - 1} = \set{u \in \R^{d_1}: \norm{u}_2 = 1}$, and $1/4$-cover $\set{v^1,\ldots,v^{M_2}}$ of the set $S^{d_2T - 1} = \set{v \in \R^{d_2T}: \norm{v}_2 = 1}$. Note that, we can always choose the covers so that $M_1 \le 9^{d_1}$ and $M_2 \le 9^{d_2T}$.

\begin{equation*}
    \norm{\Xf^*(\overrightarrow{\veps})_{(1)}}_{\textrm{op}}  = \sup_{v \in S^{d_2T - 1}} \norm{\Xf^*(\overrightarrow{\veps})_{(1)}v}_2 \le \frac{1}{4} \norm{\Xf^*(\overrightarrow{\veps})_{(1)}}_{\textrm{op}} + \max_{l \in [M_2]} \norm{\Xf^*(\overrightarrow{\veps})_{(1)}v^l}_2
\end{equation*}
Similarly one can show that
\begin{equation*}
    \norm{\Xf^*(\overrightarrow{\veps})_{(1)}v^l}_2 \le \frac{1}{4} \norm{\Xf^*(\overrightarrow{\veps})_{(1)}}_{\textrm{op}} + \max_{j \in [M_1]} \left \langle u^j,  \Xf^*(\overrightarrow{\veps})_{(1)}v^l \right \rangle
\end{equation*}
This establishes the following bound on the operator norm in terms of the covers.
\begin{equation*}
  \norm{\Xf^*(\overrightarrow{\veps})_{(1)}}_{\textrm{op}} \le 2 \max_{j \in [M_1], l \in [M_2]} \abs{Z^{jl}} \quad \textrm{where } Z^{jl} = \left \langle u^j,  \Xf^*(\overrightarrow{\veps})_{(1)}v^l \right \rangle
\end{equation*}
Using the definition of $\Xf^*(\overrightarrow{\veps})$, we get
\begin{align}
    Z^{jl} &= \sum_{i=1}^N \veps_i \left \langle u^j, \calX_{i,(1)} v^l \right \rangle 
    = \sum_{i=1}^N \veps_i \sum_{a,b} X_i(a) Y_{t(i)}(b) v^l(b,t(i)) u^j(a)
\end{align}
Since each entry of $X_i$ is drawn iid from $N(0,1)$, $Z^{jl}$ is a zero mean gaussian random variable with variance
\begin{equation*}
\sum_{i=1}^N \veps_i^2 \sum_{a} \{u^j(a)\}^2 \left( \sum_b  v^l(b,t(i)) Y_{t(i)}(b) \right)^2 \le \sum_{i=1}^N \veps_i^2 \sum_{a} \{u^j(a)\}^2 \sum_{b_1} Y^2_{t(i)}(b_1) \sum_{b_2} \{v^l(b_2,t(i))\}^2 \le \sum_{i=1}^N \veps_i^2
\end{equation*}
The last inequality uses -- the observed task features are normalized, $u \in S^{d_1 - 1}$ and $v \in S^{d_2 T - 1}$. Conditioned on the event the variance of each $Z^{jl}$ is bounded by $5\sigma \sqrt{N}$. Now we can provide a high probability bound on the operator norm.
\begin{align*}
    P\left(\norm{\Xf^*(\overrightarrow{\veps})_{(1)}}_{\textrm{op}} \ge T_N \right) &\le  P\left(2 \max_{j \in [M_1], l \in [M_2]} \abs{Z^{jl}} \ge T_N \right) \\
    &\le \sum_{j \in [M_1]} \sum_{j \in [M_2]} P\left(\abs{Z^{jl}} \ge T_N / 2\right) \\
    &\le 2 M_1 M_2 \exp\left\{-\frac{T_N^2}{50 \sigma^2 N}\right\} \le 2 \exp\left\{-\frac{T_N^2}{50 \sigma^2 N} + (d_1 + d_2 T) \log 9 \right\}
\end{align*}
If we choose $T_N \ge 20 \sigma \sqrt{N}D_1$, we get
\begin{align*}
    P\left(\norm{\Xf^*(\overrightarrow{\veps})_{(1)}}_{\textrm{op}} \ge  20 \sigma \sqrt{N}D_1\right) \le 2 \exp\left\{ -2D_1^2\right\}
\end{align*}
By a similar argument, we can bound the operator norm of the other two modes of $\Xf^*(\overrightarrow{\veps})$. 

\end{proof}

\begin{lemma}[Gordon's Inequality]\label{lem:gordon}
Let $(X_{ut})_{u \in U, t \in T}$ and $(Y_{ut})_{u \in U, t \in T}$ be two mean zero Gaussian processes indexed by pairs of points $(u,t)$ in a product space $U \times T$. Assume that we have
\begin{enumerate}
\item $\E(X_{ut} - X_{us})^2 \le \E(Y_{ut} - Y_{us})^2$ for all $u,t,s$.
\item $\E(X_{ut} - X_{vs})^2 \ge \E(Y_{ut} - Y_{vs})^2$ for all $u \neq v$ and $t,s$.
\end{enumerate}
Then we have
\[
\E \inf_{u \in U} \sup_{t \in T} X_{ut} \le \E \inf_{u \in U} \sup_{t \in T} Y_{ut}
\]
\end{lemma}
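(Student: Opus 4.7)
The plan is to prove this via the Slepian--Gordon Gaussian interpolation technique. First, I would reduce to finite index sets $U, T$ via separability of the Gaussian processes and monotone convergence. Next, since the hypotheses are stated in terms of second-moment increments, one can assume without loss of generality (by adding independent Gaussian noise to equalize second moments and letting the perturbation shrink at the end) that $\E X_{ut}^2 = \E Y_{ut}^2$ for every pair $(u,t)$. Under this normalization, expanding the squares in hypotheses 1 and 2 translates them into the covariance comparisons
\[
\E Y_{ut}Y_{us} \leq \E X_{ut}X_{us} \quad (\text{same } u,\ t\neq s), \qquad \E Y_{ut}Y_{vs} \geq \E X_{ut}X_{vs} \quad (u\neq v).
\]

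To handle nondifferentiability of $\min$ and $\max$, I would smooth by log-sum-exp: set
\[
\Phi_{\beta,\gamma}(x) \;=\; -\tfrac{1}{\gamma}\log\sum_{u}\exp\!\Bigl(-\tfrac{\gamma}{\beta}\log\sum_{t}\exp(\beta x_{ut})\Bigr),
\]
which converges to $\min_u\max_t x_{ut}$ as $\beta,\gamma\to\infty$. Let $\tilde Y$ be an independent copy of $Y$ that is also independent of $X$, and interpolate via $Z(\theta)=\cos\theta\cdot X + \sin\theta\cdot\tilde Y$ for $\theta\in[0,\pi/2]$, so $Z(0)=X$ and $Z(\pi/2)\stackrel{d}{=}Y$. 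Differentiating $\varphi(\theta):=\E\Phi_{\beta,\gamma}(Z(\theta))$ and applying Gaussian integration by parts to the jointly Gaussian pair $(Z'(\theta),Z(\theta))$---using $\E[Z'_{ut}(\theta)Z_{vs}(\theta)]=\sin\theta\cos\theta\,(\E Y_{ut}Y_{vs}-\E X_{ut}X_{vs})$---yields
\[
\varphi'(\theta) \;=\; \sin\theta\cos\theta\sum_{u,t,v,s}\bigl(\E Y_{ut}Y_{vs}-\E X_{ut}X_{vs}\bigr)\,\E\bigl[\partial_{ut}\partial_{vs}\Phi_{\beta,\gamma}(Z(\theta))\bigr].
\]

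The crux of the argument is a direct sign computation for the second partials of $\Phi_{\beta,\gamma}$. Setting $p_{t|u}:=e^{\beta x_{ut}}/\sum_{t'}e^{\beta x_{ut'}}$, $A_u:=\beta^{-1}\log\sum_t e^{\beta x_{ut}}$, and $q_u:=e^{-\gamma A_u}/\sum_v e^{-\gamma A_v}$, one checks by routine differentiation that
\[
\partial_{ut}\partial_{vs}\Phi_{\beta,\gamma} \;=\; \gamma\,q_u q_v\,p_{t|u}p_{s|v} \;\geq\; 0 \quad (u\neq v),
\]
\[
\partial_{ut}\partial_{us}\Phi_{\beta,\gamma} \;=\; -q_u\,p_{t|u}p_{s|u}\bigl(\gamma(1-q_u)+\beta\bigr) \;\leq\; 0 \quad (t\neq s),
\]
while the diagonal terms $u=v$, $t=s$ drop out because their coefficient $\E Y_{ut}^2-\E X_{ut}^2$ vanishes after equalization. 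Pairing these signs with the two covariance comparisons shows that every summand in $\varphi'(\theta)$ is non-negative, so $\varphi'(\theta)\geq 0$ on $[0,\pi/2]$ and therefore $\E\Phi_{\beta,\gamma}(X)\leq\E\Phi_{\beta,\gamma}(Y)$. Taking $\beta,\gamma\to\infty$ and then shrinking the variance-equalizing perturbations---with limit interchange justified by dominated convergence and standard Gaussian supremum tail bounds---recovers Gordon's inequality.

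The main obstacle is the careful sign bookkeeping for the mixed second partials of $\Phi_{\beta,\gamma}$; the inequality is deterministic but easy to get wrong. Everything else---the finite-index reduction, the WLOG equalization of variances, the derivation of $\varphi'$ via Gaussian integration by parts, and the final limiting steps---reduces to standard Gaussian calculus.
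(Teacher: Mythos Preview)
Your interpolation argument is correct and is the standard proof of Gordon's inequality. The paper, however, does not give its own proof: it simply refers the reader to Chapter~3 of Ledoux--Talagrand. So there is nothing to compare at the level of technique; your sketch fills in precisely the argument that the cited reference contains (Gaussian interpolation, integration by parts, and the sign analysis of the Hessian of a soft-min-of-soft-max functional). One small remark: your variance-equalization step is convenient but not strictly necessary---one can instead keep track of the diagonal terms and verify directly that $\partial_{ut}^{2}\Phi_{\beta,\gamma}\geq 0$ pairs correctly with the (automatically satisfied) diagonal comparison; but your route is cleaner and is the one most textbooks take.
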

\begin{proof}
 See \cite{Ledoux13}, chapter 3.
\end{proof}

\section{Formal Statement and Proof of Lemma \ref{lem:tensor_decomp_guarantees}}
First, we state weaker set of assumptions under which the bounds of lemma \ref{lem:tensor_decomp_guarantees} holds.
We will make the following assumptions about the underlying tensor $A = [\Identity_r; A^1, A^2, A^3]$.
\begin{enumerate}[label=({A}{{\arabic*}})]
    \item  The columns of the factors of $A$ are orthogonal i.e.  $\left \langle A^1_i, A^1_j \right \rangle = \left \langle A^2_i, A^2_j \right \rangle = \left \langle A^3_i, A^3_j \right \rangle = 0$ for all $i \neq j$.
    \item The components have bounded norm i.e. 
    $\exists p < 3$,  $\max\set{\norm{A^{1^\top}}_{2 \rightarrow p}, \norm{A^{2^\top}}_{2 \rightarrow p}, \norm{A^{3^\top}}_{2 \rightarrow p}} \le 1 + o(1)$. 
    \item Rank is bounded i.e. $r = o(d)$.
\end{enumerate}
Recall the definition of $\calZ$, the matrix of unobserved features.
\begin{equation}
  \calZ = \begin{bmatrix} Z_1 & \cdots & Z_T \end{bmatrix}^T \in \R^{T\times d_3}
\end{equation}
Let $\calZ(s)$ denote the $s$-th column of the matrix $\calZ$. We will make the following assumptions about $\calZ$.
\begin{enumerate}[label=({Z}{{\arabic*}})]
\item $\frac{1}{d_3^{0.5+\gamma}}\Identity_{d_3} \preccurlyeq \calZ^\top \calZ \preccurlyeq \frac{1}{\sqrt{d_3}}\Identity_{d_3}$ for some $\gamma > 0$.
\item $\kappa(\calZ^\top \calZ) = \frac{\lambda_{\max}(\calZ^\top \calZ)}{\lambda_{\min}(\calZ^\top \calZ)} \le 1 + O(\sqrt{r/d})$.
\end{enumerate}

\begin{lemma}\label{lem:tensor_decomp_guarantees_formal}
Suppose tensor $A$ has rank $r$ CP-decomposition $A=[\Identity_r; A^1, A^2, A^3]$ and satisfies the assumptions (A1)-(A3),  the matrix of unobserved features $\calZ$ satisfies assumptions (Z1)-(Z2), and $N = \Omega\left(\frac{\sigma^2 T^2  D_1^2 r }{\lambda^2_{\min}(\Sigma_y) \lambda^2_{\min}(\Sigma) } \min\left\{\frac{1}{36}, \frac{\log r}{d} \right\}\right)$. Then we have the following guarantees:
\begin{align*}
&\max\left\{ \norm{\widehat{A^1} - A^1}_F, \norm{\widehat{A^2} - A^2}_F\right\} \le \tilde{O}\left( \frac{\sigma T D_1 r}{\sqrt{\lambda_{\min}(\calZ^\top \calZ)} \lambda_{\min}(\Sigma_y)\lambda_{\min}(\Sigma) \sqrt{N}}\right), \\ 
&\norm{\widehat{\calZ A^3} - \calZ A^3}_F \le  \tilde{O}\left( \frac{\sigma \sqrt{\lambda_{\max}(\calZ^\top \calZ)} T D_1 r^{1.5}}{\sqrt{\lambda_{\min}(\calZ^\top \calZ) } \lambda_{\min}(\Sigma_y)\lambda_{\min}(\Sigma) \sqrt{ N}}\right) 
\end{align*}
\end{lemma}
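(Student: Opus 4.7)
The plan is to combine the Frobenius bound on $\BB$ from \Cref{thm:tensor_regression_bound} with the robust tensor decomposition guarantee of \cite{AGM14} applied to $\BB$. Since $B = A \times_3 \calZ$ has the rank-$r$ form $\llbracket \Identity_r; A^1, A^2, \calZ A^3\rrbracket$ and \cite{AGM14} expects factors with unit-norm columns, I would first rewrite this as $\llbracket G^{-1}; A^1, A^2, \calZ A^3 G\rrbracket$, where $G$ is the diagonal matrix with $G_{ii} = 1/\norm{(\calZ A^3)_i}_2$ that normalizes the third-factor columns. This is exactly the rescaling already described in the excerpt preceding Algorithm~\ref{alg:reg_dec}.

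Before invoking the decomposition, I would verify its preconditions for the rescaled form. Assumptions (A1)--(A3) directly handle $A^1$ and $A^2$. For the third factor, observe that
\[
(\calZ A^3 G)^\top (\calZ A^3 G) \;=\; G\, A^{3\top} \calZ^\top \calZ\, A^3\, G.
\]
By (A1), $A^{3\top} A^3 = \Identity_r$, and by (Z2), $\calZ^\top \calZ$ is within a $(1 + O(\sqrt{r/d}))$ factor of a scalar multiple of $\Identity_{d_3}$, so the above matrix is $O(\sqrt{r/d})$-close to a scalar multiple of $\Identity_r$ in operator norm, i.e.\ the columns of $\calZ A^3 G$ are approximately orthonormal. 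A parallel near-isometry argument transfers the $2 \to p$ bound in (A2) from $A^3$ to $\calZ A^3 G$, and (A3) ensures $r = o(d)$ so this perturbation is absorbed. Moreover (Z1) pins the diagonal entries of $G$ into the interval $[1/\sqrt{\lambda_{\max}(\calZ^\top\calZ)},\, 1/\sqrt{\lambda_{\min}(\calZ^\top\calZ)}]$, which I will use when undoing the rescaling.

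Next I would apply the \cite{AGM14} guarantee to $\BB = B + E$ with $E := \BB - B$. Its output $(\WW, \BB^1, \BB^2, \BB^3)$ approximates $(G^{-1}, A^1, A^2, \calZ A^3 G)$ up to a common column permutation, with per-factor Frobenius error controlled by $\norm{E}_\op$ divided by the smallest component weight (times polynomial overhead in $r$). Using $\norm{E}_\op \le \norm{E}_F$ together with the bound from \Cref{thm:tensor_regression_bound} and $\min_i G^{-1}_{ii} \ge \sqrt{\lambda_{\min}(\calZ^\top \calZ)}$ yields the first two stated bounds. For the third factor I would decompose
\[
\BB^3 \WW - \calZ A^3 \;=\; (\BB^3 - \calZ A^3 G)\,\WW \;+\; \calZ A^3 G\,(\WW - G^{-1})
\]
and apply the triangle inequality. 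The first summand contributes $\norm{\BB^3 - \calZ A^3 G}_F \cdot \norm{\WW}_\op$ with $\norm{\WW}_\op = O(1/\sqrt{\lambda_{\min}(\calZ^\top\calZ)})$. The second contributes $\norm{\calZ A^3 G}_\op \cdot \norm{\WW - G^{-1}}_F$ with $\norm{\calZ A^3 G}_\op = O(\sqrt{\lambda_{\max}(\calZ^\top\calZ)})$; converting the entrywise error on the $r$ diagonal entries of $\WW$ to Frobenius error costs an extra $\sqrt{r}$, which produces precisely the extra $\sqrt{r}$ and $\sqrt{\lambda_{\max}/\lambda_{\min}}$ factors appearing only in the third bound.

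The main obstacle is the perturbation analysis underlying the third-factor bound. Two delicate issues are intertwined. First, \cite{AGM14} formally assumes exact orthogonality of the components, while we have only near-orthogonality at scale $O(\sqrt{r/d})$ for $\calZ A^3 G$; one must track this lower-order perturbation through their argument, which is where the coupling of (A3) and (Z2) is crucial. Second, since we recover $\BB^3 \approx \calZ A^3 G$ but want $\calZ A^3$, the required post-multiplication by $\WW \approx G^{-1}$ couples two sources of error, and the subsequent bookkeeping is what introduces the conditioning factor $\sqrt{\lambda_{\max}(\calZ^\top\calZ) / \lambda_{\min}(\calZ^\top\calZ)}$ in the third bound while leaving the bounds for $\widehat{A^1}$ and $\widehat{A^2}$ cleanly dependent only on $\lambda_{\min}(\calZ^\top\calZ)$.
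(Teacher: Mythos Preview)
Your overall strategy matches the paper's: bound $\norm{\BB-B}_F$ via \Cref{thm:tensor_regression_bound}, verify the \cite{AGM14} preconditions for the rescaled decomposition $B=\llbracket G^{-1};A^1,A^2,\calZ A^3 G\rrbracket$, apply their guarantee, and then undo the rescaling on the third factor via a triangle-inequality split. However, several of the concrete norm estimates in your last paragraph are off.

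First, you have the roles of $G$ and $G^{-1}$ swapped. Since $G_{ii}=1/\norm{(\calZ A^3)_i}_2$, the recovered weights satisfy $\WW\approx G^{-1}$ with $\norm{G^{-1}}_\op=\max_i\norm{(\calZ A^3)_i}_2\le\sqrt{\lambda_{\max}(\calZ^\top\calZ)}$, not $O(1/\sqrt{\lambda_{\min}})$. Conversely, $\calZ A^3 G$ has unit-norm, near-orthonormal columns, so $\norm{\calZ A^3 G}_\op=1+O(\sqrt{r/d})$, not $O(\sqrt{\lambda_{\max}})$. With your stated values the two summands do not combine to the claimed bound.

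Second, your explanation of the extra $\sqrt{r}$ is incorrect: the \cite{AGM14} guarantee already gives $\norm{\WW-G^{-1}}_2\le\tilde O(\sqrt{r}\,\psi)$ in $\ell_2$, so no conversion is needed. In the paper the additional $\sqrt{r}$ arises because the split is bounded using $\norm{\widehat{G^{-1}}}_F$ (respectively $\norm{\calZ A^3 G}_F\le\sqrt{r}\,\norm{\calZ A^3 G}_\op$) rather than the operator norm you use; your tighter operator-norm inequality would in fact shave off that $\sqrt{r}$, so it cannot also be the source of it.

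Finally, the incoherence condition (S1) of \cite{AGM14} requires $\abs{\ip{C_i,C_j}}\le\text{polylog}(d)/\sqrt{d}$, which does not follow from (Z2) alone; the paper uses the explicit eigenvalue window in (Z1) to obtain $\abs{\ip{C_i,C_j}}=O(1/\sqrt{d_3})$. Your near-isometry argument from (Z2) only yields $O(\sqrt{r/d})$, which is insufficient under (A3) as stated.
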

\begin{proof}
We will be using the robust tensor decomposition algorithm proposed by \cite{AGM14}. We first review the necessary conditions and the guarantees of their main algorithm. We are given a tensor $\hat{S} = S + \Psi$ where $S \in \R^{d_1 \times d_2 \times T}$ has rank-$r$ decomposition $S = [W; S^1, S^2, S^3]$ and $\Psi$ is a noise tensor with spectral norm $\psi = \norm{\Psi}$. We will write the singular values as $w_1 \ge w_2 \ge \ldots \ge w_r > 0$ with $\gamma = w_1/w_r$. Let $d = \max\{d_1, d_2, T \}$.
Moreover, suppose the tensor $S$ satisfies the following conditions.
\begin{enumerate}[label=({S}{{\arabic*}})]
    \item The components are incoherent i.e. $\max_{i \neq j} \set{\abs{\left \langle s^1_i, s^1_j \right \rangle}, \abs{\left \langle s^2_i, s^2_j \right \rangle}, \abs{\left \langle s^3_i, s^3_j \right \rangle}} \le \frac{\textrm{polylog}(d)}{\sqrt{d}}$.
    \item The components have bounded norm i.e. $\max\set{\norm{S^{1} }_{\op}, \norm{S^{2} }_{\op}, \norm{S^{3} }_{\op}} \le 1 + O\left(\sqrt{r/d}\right)$ and for some $p < 3$,  $\max\set{\norm{S^{1^\top}}_{2 \rightarrow p}, \norm{S^{1^\top}}_{2 \rightarrow p}, \norm{S^{1^\top}}_{2 \rightarrow p}} \le 1 + o(1)$. \footnote{For a matrix $M \in \R^{m \times n}$, define $\norm{M}_{q\rightarrow p} = \sup_{\norm{u}_q = 1} \norm{Mu}_p$. }
    \item Rank is bounded i.e. $r = o(d^{1.5}/\textrm{polylog}(d))$.
    \item $\psi \le \min \left\{\frac{1}{6}, O\left(\sqrt{\frac{\log r}{d}}\right) \right\}$.
    \item Tensor norm of $S$ is bounded i.e. $\norm{S} \le O(w_1)$ and $\norm{\sum_{i \neq j} w_i \langle s^1_i, s^1_j \rangle \langle s^2_i, s^2_j \rangle s^3_j } \le \frac{w_1 \textrm{polylog}(d) \sqrt{r}}{d}$.
    \item The maximum ratio of the weights satisfy $\gamma = O\left(\min\left\{\sqrt{d}, d^{1.5}/r \right\} \right)$.
\end{enumerate}

When the underlying tensor $S$ satisfies the conditioned above, \cite{AGM14} proposed an algorithm that returns an estimate $[\widehat{W}; \widehat{S^1}, \widehat{S^2}, \widehat{S^3}]$ with the following guarantees:
$$\max\left\{ \norm{\widehat{S^1} - S^1}_F, \norm{\widehat{S^2} - S^2}_F,\norm{\widehat{S^3} - S^3}_F\right\} \le \tilde{O}\left( \frac{\sqrt{r}\psi}{w_r}\right) \ \textrm{and} \ \norm{\widehat{W} - W}_2 \le \tilde{O}(\sqrt{r}\psi)$$

Consider the tensor $B = A \times_3 \calZ$. We now check that the conditions (S1)-(S6) are also satisfied when we consider the tensor $B$. $B$ has the following rank $r$ CP-decomposition $B = [G^{-1}; A^1, A^2, \calZ A^3 G]$ where the $i$-th entry of the diagonal matrix $G$ is $G_i = 1/\norm{\calZ A^3_i}_2$. This means that the rank of $B$ is also $r$ and (S3) is satisfied. The singular values of $B$ are given by $\norm{\calZ A^3_i}_2$ for $i \in [r]$. As each column of $A^3$ is normalized, the following result holds for any $i$.
\[
\lambda_{\min}(\calZ^\top \calZ) \le \norm{\calZ A^3_i}_2^2 \le \lambda_{\max}(\calZ^\top \calZ)  
\]
Therefore, the maximum ratio of singular values of the tensor $B$ is bounded by $\sqrt{\lambda_{\max}(\calZ^\top \calZ) / \lambda_{\min}(\calZ^\top \calZ)}$ which is bounded by $\sqrt{d}$ and assumption (S6) is satisfied.

We will write $C$ to denote the matrix $\calZ A^3 G$. Note that the $i$-th column of $C$ is given as $\calZ A^3_i/\norm{\calZ A^3_i}_2$. In order to check condition (S1), we need to verify $\abs{\ip{C_i, C_j}} \le \frac{\textrm{polylog}(d)}{\sqrt{d}}$. Note that $\abs{\ip{C_i, C_j}} = \frac{\abs{\langle \calZ A^3_i, \calZ A^3_j \rangle}}{\norm{\calZ A^3_i}_2 \norm{\calZ A^3_j}_2} \le \frac{\abs{\langle \calZ A^3_i, \calZ A^3_j \rangle}}{\lambda_{\min}(\calZ^\top \calZ) }$. 
\begin{align*}
    \frac{1}{2}(A^3_i + A^3_j)^\top \calZ^\top \calZ \frac{1}{2}(A^3_i + A^3_j) = {A^3_i}^\top \calZ^\top \calZ A^3_i + {A^3_j}^\top \calZ^\top \calZ A^3_j + 2 {A^3_i}^\top \calZ^\top \calZ A^3_j
\end{align*}
Using assumption (Z2) we get,
\begin{align*}
    2 {A^3_i}^\top \calZ^\top \calZ A^3_j \le \frac{1}{\sqrt{d_3}} - {A^3_i}^\top \calZ^\top \calZ A^3_i - {A^3_j}^\top \calZ^\top \calZ A^3_j \le \frac{1}{\sqrt{d_3}} - \frac{2}{d_3^{0.5+\gamma}} = O\left(\frac{1}{\sqrt{d_3}} \right)
\end{align*}

In order to check (S2), notice that $\norm{B^1}_{\op} = \norm{A^1}_{\op} \le 1 + O\left(\sqrt{r/d} \right)$. Same result holds for $B^2$. For the third factor we have, $\norm{B^3}_{\op} = \norm{\calZ A^3 G}_{\op} \le \norm{\calZ}_{\op} \norm{A^3}_{\op} \max_i \frac{1}{\norm{\calZ A^3_i}_2} \le \sqrt{\frac{\lambda_{\max}(\calZ^\top \calZ)}{\lambda_{\min}(\calZ^\top \calZ)}} \norm{A^3}_{\op} \le \left(1 + O(\sqrt{r/d})\right)$. For the second part of (S2), we just need to bound $\norm{(\calZ A^3G)^\top}_{2 \rightarrow p}$.
\begin{align*}
\norm{(\calZ A^3G)^\top}_{2 \rightarrow p} &= \norm{\calZ A^3 G}_{\frac{p}{p-1} \rightarrow 2} \quad \textrm{[By lemma 8 of \cite{KMW18}}\\
&= \max_{x: \norm{x}_{p/(p-1)} = 1} \norm{\calZ A^3 G x}_2  \le \norm{\calZ}_{\textrm{op}} \max_{x: \norm{x}_{p/(p-1)} = 1} \norm{A^3 G x}_2  \\
&=\norm{\calZ}_{\textrm{op}} \norm{A^3 G}_{\frac{p}{p-1} \rightarrow 2} = \norm{\calZ}_{\textrm{op}} \norm{G {A^3}^\top }_{2 \rightarrow p} \\
&\le \norm{\calZ}_{\textrm{op}} \norm{G}_p \norm{{A^3}^\top }_{2 \rightarrow p} \le \sqrt{\frac{\lambda_{\max}(\calZ^\top \calZ)}{\lambda_{\min}(\calZ^\top \calZ)}} \norm{{A^3}^\top}_{2 \rightarrow p} \le 1 + o(1)
 \end{align*}
The last line uses (A2), (A3), and (Z2).

If we write $\hat{B} = B + \Psi$, from the guarantees of tensor regression (theorem~\ref{thm:tensor_regression_bound}) we have $\psi = \norm{\Psi} \le \norm{\Psi}_F \le O\left(\frac{\sigma T  D_1 \sqrt{r}}{\lambda_{\min}(\Sigma_y) \lambda_{\min}(\Sigma) \sqrt{N}} \right)$. So as long as, $N \ge O\left(\frac{\sigma^2 T^2  D_1^2 r }{\lambda_{\min}(\Sigma_y)^2\lambda_{\min}(\Sigma)^2} \min\left\{\frac{1}{36}, \frac{\log r}{d} \right\}\right)$,  condition (S4) is satisfied.

We now verify condition (S5). Fix three vectors $a \in \R^{d_1}, b \in \R^{d_2},$ and $c \in \R^T$ with $\norm{a}_2 = \norm{b}_2 = \norm{c}_2 = 1$.
\begin{align*}
B(x,y,z) &= \sum_{i=1}^r G^{-1}_i (A^{1^\top}a)_i (A^{2^\top}b)_i ((\calZ A G)^{\top}c)_i \\
&\le \max_i G^{-1}_i \norm{A^{1^\top}a}_3 \norm{A^{2^\top}b}_3 \norm{(\calZ AG)^{\top}c}_3 \\
&\le \max_i G^{-1}_i \norm{A^{1^\top}}_{2 \rightarrow 3} \norm{a}_2 \norm{A^{2^\top}}_{2 \rightarrow 3} \norm{b}_2 \norm{(\calZ A G)^{\top}}_{2 \rightarrow 3} \norm{c}_2 \\
&\le \max_i G^{-1}_i \norm{A^{1^\top}}_{2 \rightarrow p} \norm{A^{2^\top}}_{2 \rightarrow p} \norm{(\calZ A G)^{\top}}_{2 \rightarrow p}  = O( \max_i G^{-1}_i)
\end{align*}
The first inequality uses Corollary 3 from \cite{AGM14}, which applies H\"older's inequality three times. The inequality on the following fact. For any matrix $M$, $\norm{M}_{2 \rightarrow 3} \le \norm{M}_{2 \rightarrow p}$ which follows from the definition of $\norm{\cdot}_{2 \rightarrow p}$ and $p < 3$. Finally, the second part of condition (S5) follows immediately as the columns of $A^1$ and $A^2$ are orthonormal. 
%

Therefore, we conclude that the tensor $B = A \times_3 \calZ$ satisfies assumptions (S1)-(S6) and we can apply robust tensor decomposition algorithm from \cite{AGM14}. As we can write $B$ as $B = \llbracket G^{-1}; A^1, A^2, \calZ A^3 G\rrbracket$, we get the following guarantees.
\begin{align*}
&\max\left\{ \norm{\widehat{A^1} - A^1}_F, \norm{\widehat{A^2} - A^2}_F,\norm{\widehat{\calZ A^3 G} - \calZ A^3 G}_F\right\} \le \tilde{O}\left( \frac{\sigma T D_1 r}{\sqrt{\lambda_{\min}(\calZ^\top \calZ)} \lambda_{\min}(\Sigma_y) \lambda_{\min}(\Sigma) \sqrt{N}}\right) \ \textrm{ and } \\
&\norm{\widehat{G^{-1}} - G^{-1}}_2 \le \tilde{O}\left(\frac{\sigma T D_1 r}{\lambda_{\min}(\Sigma_y)\lambda_{\min}(\Sigma) \sqrt{N}}\right)
\end{align*}

Since we also have an estimate of $G^{-1}$ we can estimate $\calZ A^3$ by $\widehat{\calZ A^3 G} \widehat{G^{-1}}$. Then we have the following guarantee.
\begin{align*}
    \norm{\widehat{\calZ A^3} - \calZ A^3}_F &= \norm{\widehat{\calZ A^3 G} \widehat{G^{-1}} - \calZ A^3 G G^{-1}}_F \\
    &= \norm{\widehat{\calZ A^3 G} \widehat{G^{-1}} - {\calZ A^3 G} \widehat{G^{-1}} + {\calZ A^3 G} \widehat{G^{-1}} - \calZ A^3 G G^{-1}}_F\\
    &\le \norm{\widehat{\calZ A^3 G}  - {\calZ A^3 G} }_F \norm{\widehat{G^{-1}}}_F + \norm{\calZ A^3 G}_F \norm{\widehat{G^{-1}} - G^{-1}}_2\\
    &\le \norm{\widehat{\calZ A^3 G}  - {\calZ A^3 G} }_F \left(\norm{\widehat{G^{-1}} - G^{-1}}_F + \norm{G^{-1}}_F\right) + \sqrt{r} \norm{\calZ A^3 G}_{\op} \norm{\widehat{G^{-1}} - G^{-1}}_2\\
    &= \tilde{O}\left(\frac{\sqrt{\kappa(\calZ^\top \calZ)}\sigma  T D_1 r^{1.5}}{ \lambda_{\min}(\Sigma_y) \lambda_{\min}(\Sigma) \sqrt{N}} \right) 
\end{align*}

\end{proof}

\section{Formal Statement and Proof of Theorem \ref{thm:meta-test-1}}

\begin{theorem}\label{thm:meta-test-I-formal}
Each covariate vector $X_i$ is mean-zero, satisfies $\E[X_iX_i^\top] = \Sigma$ and $\Sigma$-sub-gaussian, and  $\max\left\{\norm{\hat{A^1} - A^1}_F, \norm{\hat{A^2} - A^2}_F \right\} \le \delta$. Additionally, suppose that  $N_2 \ge O\left(r \left(\norm{Y_0}_2^2 \frac{\lambda_{\max}(\Sigma)}{\lambda_{\min}(\Sigma)}\right)^2 \log(2/\delta_1)\right)$, and $\abs{Y_0^\top \hat{A^2}_i} \ge \eta \norm{Y_0}_2$ for all $i \in [r]$. Then with probability at least $1-\delta_1$ we have
$$\E_{X_0}\left[ \left(A(X_0, Y_0, Z_0) - \hat{A}(X_0, Y_0, \widehat{Z}_0 \right)^2\right] = O\left(\frac{B_1}{\eta^2} r^2 \delta^2 + \frac{B_2}{\eta^2} \frac{r^2}{N_2} \right),$$
for $B_1 = \frac{\lambda_{\max}(\Sigma)}{\lambda_{\min}(\Sigma)} \E[\norm{X_0}_2^2]\norm{Y_0}_2^2  \norm{Z_0}_2^2$ and $B_2 = \frac{ \E[\norm{X_0}_2^2] }{\lambda_{\min}(\Sigma)}$.
\end{theorem}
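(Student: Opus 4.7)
The plan is to split the excess squared prediction error into a plug-in error (coming from substituting $\hat{A^1}, \hat{A^2}$ for $A^1, A^2$) and a task-specific estimation error (coming from the finite-sample regression estimate $\hat{\alpha} := \widehat{A^{3\T} Z_0}$ of $\alpha^* := A^{3\T} Z_0$), and then bound each. Define $u, \hat{u} \in \R^r$ coordinate-wise by $u_s := (Y_0^\T A^2_s)(X_0^\T A^1_s)$ and $\hat{u}_s := (Y_0^\T \hat{A^2}_s)(X_0^\T \hat{A^1}_s)$, so that the true noiseless response is $u^\T \alpha^*$ and the prediction is $\hat{u}^\T \hat{\alpha}$. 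Adding and subtracting $u^\T \hat{\alpha}$ gives
\begin{equation*}
u^\T \alpha^* - \hat{u}^\T \hat{\alpha} = u^\T(\alpha^* - \hat{\alpha}) + (u - \hat{u})^\T \hat{\alpha},
\end{equation*}
so by $(a+b)^2 \le 2a^2 + 2b^2$ it suffices to control $\E_{X_0}[(u^\T(\alpha^*-\hat{\alpha}))^2]$ and $\E_{X_0}[((u-\hat{u})^\T\hat{\alpha})^2]$ separately.

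For the first term, Cauchy--Schwarz conditional on the meta-training data gives $\E_{X_0}[(u^\T(\alpha^*-\hat{\alpha}))^2] \le \norm{\alpha^*-\hat{\alpha}}_2^2 \cdot \E_{X_0}[\norm{u}_2^2]$, and $\E_{X_0}[\norm{u}_2^2] = \sum_s (Y_0^\T A^2_s)^2 A^{1\T}_s \Sigma A^1_s \le \norm{Y_0}_2^2 \E[\norm{X_0}_2^2]$ using orthonormality of the columns of $A^1, A^2$. For the second term, I would expand each coordinate $(u - \hat{u})_s = Y_0^\T(A^2_s - \hat{A^2}_s)(X_0^\T A^1_s) + (Y_0^\T \hat{A^2}_s)(X_0^\T (A^1_s - \hat{A^1}_s))$, collect the $X_0$-dependence into a single form $X_0^\T p$, and bound $\norm{p}_2$ using the assumptions $\norm{A^k - \hat{A^k}}_F \le \delta$, $\norm{\hat{A^2}}_\op \le 1+\delta$, and $\norm{\hat{\alpha}}_\infty \le \norm{\hat{\alpha}}_2$; the resulting bound has the form $O\bigl(\lambda_{\max}(\Sigma)\,\delta^2 \norm{Y_0}_2^2\,\norm{\hat{\alpha}}_2^2\bigr)$.

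The heart of the proof is bounding $\norm{\hat{\alpha} - \alpha^*}_2^2$. Writing $V := \calX_0\hat{A^1} \operatorname{diag}(Y_0^\T \hat{A^2}) \in \R^{N_2\times r}$ and $V^* := \calX_0 A^1 \operatorname{diag}(Y_0^\T A^2)$, we have $R = V^* \alpha^* + \bmeps$, so the closed-form least-squares solution from~(\ref{eq:meta-test-1}) gives
\begin{equation*}
\hat{\alpha} - \alpha^* = (V^\T V)^{-1} V^\T \bigl((V^* - V)\alpha^* + \bmeps\bigr).
\end{equation*}
The variance term has expected squared norm $\sigma^2 \operatorname{Tr}((V^\T V)^{-1}) \le \sigma^2 r / \lambda_{\min}(V^\T V)$, and the bias term is controlled by expanding $V^* - V$ exactly as in the plug-in step and bounding $\norm{(V^* - V)\alpha^*}_2^2 \lesssim N_2 \lambda_{\max}(\Sigma)\delta^2 \norm{\alpha^*}_2^2 \le N_2\lambda_{\max}(\Sigma)\delta^2\norm{Z_0}_2^2$ via standard concentration of $\calX_0^\T\calX_0/N_2$ around $\Sigma$. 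The principal technical obstacle is the high-probability lower bound $\lambda_{\min}(V^\T V) \gtrsim N_2 \eta^2 \norm{Y_0}_2^2 \lambda_{\min}(\Sigma)$: factor $V^\T V = \operatorname{diag}(Y_0^\T \hat{A^2})\, \hat{A^{1\T}}(\calX_0^\T \calX_0)\hat{A^1}\, \operatorname{diag}(Y_0^\T \hat{A^2})$, apply a subgaussian covariance-concentration inequality to the $r \times r$ matrix $\hat{A^{1\T}}(\calX_0^\T \calX_0/N_2)\hat{A^1}$ (this is precisely where the sample-complexity hypothesis $N_2 \gtrsim r(\norm{Y_0}_2^2 \lambda_{\max}(\Sigma)/\lambda_{\min}(\Sigma))^2$ is used to get the required failure probability $\delta_1$), and then invoke $|Y_0^\T \hat{A^2_i}| \ge \eta \norm{Y_0}_2$ via the diagonal factors. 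Finally I would bound $\norm{\hat{\alpha}}_2 \le \norm{\alpha^*}_2 + \norm{\hat{\alpha} - \alpha^*}_2 \lesssim \norm{Z_0}_2$, combine the pieces, and carefully track constants to produce the stated $B_1 r^2 \delta^2/\eta^2 + B_2 r^2/(\eta^2 N_2)$; the second factor of $r$ in $r^2$ arises from the conversion between coordinatewise and $\ell_2$ norms of $\hat{\alpha}$ in the plug-in bound, while the first $r$ comes from the trace bound on the variance term.
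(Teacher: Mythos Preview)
Your proposal is correct and follows essentially the same route as the paper: the same add-and-subtract decomposition into a plug-in term $(u-\hat u)^\T\hat\alpha$ and an estimation term $u^\T(\hat\alpha-\alpha^*)$, the same bias--variance split $\hat\alpha-\alpha^*=(\VV^\T\VV)^{-1}\VV^\T\bigl((V^*-V)\alpha^*+\bmeps\bigr)$, and the same pivotal lower bound $\lambda_{\min}(\VV^\T\VV)\gtrsim N_2\eta^2\norm{Y_0}_2^2\lambda_{\min}(\Sigma)$ obtained by projecting the empirical covariance onto the $r$-dimensional column space of $\hat{A^1}$.

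Two small cautions. First, your phrase ``standard concentration of $\calX_0^\T\calX_0/N_2$ around $\Sigma$'' in the bias step is too casual: with only $N_2\gtrsim r$ (not $N_2\gtrsim d_1$) the full $d_1\times d_1$ empirical covariance does not concentrate, so you must restrict to the $\le 2r$-dimensional span of the columns of $A^1,\hat{A^1}$, exactly as you already do for the $\lambda_{\min}$ bound (the paper packages this as a separate lemma on $U^\T(\calX_0^\T\calX_0/N_2)U$ for rank-$r$ orthogonal $U$). Second, your bookkeeping for the $r^2$ is a bit off: in the paper one factor of $r$ comes from the crude bound $\E_{X_0}[\norm{u}_2^2]\le r\norm{Y_0}_2^2\E[\norm{X_0}_2^2]$ (your sharper bound without the $r$ is also fine and simply proves something stronger), and the other from $\trace((\VV^\T\VV)^{-1})\le r\norm{(\VV^\T\VV)^{-1}}_\op$ together with $\norm{A^{3\T}Z_0}_2^2\le r\norm{Z_0}_2^2$ in the bias term; the paper also bounds $\norm{\hat\alpha}_2^2$ directly from the closed form (picking up an $r/\eta^2$) rather than via the triangle inequality as you do. None of these differences affect correctness.
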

\begin{proof}
Mean squared error is given as
\begin{align}
    &\E_{X_0}\left[ \left( \hat{A}(X_0, Y_0, \hat{Z}_0) - A(X_0, Y_0, Z_0)\right)^2\right] \nonumber
   \\ &= \E_{X_0}\left[\left(  (Y_0^\top \hat{A^2} \odot X_0^\top  \hat{A^1}) \widehat{{A^3}^\top Z_0} - (Y_0^\top A^2 \odot X_0^\top  A^1){A^3}^\top Z_0\right)^2 \right]\label{eq:inter_meta_test}
\end{align}
We will write $u \in \R^r$ to denote the vector $(Y_0^\top A^2 \odot X_0 A^1)$ and $\hat{u}$ to denote its estimate $(Y_0^\top \hat{A^2} \odot X_0 \hat{A^1})$. 
\begin{align*}
&\E_{X_0}\left[\left(  \hat{u}^\top \widehat{{A^3}^\top Z_0} - u^\top {A^3}^\top Z_0\right)^2 \right] = E_{X_0}\left[\left(  (\hat{u} - u)^\top \widehat{{A^3}^\top Z_0}+ u^\top (\widehat{{A^3}^\top Z_0} - {A^3}^\top Z_0)\right)^2 \right] \\
&\le 2 \E_{X_0} [\norm{\hat{u} - u}_2^2] \norm{\widehat{{A^3}^\top Z_0}}_2^2 + 2 \E_{X_0}[\norm{u}_2^2]  \norm{\widehat{{A^3}^\top Z_0} - {A^3}^\top Z_0}_2^2 \\
\end{align*}
Now, $\norm{u}_2^2 = \sum_{i=1}^r (Y_0^\top A^2_i)^2 (X_0^\top A^1_i)^2 \le \sum_{i=1}^r \norm{Y_0}_2^2 \norm{A^2_i}_2^2 \norm{X_0}_2^2 \norm{A^1_i}_2^2 = r \norm{Y_0}_2^2 \norm{X_0}_2^2$. As $X_0$ is drawn from a zero-mean, $\Sigma$-subgaussian distribution, we have $\E[\norm{u}_2^2] = O(\norm{Y_0}_2^2  r \E[\norm{X_0}_2^2])$. Moreover,
\begin{align*}
\norm{\hat{u} - u}_2^2 &= \sum_{i=1}^r \left[(Y_0^\top A^2_i) (X_0^\top A^1_i) - (Y_0^\top \hat{A^2_i}) (X_0^\top \hat{A^1_i})  \right]^2\\
&= \sum_{i=1}^r \left[Y_0^\top A^2_i (X_0^\top A^1_i - X_0^\top \hat{A^1_i})  + X_0^\top \hat{A^1_i}(Y_0^\top A^2_i - Y_0^\top \hat{A^2_i})\right]^2 \\
&\le 2 \sum_{i=1}^r (Y_0^\top A^2_i)^2 (X_0^\top A^1_i - X_0^\top \hat{A^1_i})^2 + 2 \sum_{i=1}^r (X_0^\top \hat{A^1_i})^2(Y_0^\top A^2_i - Y_0^\top \hat{A^2_i})^2 \\
&\le 2 \sum_{i=1}^r \norm{Y_0}_2^2 \norm{A^2_i}_2^2 \norm{X_0}_2^2 \norm{A^1_i \hat{A^1_i}}_2^2 + 2 \sum_{i=1}^r \norm{X_0}_2^2 \norm{\hat{A^1_i}}_2^2 \norm{Y_0}_2^2 \norm{\hat{A^2_i} - A^2_i}_2^2 \\
&= 2 \norm{X_0}_2^2 \norm{Y_0}_2^2 \left(\norm{A^1 - \hat{A^1}}_F^2 + \norm{A^2 - \hat{A^2}}_F^2\right) \\
&\le 4 \norm{X_0}_2^2 \norm{Y_0}_2^2{\delta^2}
\end{align*}
Therefore, $\E[\norm{\hat{u} - u}_2^2] = O(\norm{Y_0}_2^2 \E[\norm{X_0}_2^2] \delta^2 )$.  

This gives us a bound of 
\begin{equation}\label{eq:mse-upper-bound}
O\left(\norm{Y_0}_2^2  \E[\norm{X_0}_2^2] \left({\delta^2} \norm{\widehat{{A^3}^\top Z_0}}_2^2 + r\norm{\widehat{{A^3}^\top Z_0} - {A^3}^\top Z_0}_2^2 \right) \right)
\end{equation} 
on the mean-squared error. We first bound $\norm{\widehat{{A^3}^\top Z_0} - {A^3}^\top Z_0}_2^2$.
Recall that if we write $\VV = (Y_0^\top \hat{A^2} \odot \calX_0 \hat{A^1})$, then we can write $\widehat{{A^3}^\top Z_0}$ as $\left( \VV^\top  \VV  \right)^{-1} \VV^\top  R$. 

\begin{align*}
 \widehat{{A^3}^\top Z_0} - {A^3}^\top Z_0 &= \left( \VV^\top  \VV  \right)^{-1} \VV^\top  R  - {A^3}^\top Z_0 \\
&=  \left( \VV^\top  \VV  \right)^{-1}\VV^\top (V{A^3}^\top Z_0 +\bmeps)  - {A^3}^\top Z_0 \\
&= \left( \VV^\top  \VV  \right)^{-1}\VV^\top \bmeps \\&+ \left( \VV^\top  \VV  \right)^{-1} \VV^\top V{A^3}^\top Z_0  - {A^3}^\top Z_0 
\end{align*}
Lemmas \ref{lem:bias-meta-test-I} and \ref{lem:variance-meta-test-I} respectively bound the bias and the variance term. 
Substituting these bounds we get $\norm{\widehat{{A^3}^\top {Z}_0} - {A^3}^\top Z_0}_2^2 = O\left(\frac{C_1}{\eta^2} \frac{r}{ N_2} + \frac{C_2}{\eta^2} {r \delta^2} \right)$ for $C_1 = \frac{1}{\norm{Y_0}_2^2 \lambda_{\min}(\Sigma)}$ and $C_2 =  \frac{\lambda_{\max}(\Sigma) }{\lambda_{\min}(\Sigma)}\norm{Z_0}_2^2$. We now consider the remaining term $\norm{\hat{Z_0}}_2^2$ in the upper bound on MSE (\cref{eq:mse-upper-bound}).
\begin{align*}
&\norm{\widehat{{A^3}^\top {Z}_0}}_2^2 = \norm{\left(\VV^\top  \VV  \right)^{-1}\VV^\top  R}_2^2 =  \norm{\left( \VV^\top  \VV  \right)^{-1} \VV^\top  (V {A^3}^\top Z_0 + \bmeps)}_2^2 \\
&\le 2 \norm{\left(\VV^\top  \VV \right)^{-1}\VV^\top \bmeps}_2^2 + 2\norm{\left(\VV^\top  \VV  \right)^{-1} \VV^\top  V {A^3}^\top Z_0}_2^2 
\end{align*}
The first term can be bounded by $\frac{C_1}{\eta^2} \frac{r}{N_2}$ by lemma \ref{lem:bias-meta-test-I}. The second term can be bounded as follows.
\begin{align*}
&\norm{\left( \VV^\top  \VV  \right)^{-1} \VV^\top  V {A^3}^\top Z_0}_2^2  \\
&\le \norm{(\VV^\top \VV)^{-1}\VV^\top}_\op^2 \norm{V}_\op^2 \norm{{A^3}^\top Z_0}_2^2 
\\
&\le \norm{(\VV^\top \VV)^{-1}}_\op O(\norm{Y_0}_2^2  N_2 \lambda_{\max}(\Sigma)) r\norm{Z_0}_2^2 \\ &\left[\because \norm{(\VV^\top \VV)^{-1}\VV^\top}_\op^2 = \norm{\VV(\VV^\top \VV)^{-2}\VV^\top}_\op = \norm{(\VV^\top \VV)^{-1}}_\op \textrm{ and \cref{lem:bound-vv-norm}} \right] \\
&= O\left(\frac{ 1}{N_2 \eta^2 \norm{Y_0}_2^2 \lambda_{\min}(\Sigma) } \right) O(\norm{Y_0}_2^2  N_2 \lambda_{\max}(\Sigma)) r \norm{Z_0}_2^2 \\
&= O\left(C_2 \frac{r}{\eta^2} \right) \quad \textrm{for } C_2 = \frac{\lambda_{\max}(\Sigma)}{\lambda_{\min}(\Sigma)} \norm{Z_0}_2^2
\end{align*}
Therefore, we have bound $\norm{\widehat{{A^3}^\top Z_0}}_2^2$ by $\frac{C_1}{\eta^2} \frac{r}{N_2} + C_2\frac{r}{\eta^2}$. Substituting the upper bounds on $\norm{\widehat{{A^3}^\top Z_0}}_2^2$ and $\norm{\widehat{{A^3}^\top \hat{Z}_0} - {A^3}^\top Z_0}_2^2$ in equation \ref{eq:mse-upper-bound} establishes the desired bound.
\end{proof}

\begin{lemma}\label{lem:bias-meta-test-I}
Each covariate vector $X_i$ is mean-zero, satisfies $\E[X_iX_i^\top] = \Sigma$ and $\Sigma$-sub-gaussian. Additionally, suppose that$N_2 \ge O\left(r \left(\frac{\norm{Y_0}_2^2}{\eta^2} \frac{\lambda_{\max}(\Sigma)}{\lambda_{\min}(\Sigma)}\right)^2 \log(2/\delta_1)\right)$, and $\abs{Y_0^\top \hat{A^2}_i} \ge \eta \norm{Y_0}_2$ for all $i \in [r]$. Then with probability at least $1-\delta_1$ we have
$$
\norm{ \left( \VV^\top  \VV  \right)^{-1} \VV^\top \bmeps }_2^2 \le  \tilde{O}\left( \frac{r }{N_2 \eta^2 \norm{Y_0}_2^2 \lambda_{\min}(\Sigma)} \right)
$$
\end{lemma}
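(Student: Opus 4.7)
The plan is to reduce $\norm{(\VV^\T \VV)^{-1}\VV^\T \bmeps}_2^2$ to the product of a chi-squared tail bound and $\lambda_{\min}(\VV^\T \VV)^{-1}$, and then exploit the Khatri-Rao structure of $\VV$ to lower bound its smallest eigenvalue.

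First I would write the thin SVD $\VV = U S \Psi^\T$ with $U \in \R^{N_2 \times r}$ having orthonormal columns. Direct computation gives $(\VV^\T \VV)^{-1}\VV^\T = \Psi S^{-1} U^\T$, so
\[
\norm{(\VV^\T \VV)^{-1}\VV^\T \bmeps}_2^2 \;\le\; \frac{\norm{U^\T \bmeps}_2^2}{\lambda_{\min}(\VV^\T \VV)}.
\]
Conditional on $X_1,\ldots,X_{N_2}$ and on the meta-training data (so that $U$ is a deterministic matrix with orthonormal columns), $U^\T \bmeps \sim \Normal(0, I_r)$ because $\bmeps \sim \Normal(0, I_{N_2})$. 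The Laurent--Massart chi-squared tail bound then gives $\norm{U^\T \bmeps}_2^2 \le r + 2\sqrt{r\log(4/\delta_1)} + 2\log(4/\delta_1) = \tilde O(r)$ with probability at least $1 - \delta_1/2$.

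The main work is the lower bound on $\lambda_{\min}(\VV^\T \VV)$. The $i$-th row of $\VV$ satisfies
\[
\VV_{i,:} \;=\; D \, (\hat{A^1})^\T X_i, \qquad D \;=\; \mathrm{diag}\bigl(Y_0^\T \hat{A^2_j}\bigr)_{j=1}^{r},
\]
so $\VV^\T \VV = D (\hat{A^1})^\T \bigl(\sum_i X_i X_i^\T\bigr) \hat{A^1} D$. The hypothesis $\abs{Y_0^\T \hat{A^2_j}} \ge \eta \norm{Y_0}_2$ gives $\lambda_{\min}(D^2) \ge \eta^2 \norm{Y_0}_2^2$; since the columns of $A^1$ are orthonormal and $\norm{\hat A^1 - A^1}_F \le \delta$, Weyl's inequality gives $\sigma_{\min}(\hat A^1) \ge \tfrac{1}{2}$ for $\delta$ small enough. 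The projected vectors $(\hat{A^1})^\T X_i \in \R^r$ are mean-zero and sub-Gaussian with proxy covariance $(\hat{A^1})^\T \Sigma \hat{A^1}$, so a standard sub-Gaussian covariance-concentration bound (e.g., Vershynin Theorem 4.6.1) yields
\[
\frac{1}{N_2}\sum_i (\hat{A^1})^\T X_i X_i^\T \hat{A^1} \;\succeq\; \tfrac{1}{2} \, (\hat{A^1})^\T \Sigma \hat{A^1}
\]
with probability at least $1 - \delta_1/2$, provided $N_2$ meets the stated threshold. Using $\lambda_{\min}(DMD) \ge \lambda_{\min}(D^2)\lambda_{\min}(M)$ for diagonal $D$ and PSD $M$ then gives $\lambda_{\min}(\VV^\T \VV) \ge \Omega\bigl(N_2 \eta^2 \norm{Y_0}_2^2 \lambda_{\min}(\Sigma)\bigr)$; combining this with the chi-squared bound via a union bound produces the claimed $\tilde O\bigl(r/(N_2 \eta^2 \norm{Y_0}_2^2 \lambda_{\min}(\Sigma))\bigr)$.

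The main technical obstacle is calibrating the covariance-concentration radius so that the $N_2$ threshold matches the stated hypothesis: the Vershynin-type deviation scales with $\lambda_{\max}\bigl(D(\hat A^1)^\T \Sigma \hat A^1 D\bigr) \lesssim \norm{Y_0}_2^2 \lambda_{\max}(\Sigma)$, and forcing this below $\tfrac{1}{2}\lambda_{\min}\bigl(D(\hat A^1)^\T \Sigma \hat A^1 D\bigr) \gtrsim \eta^2 \norm{Y_0}_2^2 \lambda_{\min}(\Sigma)$ is exactly what produces the $r\bigl(\norm{Y_0}_2^2 \kappa(\Sigma)/\eta^2\bigr)^2$ sample-complexity factor. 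A secondary subtlety is the control of $\sigma_{\min}(\hat A^1)$, which folds into a constant using the Frobenius bound from the meta-training phase.
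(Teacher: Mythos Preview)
Your proposal is correct and follows essentially the same route as the paper: both arguments reduce to a lower bound on $\lambda_{\min}(\VV^\T \VV)$ obtained by writing $\VV^\T \VV = D(\hat{A^1})^\T \calX_0^\T \calX_0 \hat{A^1} D$ and applying sub-Gaussian covariance concentration (this is exactly the paper's Lemma~\ref{lem:invertible}). The only notable difference is in how the noise term is handled: you take the thin SVD and bound $\norm{U^\T \bmeps}_2^2$ by a chi-squared tail, whereas the paper writes $\norm{(\VV^\T\VV)^{-1}\VV^\T\bmeps}_2^2 = \bmeps^\T M \bmeps$ with $M = \VV(\VV^\T\VV)^{-2}\VV^\T$, applies Hanson--Wright, and then bounds $\trace(M)$, $\norm{M}_F$, $\norm{M}_\op$ by $r$, $\sqrt{r}$, $1$ times $\norm{(\VV^\T\VV)^{-1}}_\op$ respectively. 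Your chi-squared route is slightly more elementary since it exploits the Gaussianity of $\bmeps$ directly; the paper's Hanson--Wright argument would also cover sub-Gaussian noise, but both yield the same $\tilde O(r/\lambda_{\min}(\VV^\T\VV))$ bound.
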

\begin{proof}
The bias term is given as $\norm{ (\VV^\top \VV)^{-1}\VV^\top \bmeps}_2^2 = \bmeps^\top \underbrace{\VV (\VV^\top \VV)^{-2}\VV^\top}_{:=M} \bm{\varepsilon}$. By the Hanson-Wright inequality (\cite{Vershynin18}, lemma 6.2.1) we have $$P\left(\abs{\bmeps^\top M \bmeps - E[\bmeps^\top M \bmeps]} \ge t\right) \le 2 \exp\left(-c\min\left(\frac{t^2}{\norm{M}^2_F}, \frac{t}{\norm{M}_\op}\right) \right).$$
Therefore, we have $\bmeps^\top M \bmeps \le E[\bmeps^\top M \bmeps] + O\left(\norm{M}_F \sqrt{\log(2/\delta_1)} \right) + O\left(\norm{M}_\op \log(2/\delta_1) \right)$ with probability at least $1-\delta_1/2$. From the singular value decomposition of $\VV$, it is easy to see that $\norm{M}_\op = \norm{\VV (\VV^\top \VV)^{-2}\VV^\top}_\op = \norm{(\VV^\top \VV)^{-1}}_\op$. Moreover, lemma \ref{lem:invertible} proves that with probability at least $1-\delta_1/2$, the matrix $M$ is invertible and $\norm{M}_\op \le  O\left( \frac{1 }{N_2 \eta^2 \norm{Y_0}_2^2 \lambda_{\min}(\Sigma)} \right)$ as long as $N_2 \ge O\left(r \left(\frac{\norm{Y_0}_2^2}{\eta^2} \frac{\lambda_{\max}(\Sigma)}{\lambda_{\min}(\Sigma)}\right)^2 \log(2/\delta_1)\right)$.

Since $\rank(M) \le \rank(\hGamma) \le r$, we have $\norm{M}_F \le \sqrt{r} \norm{M}_\op \le  O\left( \frac{\sqrt{r} }{N_2 \eta^2 \norm{Y_0}_2^2 \lambda_{\min}(\Sigma)} \right)$. By a similar argument we get $\E[\bmeps^\top M \bmeps] = \textrm{Tr}(M) \le r \norm{M}_{\op} \le  O\left( \frac{r }{N_2 \eta^2 \norm{Y_0}_2^2 \lambda_{\min}(\Sigma)} \right)$. This gives us $\bmeps^\top M \bmeps \le  \tilde{O}\left( \frac{r }{N_2 \eta^2 \norm{Y_0}_2^2 \lambda_{\min}(\Sigma)} \right)$.
\end{proof}

\begin{lemma}\label{lem:variance-meta-test-I}
Each covariate vector $X_i$ is mean-zero, satisfies $\E[X_iX_i^\top] = \Sigma$ and $\Sigma$-sub-gaussian. Additionally, assume that $\max\left\{\norm{\hat{A^1} - A^1}_F, \norm{\hat{A^2} - A^2}_F \right\} \le \delta$, and $\sin \theta(A^3, \hat{A^3}) \le \delta \sqrt{r}$. If $N_2 \ge O\left(r \left(\frac{\norm{Y_0}_2^2}{\eta^2} \frac{\lambda_{\max}(\Sigma)}{\lambda_{\min}(\Sigma)}\right)^2 \log(2/\delta_1)\right)$, and $\abs{Y_0^\top \hat{A^2}_i} \ge \eta \norm{Y_0}_2$ for all $i \in [r]$, then with probability at least $1-\delta_1$ we have
$$
\norm{ \left( \VV^\top  \VV  \right)^{-1} \VV^\top V{A^3}^\top Z_0  - {A^3}^\top Z_0}_2^2 = O\left( \frac{ \lambda_{\max}(\Sigma)}{\eta^2 \lambda_{\min}(\Sigma)} \norm{Z_0}_2^2 r \delta^2 \right)
$$
\end{lemma}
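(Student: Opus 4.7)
The plan is to reduce everything to an identity of the form $(\VV^\T \VV)^{-1} \VV^\T(V - \VV){A^{3\T} Z_0}$ and then control the two factors separately. Since $(\VV^\T\VV)^{-1} \VV^\T \VV = I_r$, one can rewrite
\[
\bigl(\VV^\T \VV\bigr)^{-1} \VV^\T V {A^{3\T} Z_0} - {A^{3\T} Z_0} \;=\; \bigl(\VV^\T \VV\bigr)^{-1} \VV^\T (V - \VV)\,{A^{3\T} Z_0}.
\]
By operator–norm submultiplicativity, the squared norm is at most $\norm{(\VV^\T \VV)^{-1} \VV^\T}_\op^2 \,\norm{(V - \VV)\,{A^{3\T} Z_0}}_2^2$, and a singular-value argument shows the first factor equals $\norm{(\VV^\T \VV)^{-1}}_\op$.

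For the first factor, I would invoke \Cref{lem:invertible}, which under the stated hypotheses $|Y_0^\T \hat{A^2}_i| \ge \eta\norm{Y_0}_2$ and $N_2 \ge O\bigl(r\,(\norm{Y_0}_2^2 \lambda_{\max}/\lambda_{\min})^2\log(2/\delta_1)\bigr)$ yields $\norm{(\VV^\T \VV)^{-1}}_\op \le O\!\left(\frac{1}{N_2\,\eta^2 \norm{Y_0}_2^2\,\lambda_{\min}(\Sigma)}\right)$ with probability $1 - \delta_1$. This is exactly the same bound used in \Cref{lem:bias-meta-test-I}, and it is the only place the $\eta$ assumption enters.

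For the second factor I would exploit the matrix structure $V - \VV = \calX_0 M$, where $\calX_0 \in \R^{N_2 \times d_1}$ stacks the $X_i$'s and
\[
M \;=\; A^1 \operatorname{diag}(Y_0^\T A^2) - \hat{A^1}\operatorname{diag}(Y_0^\T \hat{A^2}) \;\in\; \R^{d_1 \times r}.
\]
Adding and subtracting $\hat{A^1} \operatorname{diag}(Y_0^\T A^2)$ splits $M$ into $(A^1 - \hat{A^1})\operatorname{diag}(Y_0^\T A^2) + \hat{A^1}\operatorname{diag}\bigl(Y_0^\T(A^2 - \hat{A^2})\bigr)$. Each diagonal factor has operator norm at most $\norm{Y_0}_2$ times the largest entry of $\{|Y_0^\T A^2_s|\}_s$ or $\{|Y_0^\T(A^2 - \hat{A^2})_s|\}_s$ respectively; together with $\norm{A^1 - \hat{A^1}}_\op \le \norm{A^1 - \hat{A^1}}_F \le \delta$, $\norm{\hat{A^1}}_\op \le 1 + \delta$, and columns of $A^2$ being unit-norm, a short calculation gives $\norm{M}_\op = O(\norm{Y_0}_2 \delta)$. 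A cruder Frobenius bound $\norm{M}_F \le O(\sqrt{r}\,\norm{Y_0}_2 \delta)$ recovers the $r$ factor asserted in the lemma.

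It remains to bound $\norm{\calX_0 M \beta}_2$ for $\beta = {A^3}^\T Z_0$. Since $\norm{\beta}_2 \le \norm{Z_0}_2$ (columns of $A^3$ are orthonormal), one has $\norm{\calX_0 M \beta}_2 \le \norm{\calX_0}_\op \norm{M}_F \norm{\beta}_2$, and a standard $\Sigma$-subgaussian concentration argument (e.g.\ a Bernstein-type bound on $\sum_i X_i X_i^\T$) yields $\norm{\calX_0}_\op^2 \le O(N_2 \lambda_{\max}(\Sigma))$ under the sample-size assumption, on an event of probability $1 - \delta_1/2$. Multiplying the three estimates together and absorbing constants gives the claimed
\[
\norm{(\VV^\T\VV)^{-1}\VV^\T V {A^{3\T}Z_0} - {A^{3\T}Z_0}}_2^2 \;=\; O\!\left(\frac{\lambda_{\max}(\Sigma)}{\eta^2 \lambda_{\min}(\Sigma)}\,\norm{Z_0}_2^2\, r\, \delta^2\right).
\]
The main obstacle is the last step: establishing $\norm{\calX_0}_\op = O(\sqrt{N_2 \lambda_{\max}(\Sigma)})$ with the required probability for an arbitrary $\Sigma$-subgaussian distribution (not just Gaussian) under the stated sample-size condition, and ensuring this event coexists with the $\norm{(\VV^\T\VV)^{-1}}_\op$ bound from \Cref{lem:invertible}, which needs a careful union bound so the final high-probability statement matches $1 - \delta_1$.
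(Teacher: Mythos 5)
Your decomposition $(\VV^\T\VV)^{-1}\VV^\T V A^{3\T}Z_0 - A^{3\T}Z_0 = (\VV^\T\VV)^{-1}\VV^\T(V - \VV)A^{3\T}Z_0$ and the treatment of $\norm{(\VV^\T\VV)^{-1}}_\op$ via \Cref{lem:invertible} are exactly what the paper does, and the bookkeeping for $M = A^1\operatorname{diag}(Y_0^\T A^2) - \hat{A^1}\operatorname{diag}(Y_0^\T\hat{A^2})$ mirrors the paper's Lemma~\ref{lem:esitmate-v-error} after unwinding the Khatri--Rao notation.

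The step you flag as the ``main obstacle'' is, however, a genuine gap rather than a bookkeeping issue: the inequality $\norm{\calX_0}_\op^2 \le O(N_2\lambda_{\max}(\Sigma))$ is simply false in the relevant regime. For $N_2$ i.i.d.\ $\Sigma$-subgaussian rows, $\norm{\calX_0}_\op \asymp (\sqrt{N_2} + \sqrt{d_1})\sqrt{\lambda_{\max}(\Sigma)}$, so when $N_2 \ll d_1$ (which is precisely the few-shot regime the lemma is built for, with only $N_2 \gtrsim r\cdot\text{polylog}$ assumed) your bound is off by a factor of order $\sqrt{d_1/N_2}$, and no union bound will rescue it. Decoupling $\norm{\calX_0 M\beta}_2 \le \norm{\calX_0}_\op\norm{M}\norm{\beta}_2$ throws away the fact that $M$ has only $r$ columns. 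The paper avoids this by bounding $\norm{V-\VV}_\op = \norm{\calX_0 M}_\op$ \emph{jointly} in Lemma~\ref{lem:esitmate-v-error}: it writes $\norm{\calX_0 U}_\op^2 = N_2\lambda_{\max}\bigl(U^\T(\tfrac{1}{N_2}\calX_0^\T\calX_0)U\bigr)$ for a $d_1\times r$ matrix $U$ with $\norm{U}_\op \lesssim \norm{Y_0}_2\delta$, and then applies the projected-covariance concentration bound (Lemma~\ref{lem:concentration-proj-covariance}) to the $r\times r$ matrix $U^\T\tfrac{1}{N_2}\calX_0^\T\calX_0 U$; that concentration only requires $N_2 \gtrsim r + \log(1/\delta_1)$. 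You should therefore not factor off $\norm{\calX_0}_\op$; instead apply the same low-rank-projection concentration to $\calX_0 M$ directly. Once that substitution is made, the rest of your argument goes through and recovers the stated bound (your tighter $\norm{A^{3\T}Z_0}_2 \le \norm{Z_0}_2$ combined with $\norm{M}_F \le O(\norm{Y_0}_2\delta)$ would in fact save a factor of $r$ over the lemma's stated bound, which loosely uses $\norm{A^{3\T}Z_0}_2^2 \le r\norm{Z_0}_2^2$).
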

\begin{proof}
Our proof resembles the proof of Lemma 19 of \cite{TJJ20}, but there are some important differences. First note that, by lemma \ref{lem:esitmate-v-error} we can  write $V = \VV + E_V$ for a matrix $E_V$ with $\norm{E_V}_\op \le O(\norm{Y_0}_2 \sqrt{ N_2 \lambda_{\max}(\Sigma)} {\delta})$. This gives us the following bound on the variance.
\begin{align}
&\norm{ \left(\VV^\top  \VV \right)^{-1} \VV^\top V{A^3}^\top Z_0  - {A^3}^\top Z_0}_2^2 \nonumber \\
&= \norm{ \left( \VV^\top  \VV \right)^{-1} \VV^\top \VV {A^3}^\top Z_0  - {A^3}^\top Z_0 
+  \left( \VV^\top  \VV  \right)^{-1} \VV^\top E_V {A^3}^\top Z_0}_2^2 \nonumber \\
&= \norm{\left( \VV^\top  \VV  \right)^{-1} \VV^\top E_V {A^3}^\top Z_0}_2^2 \nonumber \\
&\le \norm{\left( \VV^\top  \VV  \right)^{-1} \VV^\top}_{\op}^2 \norm{E_V}_{\op}^2 \norm{{A^3}^\top Z_0}_2^2 \nonumber \\
&\le \norm{\VV \left( \VV^\top  \VV  \right)^{-2} \VV^\top}_{\op} O(\norm{Y_0}_2^2 { N_2 \lambda_{\max}(\Sigma)} {\delta^2}) r \norm{Z_0}_2^2 \label{eq:inter-term}
\end{align}
The last line uses $\norm{{A^3}^\top Z_0}_2^2 = \sum_{i=1}^r ({A^3}^\top_i Z_0)^2 \le \sum_{i=1}^r \norm{A^3}_2^2 \norm{Z_0}_2^2 = r \norm{Z_0}_2^2$. Now $\norm{\VV \left( \VV^\top  \VV  \right)^{-2} \VV^\top}_{\op} = \norm{ \left( \VV^\top  \VV  \right)^{-1} }_{\op}$ and lemma \ref{lem:invertible} proves that with probability at least $1-\delta_1/2$, the matrix $\VV^\top \VV$ is invertible and $\norm{\left(\VV^\top \VV\right)^{-1}}_\op \le  O\left( \frac{1 }{N_2 \eta^2 \norm{Y_0}_2^2 \lambda_{\min}(\Sigma)} \right)$ as long as $N_2 \ge O\left(r \left(\frac{\norm{Y_0}_2^2}{\eta^2} \frac{\lambda_{\max}(\Sigma)}{\lambda_{\min}(\Sigma)}\right)^2 \log(2/\delta_1)\right)$. Substituting the upper bound on the operator norm of $\left(\VV^\top \VV \right)^{-1}$ gives the desired bound.

\end{proof}

\begin{lemma}\label{lem:bound-vv-norm}
 If $N_2 \ge O(r \log(1/\delta_1))$ then we have
$$\norm{\VV}_\op \le O\left(\norm{Y_0}_2 \sqrt{ N_2 \lambda_{\max}(\Sigma)}\right)$$
with probability at least $1-\delta_1$.
\end{lemma}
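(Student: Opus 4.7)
The plan is to exploit the Khatri--Rao structure to write $\VV$ as a product whose operator norm factors nicely, then invoke a standard subgaussian concentration bound for matrices with independent rows.

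The first step is a structural decomposition. By the definition of the Khatri--Rao product recalled in the preliminaries, the $i$-th column of $\VV$ is $(Y_0^\T \hat{A^2}_i)\otimes (\calX_0 \hat{A^1}_i)=(Y_0^\T \hat{A^2}_i)\cdot \calX_0 \hat{A^1}_i$, since $Y_0^\T \hat{A^2}_i$ is a scalar. Hence
\[
\VV \;=\; \calX_0 \hat{A^1}\, D, \qquad D := \operatorname{diag}\bigl(Y_0^\T \hat{A^2}_1,\dots,Y_0^\T \hat{A^2}_r\bigr),
\]
and therefore $\norm{\VV}_\op \le \norm{\calX_0 \hat{A^1}}_\op \cdot \norm{D}_\op$.

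The second step controls $D$ deterministically. The columns of $\hat{A^2}$ returned by either Algorithm~\ref{alg:reg_dec} or Algorithm~\ref{alg:mom_est} have unit $\ell_2$-norm, so Cauchy--Schwarz gives
\[
\norm{D}_\op \;=\; \max_{i\in[r]} \abs{Y_0^\T \hat{A^2}_i} \;\le\; \norm{Y_0}_2.
\]

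The third and main step is to bound $\norm{\calX_0 \hat{A^1}}_\op$. The rows of this $N_2\times r$ matrix are independent copies of $\hat{A^1}^\T X$, where $X$ is mean-zero and $\Sigma$-subgaussian. Since $\hat{A^1}$ has orthonormal columns, $\hat{A^1}^\T X$ is $\hat{A^1}^\T \Sigma \hat{A^1}$-subgaussian with covariance whose spectral norm is bounded by $\lambda_{\max}(\Sigma)$. Applying a standard operator-norm concentration inequality for matrices with independent subgaussian rows (e.g.\ Theorem~4.6.1 of \cite{Vershynin18}) yields, with probability at least $1-\delta_1$,
\[
\norm{\calX_0 \hat{A^1}}_\op \;\le\; C\sqrt{\lambda_{\max}(\Sigma)}\Bigl(\sqrt{N_2}+\sqrt{r}+\sqrt{\log(1/\delta_1)}\Bigr)
\]
for an absolute constant $C$. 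Under the hypothesis $N_2\ge O\bigl(r\log(1/\delta_1)\bigr)$, the first term dominates, giving $\norm{\calX_0 \hat{A^1}}_\op\le O\bigl(\sqrt{N_2\,\lambda_{\max}(\Sigma)}\bigr)$. Combining this with the bounds on $\norm{D}_\op$ from the previous step completes the proof.

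The only nontrivial step is the third: we must verify that the subgaussian row concentration applies with covariance $\hat{A^1}^\T \Sigma \hat{A^1}$ rather than $\Sigma$ itself, and that orthonormality of $\hat{A^1}$ keeps $\lambda_{\max}(\hat{A^1}^\T\Sigma\hat{A^1})$ at most $\lambda_{\max}(\Sigma)$. Once this is observed, the quoted concentration inequality does the rest, and the slight slack in the sample-size assumption absorbs the lower-order $\sqrt{r}+\sqrt{\log(1/\delta_1)}$ terms.
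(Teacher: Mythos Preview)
Your proof is correct and follows essentially the same route as the paper: both recognize that $\VV=\calX_0 U$ with $U=\hat{A^1}\operatorname{diag}(Y_0^\T\hat{A^2}_i)$ and then invoke a subgaussian operator-norm concentration bound. The only cosmetic difference is that you factor out the diagonal $D$ first and cite Vershynin's Theorem~4.6.1 directly on $\calX_0\hat{A^1}$, whereas the paper keeps $U$ intact and applies its in-house Lemma~\ref{lem:concentration-proj-covariance}; the two yield the same bound under the same sample-size assumption.
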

\begin{proof}
\begin{align*}
\norm{\VV}_\op^2 &= 
  \lambda_{\max}\left((Y_0^\top \hat{A^2} \odot \calX \hat{A^1})^\top (Y_0^\top \hat{A^2} \odot \calX \hat{A^1}) \right) \\
&=  N_2 \lambda_{\max}\left(U^\top (\frac{1}{N_2}\calX^\top \calX) U \right)
\end{align*}
where in the last line we write $U  \in \R^{d_1 \times r}$ to denote the matrix with columns $U_i = (Y_0^\top \hat{A^2_i}) \hat{A^1_i}$. The matrix $U$ has orthogonal columns and $\norm{U}_\op \le \norm{Y_0}_2$. Therefore, we can apply lemma \ref{lem:concentration-proj-covariance} to obtain that as long as $N_2 \ge r\log(1/\delta_1)$, we have $\lambda_{\max}\left(U^\top (\frac{1}{N_2}\calX^\top \calX) U \right)$ is bounded by $O\left(\norm{U^\top \Sigma U}_\op + \lambda_{\max}(\Sigma) \norm{Y_0}_2^2 \right)$ with probability at least $1-\delta_1$. Moreover, $\norm{U^\top \Sigma U}_\op$ is bounded by $\lambda_{\max}(\Sigma) \norm{Y_0}_2^2$. This establishes a bound of $O(N_2 \lambda_{\max}(\Sigma) \norm{Y_0}_2^2)$ on $\norm{\VV}_\op^2$.
\end{proof}

\begin{lemma}\label{lem:esitmate-v-error}
Suppose, $\max\left\{\norm{\hat{A^1} - A^1}_F, \norm{\hat{A^2} - A^2}_F \right\} \le \delta$. If $N_2 \ge O(r \log(1/\delta_1))$ then we have
$$\norm{\VV - V}_\op \le O\left(\norm{Y_0}_2 \sqrt{ N_2 \lambda_{\max}(\Sigma)}{\delta}\right)$$
with probability at least $1-\delta_1$.
\end{lemma}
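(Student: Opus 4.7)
The plan is to factor $\VV - V = \calX_0 D$ for a small rank-$r$ matrix $D \in \R^{d_1 \times r}$ and then control the two factors separately, crucially using the low rank of $D$ to avoid a $\sqrt{d_1}$ term. From the Khatri--Rao definition, the $i$-th column of $\VV - V$ equals $\calX_0 \bigl[(Y_0^\T \hat{A^2_i}) \hat{A^1_i} - (Y_0^\T A^2_i) A^1_i\bigr]$, so I set $D_i = (Y_0^\T \hat{A^2_i}) \hat{A^1_i} - (Y_0^\T A^2_i) A^1_i$ as the $i$-th column of $D$, which is manifestly of rank at most $r$.

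The first step is to apply the usual add-and-subtract identity
\begin{equation*}
D_i = (Y_0^\T \hat{A^2_i})(\hat{A^1_i} - A^1_i) + \bigl(Y_0^\T(\hat{A^2_i} - A^2_i)\bigr) A^1_i,
\end{equation*}
and use the normalizations $\norm{A^1_i}_2 = \norm{A^2_i}_2 = 1$, $\norm{\hat{A^2_i}}_2 \le 1 + \delta$ (triangle inequality), and Cauchy--Schwarz on $Y_0^\T$ to get
\begin{equation*}
\norm{D}_F^2 \le 2(1+\delta)^2 \norm{Y_0}_2^2 \norm{\hat{A^1} - A^1}_F^2 + 2 \norm{Y_0}_2^2 \norm{\hat{A^2} - A^2}_F^2 = O\bigl(\norm{Y_0}_2^2 \delta^2\bigr).
\end{equation*}
In particular $\norm{D}_\op \le \norm{D}_F \le O(\norm{Y_0}_2 \delta)$.

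The second step is to peel off the covariate matrix via the low-rank structure of $D$. Writing the compact SVD $D = U \Sigma W^\T$ with $U \in \R^{d_1 \times r}$ having orthonormal columns, $\norm{\calX_0 D}_\op \le \norm{\calX_0 U}_\op \norm{D}_\op$. Applying \Cref{lem:concentration-proj-covariance} to $\calX_0 U$ exactly as in the proof of \Cref{lem:bound-vv-norm}---using $\norm{U}_\op = 1$ and $\norm{U^\T \Sigma U}_\op \le \lambda_{\max}(\Sigma)$---together with the hypothesis $N_2 \ge O(r \log(1/\delta_1))$ yields $\norm{\calX_0 U}_\op^2 \le O\bigl(N_2 \lambda_{\max}(\Sigma)\bigr)$ with probability at least $1-\delta_1$. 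Combining the two bounds gives $\norm{\VV - V}_\op \le O\bigl(\norm{Y_0}_2 \sqrt{N_2 \lambda_{\max}(\Sigma)} \cdot \delta\bigr)$ as required.

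The only subtlety (and the reason this is worth writing out rather than being a one-line consequence of sub-multiplicativity) is that a naive split $\norm{\calX_0 D}_\op \le \norm{\calX_0}_\op \norm{D}_\op$ would introduce a wasteful additive $\sqrt{d_1 \lambda_{\max}(\Sigma)}$ term from standard sub-gaussian matrix concentration on $\calX_0 \in \R^{N_2 \times d_1}$, which is not in the claimed bound; routing through the $r$-dimensional column span of $D$ via its SVD is what restricts the dimension dependence to $r$, and only in the sample-size condition $N_2 \ge O(r\log(1/\delta_1))$.
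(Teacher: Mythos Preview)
Your proof is correct and reaches the same conclusion, but the route differs from the paper's. The paper writes $\hat{A^k} = A^k + E^k$ and expands $\VV - V$ into three Khatri--Rao pieces $(Y_0^\T E^2)\odot(\calX_0 \hat{A^1})$, $(Y_0^\T A^2)\odot(\calX_0 E^1)$, and $(Y_0^\T E^2)\odot(\calX_0 E^1)$, bounding each separately by recognizing it as $\calX_0 U$ for a suitable $U$ with orthogonal columns (relying on the orthogonality of the columns of $\hat{A^1}$) and then invoking \Cref{lem:concentration-proj-covariance} three times. You instead keep everything packaged as a single factor $\calX_0 D$, bound $\norm{D}_F$ in one shot, and route through the compact SVD of $D$ to get an orthonormal $U$ for the concentration lemma. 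Your version is a bit cleaner: it requires only one application of \Cref{lem:concentration-proj-covariance}, it makes explicit why no $\sqrt{d_1}$ term appears, and it does not need to assume that the estimated factor $\hat{A^1}$ has orthogonal columns. The paper's term-by-term expansion is more mechanical but otherwise equivalent in strength.
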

\begin{proof}
We will write $\hat{A^1} = A^1 + E^1$, and $\hat{A^2} = A^2 + E^2$. Note that we have $\norm{E^1}_F \le \delta$, $\norm{E^2}_F \le \delta $.
\begin{align*}
\norm{\VV - V}_{\op} &= \norm{(Y_0^\top \hat{A^2} \odot \calX \hat{A^1}) - (Y_0^\top {A^2} \odot \calX {A^1})}_\op \\
&= \norm{(Y_0^\top ({A^2} + E^2) \odot \calX ({A^1} + E^1) )  - (Y_0^\top {A^2} \odot \calX {A^1})}_\op\\
&\le \norm{(Y_0^\top E^2) \odot (\calX \hat{A^1})}_\op + \norm{(Y_0^\top A^2) \odot (\calX E^1)}_\op + \norm{(Y_0^\top E^2) \odot (\calX E^1) }_\op \\
\end{align*}
Consider the first term. 
\begin{align*}
 &\norm{(Y_0^\top E^2) \odot (\calX \hat{A^1})}_\op^2 
 \\&=  \lambda_{\max}\left(\left((Y_0^\top E^2) \odot (\calX \hat{A^1})\right)^\top (Y_0^\top E^2) \odot (\calX \hat{A^1}) \right) \\
 &= N_2 \lambda_{\max}\left(U^\top (\frac{1}{N_2}\calX^\top \calX) U\right)\\
\end{align*}
where in the last line we write $U \in \R^{d_1 \times r}$ to denote the matrix with columns $U_i = (Y_0^\top E^2_i)  \hat{A^1}_i$. Note that $U$ has orthogonal columns as the columns of $\hat{A^1}$ are orthogonal. Moreover, $\norm{U}_\op \le \norm{U}_F \le \norm{Y_0}_2 \norm{E^2}_F \le \norm{Y_0}_2 {\delta}$. Therefore, we can apply lemma \ref{lem:concentration-proj-covariance} to get that as long as $N_2 \ge r \log(1/\delta_1)$, we have $\lambda_{\max}(U^\top (\frac{1}{N_2}\calX^\top \calX) U)$ is bounded by $O\left(\norm{U^\top \Sigma U}_\op + \lambda_{\max}(\Sigma)\delta^2 \norm{Y_0}_2^2\right)$ with probability at least $1-\delta_1$. Moreover, $\norm{U^\top \Sigma U}_\op = \lambda_{\max}(\Sigma) \norm{U}_F^2 \le \lambda_{\max}(\Sigma)\delta^2\norm{Y_0}_2^2$. This establishes a bound of $O(\sqrt{\lambda_{\max}(\Sigma) N_2}\delta \norm{Y_0}_2)$ on $\norm{(Y_0^\top E^2) \odot (\calX \hat{A^1})}_\op$. By a similar argument, one can establish a bound of $O(\sqrt{\lambda_{\max}(\Sigma) N_2}\delta\norm{Y_0}_2)$ on the second term $\norm{(Y_0^\top A^2) \odot (\calX E^1)}_\op$, and a bound of $O(\sqrt{\lambda_{\max}(\Sigma) N_2}\delta\norm{Y_0}_2)$ on the third term $\norm{(Y_0^\top A^2) \odot (\calX A^1) E^0}_\op$.
\end{proof}

\begin{lemma}\label{lem:concentration-proj-covariance}
Suppose each covariate $x_i$ is mean-zero, satisfies $\E[xx^\top] = \Sigma$ and $\Sigma$-subgaussian. Moreover, $A$ and $B$ are rank $r$  matrices with orthogonal columns. Then the following holds
$$
\norm{A^\top \frac{\calX^\top \calX}{n}B - A^\top \Sigma B}_\op \le O\left(\lambda_{\max}(\Sigma)\max\{\norm{A}_\op^2, \norm{B}_\op^2\} \left( \sqrt{\frac{r}{n}} + \frac{r}{n} + \sqrt{\frac{\log(1/\delta)}{n}} + \frac{\log(1/\delta)}{n}\right)\right)
$$
with probability at least $1-\delta$.
\end{lemma}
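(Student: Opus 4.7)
The plan is to reduce the bound to a standard concentration inequality for the sample covariance of a subgaussian vector in $\R^{2r}$. Set $C = [A\ \ B] \in \R^{d_1 \times 2r}$ (stacking the columns) and define $w_i = C^\T x_i \in \R^{2r}$. Then
$$
\frac{1}{n}\sum_{i=1}^n w_i w_i^\T
= \begin{pmatrix} A^\T \frac{\calX^\T\calX}{n} A & A^\T \frac{\calX^\T\calX}{n} B \\ B^\T \frac{\calX^\T\calX}{n} A & B^\T \frac{\calX^\T\calX}{n} B \end{pmatrix},
$$
whose $(1,2)$ block is exactly $A^\T \frac{\calX^\T\calX}{n}B$, while $\E[w_i w_i^\T] = C^\T \Sigma C$ has $(1,2)$ block $A^\T \Sigma B$. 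Thus the quantity of interest is an off-diagonal block of $\frac{1}{n}\sum_i w_i w_i^\T - C^\T\Sigma C$, whose operator norm is at most that of the full $2r \times 2r$ matrix.

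Next I would verify that each $w_i$ is mean-zero and $(C^\T \Sigma C)$-subgaussian. For any $u \in \R^{2r}$,
$$
\E\bigl[\exp(u^\T w_i)\bigr] = \E\bigl[\exp((Cu)^\T x_i)\bigr] \le \exp\Bigl(\tfrac{1}{2}\|\Sigma^{1/2} Cu\|_2^2\Bigr) = \exp\Bigl(\tfrac{1}{2} u^\T (C^\T \Sigma C)\, u\Bigr),
$$
using the assumed $\Sigma$-subgaussianity of $x_i$. A standard sample-covariance concentration inequality for $\Sigma'$-subgaussian vectors in $\R^{d'}$ (e.g.\ Theorem 4.7.1 in Vershynin, or Theorem 6.5 in Wainwright) then yields, for $\Sigma' = C^\T \Sigma C$ and $d' = 2r$,
$$
\Bigl\|\tfrac{1}{n}\textstyle\sum_i w_i w_i^\T - \Sigma'\Bigr\|_\op
\le c\, \|\Sigma'\|_\op \left(\sqrt{\tfrac{r}{n}} + \tfrac{r}{n} + \sqrt{\tfrac{\log(1/\delta)}{n}} + \tfrac{\log(1/\delta)}{n}\right)
$$
with probability at least $1 - \delta$.

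To finish I would bound $\|\Sigma'\|_\op = \|C^\T\Sigma C\|_\op \le \lambda_{\max}(\Sigma)\, \|C\|_\op^2$, and note $\|C\|_\op^2 \le \|A\|_\op^2 + \|B\|_\op^2 \le 2\max\{\|A\|_\op^2,\|B\|_\op^2\}$, so the factor of $2$ is absorbed into the $O(\cdot)$. Combined with the block-extraction observation from the first paragraph, this gives the claimed bound. The only non-routine step is selecting the right off-the-shelf covariance-concentration theorem that handles a general (non-isotropic) subgaussian covariance and yields the additive $\sqrt{r/n}+r/n$ plus $\sqrt{\log(1/\delta)/n}+\log(1/\delta)/n$ form; once that black box is in hand, the rest is bookkeeping about the augmented matrix $C$.
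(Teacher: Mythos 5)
Your argument is correct. The augmented-matrix trick of forming $C = [A \ \ B] \in \R^{d_1 \times 2r}$, observing that $w_i = C^\T x_i$ is mean-zero and $(C^\T \Sigma C)$-subgaussian with $\E[w_iw_i^\T] = C^\T\Sigma C$, and then reading off $A^\T \tfrac{\calX^\T\calX}{n}B - A^\T\Sigma B$ as the $(1,2)$ block of $\tfrac{1}{n}\sum_i w_iw_i^\T - C^\T\Sigma C$ is a clean reduction: the operator norm of any block is dominated by that of the full matrix, and a one-shot application of the standard subgaussian covariance concentration bound in $\R^{2r}$ gives exactly the $(\sqrt{r/n} + r/n + \sqrt{\log(1/\delta)/n} + \log(1/\delta)/n)$ shape, scaled by $\|C^\T\Sigma C\|_\op \le \lambda_{\max}(\Sigma)\|C\|_\op^2 \le 2\lambda_{\max}(\Sigma)\max\{\|A\|_\op^2,\|B\|_\op^2\}$. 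The factor of $2$ from both $\|C\|_\op^2$ and $d'=2r$ disappears into the $O(\cdot)$, and the orthogonality assumption on $A,B$ is not even needed.

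For comparison: the paper does not supply its own proof at all --- it only writes ``The proof is very similar to the proof of lemma 20 from \cite{TJJ20}.'' That reference treats the symmetric case $\|U^\T(\widehat\Sigma - \Sigma)U\|_\op$; adapting it to the asymmetric $A \neq B$ case requires either a polarization argument, a two-net argument, or precisely the stacking device you used. Your writeup therefore fills in a step the paper leaves implicit, and the augmented-matrix reduction is arguably the cleanest of the three options since it lets you invoke the symmetric result as a black box without re-deriving any concentration machinery. The only thing to be precise about when citing the black box is that the hypothesis required is that the $\psi_2$-norm of $\langle v, w_i\rangle$ is $O(\sqrt{v^\T \Sigma' v})$ with $\Sigma' = \E[w_iw_i^\T]$ --- i.e.\ the subgaussian proxy matches the actual covariance up to a constant --- which your moment-generating-function computation verifies.
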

\begin{proof}
The proof is very similar to the proof of lemma 20 from \cite{TJJ20}.
\end{proof}
  
\begin{lemma}\label{lem:invertible}
Suppose, $N_2 \ge O\left(r \left(\frac{\norm{Y_0}_2^2}{\eta^2 } \frac{\lambda_{\max}(\Sigma)}{\lambda_{\min}(\Sigma)}\right)^2 \log(1/\delta_2)\right)$, and $\abs{Y_0^\top \hat{A^2}_i} \ge \eta \norm{Y_0}_2$ for all $i \in [r]$. Then the matrix $( \VV^\top \VV )$ is invertible and 
$$
\norm{(\VV^\top \VV)^{-1}}_\op \le O\left( \frac{1 }{N_2 \eta^2 \norm{Y_0}_2^2 \lambda_{\min}(\Sigma)} \right)
$$
with probability at least $1-\delta_2$.
\end{lemma}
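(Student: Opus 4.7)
}
The plan is to factor $\VV$ so that the diagonal scaling coming from $Y_0^\top \hat{A^2}$ can be separated from the random part involving $\calX_0$. Observe that the Khatri--Rao product degenerates here because $Y_0^\top \hat{A^2}\in\R^{1\times r}$ has scalar ``columns''; so the $i$-th column of $\VV$ is simply $\VV_i = (Y_0^\top\hat{A^2}_i)\,\calX_0\hat{A^1}_i$, and hence
\[
\VV \;=\; \calX_0\,\hat{A^1}\, D, \qquad D \;:=\; \mathrm{diag}\bigl(Y_0^\top\hat{A^2}_1,\ldots,Y_0^\top\hat{A^2}_r\bigr).
\]
Therefore $\VV^\top\VV = D\,\hat{A^1}{}^\top \calX_0^\top\calX_0 \hat{A^1}\,D$, and by a standard Rayleigh-quotient argument
\[
\lambda_{\min}(\VV^\top\VV)\;\ge\; \lambda_{\min}(D)^2\cdot\lambda_{\min}\!\bigl(\hat{A^1}{}^\top \calX_0^\top\calX_0\, \hat{A^1}\bigr).
\]

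The first factor is immediate from the assumption $|Y_0^\top\hat{A^2}_i|\ge \eta\norm{Y_0}_2$: it gives $\lambda_{\min}(D)^2 \ge \eta^2\norm{Y_0}_2^2$. For the second factor I would invoke \Cref{lem:concentration-proj-covariance} with $A=B=\hat{A^1}$. Since $\hat{A^1}$ is produced by the robust tensor decomposition and has orthonormal columns (assumption (B1) is propagated), $\norm{\hat{A^1}}_\op = 1$ and $\hat{A^1}{}^\top \Sigma \hat{A^1}\succeq \lambda_{\min}(\Sigma)I_r$. The concentration lemma then yields, with probability at least $1-\delta_2$,
\[
\Bigl\|\hat{A^1}{}^\top \tfrac{\calX_0^\top\calX_0}{N_2}\hat{A^1} - \hat{A^1}{}^\top\Sigma\hat{A^1}\Bigr\|_{\op} \;\le\; O\!\left(\lambda_{\max}(\Sigma)\Bigl(\sqrt{\tfrac{r}{N_2}} + \tfrac{r}{N_2} + \sqrt{\tfrac{\log(1/\delta_2)}{N_2}} + \tfrac{\log(1/\delta_2)}{N_2}\Bigr)\right).
\]
The hypothesized sample-size bound $N_2 \ge O(r(\norm{Y_0}_2^2/\eta^2)^2\kappa(\Sigma)^2\log(1/\delta_2))$ easily drives the right-hand side below $\tfrac{1}{2}\lambda_{\min}(\Sigma)$ (it is in fact substantially stronger than needed, but it is the uniform condition used by the calling lemmas). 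Consequently $\lambda_{\min}\bigl(\hat{A^1}{}^\top \calX_0^\top\calX_0 \hat{A^1}\bigr)\ge \tfrac{1}{2}N_2\lambda_{\min}(\Sigma)$ on this event.

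Combining the two lower bounds gives $\lambda_{\min}(\VV^\top\VV)\ge \tfrac{1}{2}N_2\,\eta^2\norm{Y_0}_2^2\lambda_{\min}(\Sigma)$, so $\VV^\top\VV$ is in particular strictly positive definite and hence invertible, and
\[
\norm{(\VV^\top\VV)^{-1}}_{\op} \;=\; \frac{1}{\lambda_{\min}(\VV^\top\VV)} \;\le\; O\!\left(\frac{1}{N_2\,\eta^2\norm{Y_0}_2^2\lambda_{\min}(\Sigma)}\right),
\]
as desired. The only non-routine step is the concentration bound for $\hat{A^1}{}^\top\calX_0^\top\calX_0\hat{A^1}/N_2$, but this is supplied by \Cref{lem:concentration-proj-covariance}; everything else is linear-algebraic bookkeeping. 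The mild subtlety is ensuring $\hat{A^1}$ genuinely has orthonormal columns so the concentration lemma's hypotheses apply --- this is inherited from assumption (B1) on $A^1$ together with the orthogonal output structure of the robust tensor decomposition.
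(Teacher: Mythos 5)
Your proof is correct and reaches the paper's conclusion via a genuinely cleaner route. The paper lumps the diagonal scaling $D=\mathrm{diag}(Y_0^\top\hat{A^2}_i)$ into a single matrix $U=\hat{A^1}D$, applies \Cref{lem:concentration-proj-covariance} with $A=B=U$ (so the deviation bound carries the factor $\norm{U}_\op^2 \le \norm{Y_0}_2^2$), writes $\tfrac{1}{N_2}\VV^\top\VV = U^\top\Sigma U + \calE$, and then invokes \Cref{lem:inverse-formula} (a Neumann-series perturbation bound for the inverse) after checking $\norm{(U^\top\Sigma U)^{-1}\calE}_\op \le 1/4$. You instead factor $\VV=\calX_0\hat{A^1}D$, pull $D$ out via the Rayleigh quotient, and apply the concentration lemma to the bare orthonormal $\hat{A^1}$ (with $\norm{\hat{A^1}}_\op=1$), then close with the exact identity $\norm{(\VV^\top\VV)^{-1}}_\op = 1/\lambda_{\min}(\VV^\top\VV)$. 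Two concrete advantages: (i) applying concentration to $\hat{A^1}$ rather than $U$ removes $\norm{Y_0}_2^2$ and $\eta^2$ from the effective sample-size requirement, so $N_2\gtrsim r\,\kappa(\Sigma)^2$ already suffices for your argument (the hypothesis in the lemma is stronger than you need, as you note); (ii) you avoid the inverse-perturbation machinery entirely, since $\norm{M^{-1}}_\op = 1/\lambda_{\min}(M)$ for positive-definite $M$ makes \Cref{lem:inverse-formula} unnecessary here. The one shared soft spot — that $\hat{A^1}$ has exactly orthonormal columns, needed to invoke \Cref{lem:concentration-proj-covariance} — is an implicit hypothesis in the paper's proof as well (they state ``$U$ has orthogonal columns'' without further justification), so it is not a gap specific to your argument; it is worth flagging, as you did, that the robust tensor decomposition step only guarantees approximately orthonormal factors.
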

\begin{proof}
From the definition of the matrix $\VV$, we have $$\VV^\top \VV  =  (Y_0^\top \hat{A^2} \odot \calX \hat{A^1})^\top (Y_0^\top \hat{A^2} \odot \calX \hat{A^1})  .$$ If we define a matrix $U \in \R^{d_1 \times r}$ with columns $U_i =   (Y_0^\top \hat{A^2_i}) \hat{A^1_i}$, then it can be verified that $\frac{1}{N_2}\VV^\top \VV  =  U^\top \left(\frac{1}{N_2} \calX^\top \calX \right) U $. This gives us $\E[\frac{1}{N_2}\VV^\top \VV ] = U^\top \Sigma U $. Therefore, we can write $\frac{1}{N_2} \VV^\top \VV=  \calE  +   U^\top \Sigma U $, for a matrix $\calE$ with $\E[\calE] = 0$. Since matrix $U$ has orthogonal columns and $\norm{U}_\op \le  \norm{Y_0}_2 $ we can apply lemma \ref{lem:concentration-proj-covariance} to conclude that as long as $N_2 \ge O(r \log(1/\delta_1))$ we have $\norm{\calE}_\op \le O\left(\lambda_{\max}(\Sigma)  \norm{Y_0}_2^2 \sqrt{r/N_2} \right)$. On the other hand,
\begin{align*}
\lambda_{\min}(U^\top \Sigma U ) = \min_{x \in \R^{d_3}, x \neq 0} \frac{x^\top U^\top \Sigma U  x}{x^\top x} \ge  \eta^2 \norm{Y_0}_2^2 \min_{w \in \R^{d_1}, w \neq 0} \frac{w^\top \Sigma w}{w^\top w} = \eta^2 \norm{Y_0}_2^2 \lambda_{\min}(\Sigma).
\end{align*}
The first inequality follows from substituting $w = U  x$ and observing that $w^\top w = x^\top  U^\top U  x \ge \min_i  \abs{Y_0^\top \hat{A^2}_i}^2 x^\top  x \ge   \norm{Y_0}_2^2 \eta^2 x^\top x$.
Therefore, 
\begin{align*}
\lambda_{\min}\left(\frac{1}{N_2}  \VV^\top \VV \right) &\ge \lambda_{\min}(U^\top \Sigma U ) - \lambda_{\max}(\calE)
\\&\ge O\left( \norm{Y_0}_2^2 \eta^2 \lambda_{\min}(\Sigma) \right) - \norm{\calE}_\op 
\\& \ge  O\left( \norm{Y_0}_2^2 ( \eta^2 \lambda_{\min}(\Sigma) - \lambda_{\max}(\Sigma)  \sqrt{r/N_2} )\right)
\end{align*} 
Therefore, as long as $N_2 \ge r \left(\frac{1}{\eta^2} \frac{\lambda_{\max}(\Sigma)}{\lambda_{\min}(\Sigma)}\right)^2$, $\frac{1}{N_2} \VV^\top \VV $ is invertible and so is $\VV^\top \VV$.

Now, $(\VV^\top \VV )^{-1} = \frac{1}{N_2} (\frac{1}{N_2} \VV^\top \VV )^{-1} = \frac{1}{N_2}(\calE  + U^\top \Sigma U )^{-1}$. Moreover, 
\begin{align*}
\norm{(U^\top \Sigma U)^{-1} \calE}_\op &\le \norm{(U^\top \Sigma U)^{-1}}_\op\norm{\calE}_\op =  \frac{\norm{\calE}_\op}{\lambda_{\min}(U^\top \Sigma U)} = O\left(\frac{1}{\eta^2} \frac{\lambda_{\max}(\Sigma)}{\lambda_{\min}(\Sigma)} \sqrt{\frac{r}{N_2}} \right)\\
\end{align*}
Therefore, as long as $N_2 \ge O\left(\frac{r}{\eta^2} \frac{\lambda_{\max}^2(\Sigma)}{\lambda_{\min}^2(\Sigma)}  \right)$ we have, $\norm{(U^\top \Sigma U)^{-1} \calE}_\op \le 1/4$. Therefore, we can apply lemma \ref{lem:inverse-formula} to conclude that $(\frac{1}{N_2} \VV^\top \VV )^{-1} = ( U^\top \Sigma U )^{-1} + F$ where $\norm{F}_\op \le \frac{1}{3}\norm{(U^\top \Sigma U)^{-1}}_\op$. Therefore, $\norm{(\VV^\top \VV)^{-1}}_\op \le \frac{4}{3N_2} \norm{(U^\top \Sigma U)^{-1}}_\op = \frac{4}{3N_2} \frac{1}{\lambda_{\min}(U^\top \Sigma U)} \le \frac{4 }{3N_2 \eta^2 \norm{Y_0}_2^2 \lambda_{\min}(\Sigma)}$.
\end{proof}

\begin{lemma}[Restated lemma 23 from \cite{TJJ20}]\label{lem:inverse-formula}
Let $A$ be a positive-definite matrix and $E$ is another matrix satisfying $\norm{EA^{-1}} \le \frac{1}{4}$. Then $(A + E)^{-1} = A^{-1} + F$ where $\norm{F}_\op \le \frac{4}{3} \norm{A^{-1}}_\op \norm{EA^{-1}}_{\op}$.
\end{lemma}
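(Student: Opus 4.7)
The plan is a one-line Neumann-series argument, guided by the fact that the hypothesis bounds $\norm{EA^{-1}}_\op$ (a right-acting perturbation) rather than $\norm{A^{-1}E}_\op$. So I would first factor $A+E$ on the right by $A$:
\begin{equation*}
A + E \;=\; (I + EA^{-1})\,A,
\end{equation*}
which is valid because $A$ is positive-definite and hence invertible. Taking inverses gives $(A+E)^{-1} = A^{-1}(I + EA^{-1})^{-1}$, so the problem reduces to controlling $(I + EA^{-1})^{-1}$.

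Next, since the hypothesis gives $\norm{EA^{-1}}_\op \le 1/4 < 1$, the Neumann series
\begin{equation*}
(I + EA^{-1})^{-1} \;=\; \sum_{k=0}^{\infty} (-EA^{-1})^{k} \;=\; I \;+\; \sum_{k=1}^{\infty} (-EA^{-1})^{k}
\end{equation*}
converges in operator norm. Substituting this back yields
\begin{equation*}
(A+E)^{-1} \;=\; A^{-1} \;+\; \underbrace{A^{-1} \sum_{k=1}^{\infty} (-EA^{-1})^{k}}_{=:\,F},
\end{equation*}
which identifies $F$ explicitly and already establishes the claimed form $(A+E)^{-1} = A^{-1} + F$.

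Finally, to bound $\norm{F}_\op$, I would apply submultiplicativity of the operator norm term by term and sum the geometric series:
\begin{equation*}
\norm{F}_\op \;\le\; \norm{A^{-1}}_\op \sum_{k=1}^{\infty} \norm{EA^{-1}}_\op^{k} \;=\; \norm{A^{-1}}_\op \cdot \frac{\norm{EA^{-1}}_\op}{1 - \norm{EA^{-1}}_\op}.
\end{equation*}
Using the hypothesis $\norm{EA^{-1}}_\op \le 1/4$ gives $1/(1 - \norm{EA^{-1}}_\op) \le 4/3$, so $\norm{F}_\op \le \tfrac{4}{3}\norm{A^{-1}}_\op \norm{EA^{-1}}_\op$, as desired. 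There is no real obstacle here; the only subtlety worth noting is the choice to factor $A$ on the right (matching the form of the hypothesis) rather than on the left, and the fact that positive-definiteness of $A$ is used only to guarantee invertibility of $A$ itself (the bound would in fact hold for any invertible $A$ with the same hypothesis).
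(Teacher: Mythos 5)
Your proof is correct and complete. The paper itself does not supply an argument for this lemma---it simply restates Lemma 23 of \cite{TJJ20}---and your Neumann-series derivation, factoring $A+E=(I+EA^{-1})A$ so that the hypothesis on $\norm{EA^{-1}}_\op$ applies directly and summing the geometric series to get the factor $\frac{1}{1-\norm{EA^{-1}}_\op}\le \frac{4}{3}$, is exactly the standard proof of this fact; your closing remark that positive-definiteness is only used for invertibility of $A$ is also accurate.
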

\section{Proof of Theorem \ref{thm:method_of_moments}}

We first recall the method of moments estimator from \cite{TJJ20}. If the response $R_i = X_i^\top B \alpha_{t(i)}$ and each $X_i \simiid \Normal(0, I_{d_1})$ then we have $\E\left[\frac{1}{N}\sum_{i=1}^N R_i^2 X_i X_i^\top \right] = 2 \bar{\Gamma} + (1+\trace(\bar{\Gamma})) I_{d_1}$ where $\bar{\Gamma} = \frac{1}{N} \sum_{i=1}^N B \alpha_{t(i)} \alpha_{t(i)}^\top B^\top$. 
 If we write $\bar{\Lambda} = \frac{1}{N} \sum_{i=1}^N \alpha_{t(i)} \alpha_{t(i)}^\top$ to be the empirical task matrix we have $\E\left[\frac{1}{N}\sum_{i=1}^N R_i^2 X_i X_i^\top \right]  = B (2\bar{\Lambda})B^\top + B_\perp (1 + \trace(\bar{\Gamma}))\Identity_r B^\top_\perp$. So that we can recover $B$ from the top $r$ singular values of the statistic $\frac{1}{N}\sum_{i=1}^N R_i^2 X_i X_i^\top $. Moreover, theorem 3 of \cite{TJJ20} proves that such an estimate $\hat{B}$ satisfies $\sin \theta(\hat{B}, B) \le \sqrt{\frac{\kappa}{\nu} \frac{d_1 r}{N}}$, where $\nu = \sigma_r(\bar{\Lambda})$ and $\kappa = \trace(\bar{\Lambda})/(r \nu)$.
 
 
\paragraph{Recovering $A^1$}. Let us consider the estimation of the first factor $A^1$. The response of the $i$-th individual is given as
 \[
 R_i = X_i^\top A_{(1)} (Z_{t(i)} \otimes Y_{t(i)}) + \veps_i = X_i^\top A^{1} \underbrace{(A^{3} \odot A^{2})^\top (Z_{t(i)} \otimes Y_{t(i)})}_{:= P_{t(i)}^1} + \veps_i
 \]
Therefore, we  recover $A^1$ from the top $r$ singular values of $\frac{1}{N} \sum_{i=1}^N R_i^2 X_i X_i^\top$. 
In order to obtain a bound on $\sin \theta(A^1, \hat{A^1})$ we need to bound eigenvalue and trace of the empirical task matrix $\bar{\Lambda} = \frac{1}{N} \sum_{i=1}^N P^1_{t(i)} {P^1_{t(i)}}^\top$. Since each $t(i)$ is a uniform random draw from $\set{1,\ldots,T}$, we have $\Lambda = \E[\Bar{\Lambda}] = \frac{1}{T} \sum_{t=1}^T P^1_t {P^1_t}^\top = (A^3 \odot A^2)^\top \frac{1}{T}\sum_{t=1}^T (Z_t \otimes Y_t)(Z_t \otimes Y_t)^\top (A^3 \odot A^2) = \frac{1}{T}  (A^3 \odot A^2)^\top (\calZ^\top \odot \calY^\top)(\calZ^\top \odot \calY^\top)^\top (A^3 \odot A^2)$. We first bound the eigenvalues of $\Lambda$ and then use matrix concentration inequality to bound the eigenvalues of the empirical task matrix $\bar{\Lambda}$.
$$
\lambda_{\min}(\Lambda) = \frac{1}{T} \lambda_{\min}((\calZ^\top \odot \calY^\top)(\calZ^\top \odot \calY^\top)^\top) = \frac{1}{T} \lambda_{\min}\left( \calZ^\top \calZ \otimes  \calY^\top \calY \right) = \frac{1}{T} \lambda_{\min}(\calZ^\top \calZ ) \lambda_{\min}(\calY^\top \calY) 
$$
The first equality follows from the observation that $A^2 \odot A^1$ has orthonormal columns, and the last equality follows because the eigenvalues of Kronecker product of two matrices are given as the Kronecker product of eigenvalues of the two matrices. Since each $Z_t \simiid \Normal(0,\Identity_{d_3})$, the minimum singular value of $\calZ$ is bounded by $\sqrt{T} - \sqrt{d_3}$ with probability at least $1 - 2 \exp(-O(d_3))$ (see e.g. theorem 4.6.1 of \cite{Vershynin18}). This implies that $\lambda_{\min}(\calZ^\top \calZ) = \sigma_{\min}(\calZ)^2 \ge T/4$ with probability at least $1 - 2 \exp(-O(d_3))$ as long as $T \ge 4d_3$. Similarly, it can be shown that $\lambda_{\min}(\calY^\top \calY) \ge T/4$ with probability at least $1 - 2 \exp(-O(d_2))$ as long as $T \ge 4d_2$. This establishes a high probability lower bound of $T/16$ on $\lambda_{\min}(\Lambda)$.

Moreover, for any $i \in [N]$, 
\begin{align*}
\lambda_{\max}(P^1_{t(i)}{P^1_{t(i)}}^\top) &= \lambda_{\max}((Z_{t(i)} \otimes Y_{t(i)})(Z_{t(i)} \otimes Y_{t(i)})^\top) = \lambda_{\max}(Z_{t(i)} Z_{t(i)}^\top \otimes Y_{t(i)}Y_{t(i)}^\top)\\
&\le \lambda_{\max}(Z_{t(i)} Z_{t(i)}^\top) \lambda_{\max}( Y_{t(i)}Y_{t(i)}^\top) \le \norm{Z_{t(i)}}_2^2 \norm{Y_{t(i)}}_2^2 
\end{align*}
When $Z_{t(i)}$ is drawn from standard Normal distribution $\norm{Z_{t(i)}}_2 \le 2\sqrt{d_3}$ with probability at least $1 - \exp(-O(d_3))$. By a union bound over all $T$ tasks we have for all $t \in [T]$, $\norm{Z_{t}}_2 \le 2\sqrt{d_3}$ with probability at least $1 - T \exp(-O(d_3))$. A similar argument shows that for all $t \in [T]$, $\norm{Y_{t}}_2 \le 2\sqrt{d_2}$ with probability at least $1 - T \exp(-O(d_2))$. Therefore, we are guaranteed that $\lambda_{\max}(P^1_{t(i)} {P^1_{t(i)}}^\top) \le 16d_2d_3$ for all $i$, with probability at least $1 - T \exp(-O(\min\{d_2,d_3\}))$. Now we can apply matrix concentration inequality (lemma~\ref{lem:Matrix-Chernoff}) to derive the following result.
\begin{align*}
    P\left(\lambda_{\min}(\Bar{\Lambda}) \le \frac{T}{32} \right) \le d_1 \exp\left\{-O\left(\frac{TN}{d_2 d_3}\right)\right\}.
\end{align*}
Similarly, we can establish an upper bound on the maximum eigenvalue of $\Bar{\Lambda}$.
\begin{align*}
    P\left(\lambda_{\max}(\Bar{\Lambda}) \ge 32 T \right) \le d_1 \exp\left\{-O\left(\frac{TN}{d_2 d_3}\right)\right\}.
\end{align*}
Therefore, $\trace(\Bar{\Lambda}) \le r \lambda_{\max}(\Bar{\Lambda}) \le 32 r T$ with probability at least $1 - d_1 \exp\left\{-O\left({TN}/{d_2 d_3}\right)\right\}$. Moreover, $\kappa = \trace(\Bar{\Lambda})/(r \lambda_{\min}(\Bar{\Lambda}) = O(1)$. This implies the following bound on the distance between $\hat{A^1}$ and $A^1$.
\begin{align*}
    \sin \theta\left(\hat{A^1}, A^1 \right) \le O\left(\sqrt{\frac{\kappa d_1 r}{\nu N}} \right) = O\left(\sqrt{\frac{ d_1 r}{TN}}\right).
\end{align*}

\paragraph{Recovering $A^2$}. We can provide a bound on the error in estimating $A^2$ through a similar approach. The response of individual $i$ can be written as
\[
 R_i = Y_{t(i)}^\top A_{(2)} (Z_{t(i)} \otimes X_i) + \veps_i = Y_{t(i)}^\top A^{2} \underbrace{W (A^{3} \odot A^{1})^\top (Z_{t(i)} \otimes X_i)}_{:= P_{t(i)}^2} + \veps_i
 \]
Therefore, we can recover $A^2$ from the top $r$ singular values of $\frac{1}{N} \sum_{i=1}^N R_i^2 Y_{t(i)} Y_{t(i)}^\top$. 
Now the empirical task matrix is $\bar{\Lambda} = \frac{1}{N} \sum_{i=1}^N P^2_{t(i)} {P^2_{t(i)}}^\top$. We now bound the eigenvalue and trace of the empirical task matrix. Since each $t(i)$ is a uniform random draw from $\set{1,\ldots,T}$, we have $\Lambda = \E[\Bar{\Lambda}] = \frac{1}{T} \sum_{t=1}^T P^2_t {P^2_t}^\top = (A^3 \odot A^1)^\top \frac{1}{T} \sum_{t=1}^T \E[(Z_t \otimes X)(Z_t \otimes X)^\top ] (A^3 \odot A^1) = (A^3 \odot A^1)^\top \frac{1}{T} \sum_{t=1}^T (Z_t \otimes \Identity_{d_1}) (Z_t \otimes \Identity_{d_1})^\top (A^3 \odot A^1) = \frac{1}{T} (A^3 \odot A^1)^\top (\calZ^\top  \otimes \Identity_{d_1}) (\calZ^\top  \otimes \Identity_{d_1})^\top (A^3 \odot A^1)$
\begin{align*}
    \lambda_{\min}(\Lambda) = \frac{1}{T} \sigma_{\min}((\calZ^\top  \otimes \Identity_{d_1}) (\calZ^\top  \otimes \Identity_{d_1})^\top) = \frac{1}{T} \sigma_{\min} (\calZ^\top \calZ \otimes \Identity_{d_1}) \ge \frac{1}{T} \sigma_{\min}(\calZ^\top \calZ)
\end{align*}
Since each $Z_t \simiid \Normal(0,\Identity_{d_3})$, the minimum singular value of $\calZ$ is bounded by $\sqrt{T} - \sqrt{d_3}$ with probability at least $1 - 2 \exp(-O(d_3))$ (see e.g. theorem 4.6.1 of \cite{Vershynin18}). This implies that $\lambda_{\min}(\calZ^\top \calZ) = \sigma_{\min}(\calZ)^2 \ge 1/4$ with probability at least $1 - 2 \exp(-O(d_3))$ as long as $T \ge 4d_3$. Moreover, for any $i \in [N]$, 
\begin{align*}
\lambda_{\max}(P^2_{t(i)}{P^2_{t(i)}}^\top ) &= \lambda_{\max}((Z_{t(i)} \otimes X_i)(Z_{t(i)} \otimes X_i)^\top) = \lambda_{\max}(Z_{t(i)} Z_{t(i)}^\top \otimes X_i X_i^\top)\\
&\le \lambda_{\max}(Z_{t(i)} Z_{t(i)}^\top) \lambda_{\max}( X_i X_i^\top) \le \norm{Z_{t(i)}}_2^2 \norm{X_i}_2^2 
\end{align*}
When $Z_{t(i)}$ is drawn from standard Normal distribution $\norm{Z_{t(i)}}_2 \le 2\sqrt{d_3}$ with probability at least $1 - \exp(-O(d_3))$. By a union bound over all $T$ tasks we have for all $t \in [T]$, $\norm{Z_{t}}_2 \le 2\sqrt{d_3}$ with probability at least $1 - T \exp(-O(d_3))$. A similar argument shows that for all $i \in [N]$, $\norm{X_i}_2 \le 2\sqrt{d_1}$ with probability at least $1 - T \exp(-O(d_1))$. Therefore, we are guaranteed that $\lambda_{\max}(P^1_{t(i)} {P^1_{t(i)}}^\top) \le 16d_1 d_3$ for all $i$, with probability at least $1 - N \exp(-O(\min\{d_1,d_3\}))$. Now we can apply matrix concentration inequality (lemma~\ref{lem:Matrix-Chernoff}) to derive the following result.
\begin{align*}
    P\left(\lambda_{\min}(\Bar{\Lambda}) \le \frac{1}{8} \right) \le d_2 \exp\left\{-O\left(\frac{N}{d_1 d_3}\right)\right\}.
\end{align*}
Similarly, we can establish an upper bound on the maximum eigenvalue of $\Bar{\Lambda}$.
\begin{align*}
    P\left(\lambda_{\max}(\Bar{\Lambda}) \ge 8 \right) \le d_2 \exp\left\{-O\left(\frac{N}{d_1 d_3}\right)\right\}.
\end{align*}
Therefore, $\trace(\Bar{\Lambda}) \le 8 r$ with probability at least $1 - d_1 \exp\left\{-O\left({N}/{d_1 d_3}\right)\right\}$. Moreover, $\kappa = \trace(\Bar{\Lambda})/(r \lambda_{\min}(\Bar{\Lambda})) = O(1)$. This implies the following bound on the distance between $\hat{A^2}$ and $A^2$.
\begin{align*}
    \sin \theta\left(\hat{A^2}, A^2 \right) \le O\left(\sqrt{\frac{\kappa d_2 r}{\nu N}} \right) = O\left(\sqrt{\frac{ d_2 r}{N}}\right).
\end{align*}

\begin{lemma}[Restated theorem 5.1.1 from \cite{Tropp15}]\label{lem:Matrix-Chernoff}
Consider a sequence of $\{X_k\}_{k=1}^N$ independent, random, Hermitian matrices of dimension $d\times d$. Assume that the eigenvalues of each $X_k$ is bounded between $[0,L]$. Let $Y = 1/N \sum_{k} X_k$, $\mu_{\min} = \lambda_{\min}(\E[Y])$, and $\mu_{\max} = \lambda_{\max}(\E[Y])$. Then we have
\begin{align*}
    P\left(\lambda_{\min}(Y) \le (1 - \veps)\mu_{\min} \right) &\le d \left[\frac{e^{-\veps}}{(1-\veps)^{1-\veps}} \right]^{\mu_{\min}N/L} \ \forall \veps \in [0,1)\\
        P\left(\lambda_{\max}(Y) \ge (1 + \veps)\mu_{\max} \right) &\le d \left[\frac{e^{\veps}}{(1+\veps)^{1+\veps}} \right]^{\mu_{\max}N/L} \ \forall \veps \in [0,\infty)\\
\end{align*}
\end{lemma}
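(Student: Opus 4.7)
The plan is to use the matrix Laplace transform method of Ahlswede--Winter and Tropp, which lifts the classical Cram\'er--Chernoff argument from scalar summands to independent Hermitian matrix summands via Lieb's concavity theorem. I sketch the lower-tail bound in detail; the upper-tail bound is entirely symmetric. \emph{Laplace reduction.} Fix $\theta > 0$ and apply Markov's inequality to $e^{-\theta N \lambda_{\min}(Y)}$, using the elementary identity $e^{-\theta \lambda_{\min}(M)} = \lambda_{\max}(e^{-\theta M}) \le \trace(e^{-\theta M})$ valid for any Hermitian $M$. This gives
\begin{align*}
P\!\big(\lambda_{\min}(Y) \le (1-\veps)\mu_{\min}\big) \le e^{\theta N(1-\veps)\mu_{\min}} \cdot \E\,\trace\exp\!\Big(-\theta\sum_k X_k\Big).
\end{align*}

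\emph{Subadditivity of the matrix MGF --- the main obstacle.} The expected trace exponential of a sum does not factor as in the scalar case, because $e^{A+B} \ne e^A e^B$ for non-commuting $A,B$. Tropp's master tail inequality, which follows by iterating Lieb's concavity theorem for $A \mapsto \trace\exp(H + \log A)$, supplies the crucial non-commutative substitute:
\begin{align*}
\E\,\trace\exp\!\Big(-\theta\sum_k X_k\Big) \le d \cdot \exp \lambda_{\max}\!\Big(\sum_k \log \E\,e^{-\theta X_k}\Big).
\end{align*}
This is the heart of the proof; everything downstream is comparatively mechanical.

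\emph{Per-summand MGF bound and optimization.} Since $0 \preceq X_k \preceq LI$, convexity of $x \mapsto e^{-\theta x}$ on $[0,L]$ yields the operator inequality $e^{-\theta X_k} \preceq I + ((e^{-\theta L}-1)/L)\,X_k$. Taking expectations and applying $\log(I+A) \preceq A$ (with careful sign-tracking, since the coefficient $(e^{-\theta L}-1)/L$ is negative) gives $\log \E\,e^{-\theta X_k} \preceq ((e^{-\theta L}-1)/L)\,\E X_k$. Summing, and using that a negative scalar coefficient converts a $\lambda_{\max}$ on the left into a $\lambda_{\min}$ on the right,
\[
\lambda_{\max}\!\Big(\sum_k \log \E\,e^{-\theta X_k}\Big) \le \frac{e^{-\theta L}-1}{L} \cdot \lambda_{\min}\!\Big(\sum_k \E X_k\Big) = \frac{e^{-\theta L}-1}{L}\,N\mu_{\min}.
\]
Substituting into the Laplace bound yields $d\exp\{\theta N(1-\veps)\mu_{\min} + ((e^{-\theta L}-1)/L)\,N\mu_{\min}\}$. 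Choosing $\theta = -L^{-1}\log(1-\veps) > 0$, the scalar Chernoff optimum, reduces the exponent by a direct simplification to $(N\mu_{\min}/L)\log(e^{-\veps}/(1-\veps)^{1-\veps})$, giving the stated lower-tail inequality. The upper-tail bound follows by the same four steps applied to $e^{+\theta N \lambda_{\max}(Y)}$, using the operator inequality $e^{\theta X_k} \preceq I + ((e^{\theta L}-1)/L)\,X_k$ (again from convexity on $[0,L]$) together with $\lambda_{\max}(\sum_k \E X_k) = N\mu_{\max}$, and optimizing at $\theta = L^{-1}\log(1+\veps) > 0$.
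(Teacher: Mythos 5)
Your sketch is correct: the Laplace reduction, the subadditivity step via Lieb's concavity theorem, the chord bound $e^{-\theta X_k}\preceq I+\frac{e^{-\theta L}-1}{L}X_k$ with the sign-aware passage from $\lambda_{\max}$ to $\lambda_{\min}$, and the choice $\theta=-L^{-1}\log(1-\veps)$ all check out, including the normalization $N\mu_{\min}=\lambda_{\min}\bigl(\sum_k \E X_k\bigr)$ that produces the exponent $\mu_{\min}N/L$. The paper offers no proof of this lemma---it is quoted verbatim from Tropp's monograph---and your argument is essentially Tropp's own matrix Laplace transform proof, so there is nothing to reconcile.
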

\section{Meta-Test for Method-of-Moments Based Estimation}

In the meta-test phase, $(X_{i}, R_{i})$ for $i = 1, \ldots, N_{2}$ are observed for a task with specific feature $Y_{0}$. The model can be expressed as 
\begin{equation*}
    R_{i} = (Y_{0} \otimes X_{i})^\top (A^{2} \odot A^{1})A^{3\top} Z_{0} + \varepsilon_{i},  \quad \textrm{for } i = 1, \ldots, N_{2}.
\end{equation*}
If we denote the latent task factor $A^{3\top} Z_{0}$ as a vector $\alpha \in \R^{r}$, $\alphav$ can be estimated from the least square problem with $A^1$ and $A^2$ substituted by their estimators from the meta-training phase
\begin{equation*}
    \hat{\alphav} = \textrm{argmin}_{\alphav} \norm{\mathbf{R} - (Y_{0} \otimes \calX^\top)^{\top}(\hat{A}^{2} \odot \hat{A}^{1})\alphav}_{2}.
\end{equation*}
For notation simplicity, throughout this section we denote $(Y_{0} \otimes \calX^\top)^\top (\hat{A}^{2} \odot \hat{A}^1 )$ as $\MM$, and $(Y_0 \otimes \calX^\top)^\top (A^2 \odot A^1)$ as $M$. In addition, we let $\Mo$ denote $Y_{0}^\top \hat{A}^{2} \odot X_{0}^\top \hat{A}^{1}$, $M_{0}$ denote $Y_{0}^\top A^{2} \odot X_{0}^\top A^{1}$. Then the least square estimation becomes
\begin{equation*}
    \hat{\alphav} = [\MM^\top \MM]^{-1} \MM^\top \mathbf{R}.
\end{equation*}
After obtaining the estimation of the task with observable and latent features $Y_{0}$ and $\alphav$, a test sample is collected on this task with input $X_{0}$. Then the estimation error can be expressed using the notation as
\begin{align*}
    & \E_{X_{0}}\left[\left(A(X_{0}, Y_{0}, \alphav) - \hat{A}(X_{0}, Y_{0}, \hat{\alpha})\right)^{2}\right] \\
    & = \E_{X_{0}}\left[\left((Y_{0}\otimes X_{0})^\top (A^2 \odot A^1)\alphav - (Y_{0} \otimes X_{0})^\top(\hat{A}^{2} \odot \hat{A}^{1})\hat{\alphav}\right)^2 \right] \\
    & = \E_{X_{0}}\left[ \norm{\Mo \hat{\alphav} - M_{0}\alphav }_{2}^{2} \right].
\end{align*}
Formally, we have
\begin{theorem}
Suppose each covariate $x_i$ is mean-zero, satisfies $\E[xx^\top] = \Sigma$ and $\Sigma$-subgaussian, and $\varepsilon_{i}$'s are i.i.d. mean -zero, sub-gaussian variables with variance parameter 1, independent of $x_i$. If $\abs{Y_0^\top \hat{A^2}_i} \ge \eta \norm{Y_0}_2$ for all $i \in [r]$, and $N_{2} \geq O\left((r+\log 2/\delta_{2}) \left(\frac{1}{\eta^2}\frac{\lambda_{\max}(\Sigma)}{\lambda_{\min}(\Sigma)}\right)^2 \right)$, then with probability at least $1 - \delta_{2}$, we have
\begin{equation*}
    \E_{X_{0}}\left[\left(A(X_{0}, Y_{0}, \alphav) - \hat{A}(X_{0}, Y_{0}, \hat{\alpha})\right)^{2}\right] = O \left( C_{1} r \delta^2 + C_{2} \frac{r}{N_{2}} \right),
\end{equation*}
for $C_{1} = \E\left[\norm{X_{0}}_{2}^{2}\right] \norm{Y_{0}}_{2}^{2} \left( \frac{\lambda_{\max}(\Sigma)}{\lambda_{\min}(\Sigma)} \right)^2 \frac{1}{\eta^4} \norm{\alphav}_{2}^{2}$ and $C_{2} = \E\left[\norm{X_{0}}_{2}^{2}\right] \frac{\lambda_{\max}(\Sigma)}{\lambda_{\min}^{2}(\Sigma)}\frac{1}{\eta^{4}}$.
\end{theorem}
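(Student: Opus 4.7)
The plan is to mirror the proof structure of Theorem \ref{thm:meta-test-I-formal}, adapting the arguments to the method-of-moments setting. First I would decompose the pointwise prediction error as
\[
\MM_0 \hat{\alphav} - M_0 \alphav \;=\; (\MM_0 - M_0)\hat{\alphav} \;+\; M_0(\hat{\alphav} - \alphav),
\]
so that by the inequality $(a+b)^2 \le 2a^2 + 2b^2$ the expected squared error splits into a \emph{perturbation term} $\E_{X_0}\!\left[\|\MM_0 - M_0\|_2^2 \, \|\hat{\alphav}\|_2^2\right]$ and an \emph{estimation term} $\E_{X_0}\!\left[\|M_0\|_2^2 \, \|\hat{\alphav}-\alphav\|_2^2\right]$. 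For the perturbation term, the computation in the proof of Theorem \ref{thm:meta-test-I-formal} (using orthonormality of the columns of $A^1, A^2$ and that $\max\{\|\hat{A^1}-A^1\|_F,\|\hat{A^2}-A^2\|_F\} \le \delta$) gives $\|\MM_0 - M_0\|_2^2 \le O(\|X_0\|_2^2 \|Y_0\|_2^2 \delta^2)$. For the coefficient $\|M_0\|_2^2$, Cauchy--Schwarz and orthonormality yield $\|M_0\|_2^2 \le r\|X_0\|_2^2\|Y_0\|_2^2$.

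Next I would bound $\|\hat{\alphav} - \alphav\|_2^2$ by splitting into bias and variance:
\[
\hat{\alphav} - \alphav \;=\; (\MM^\T\MM)^{-1}\MM^\T\bmeps \;+\; (\MM^\T\MM)^{-1}\MM^\T(M-\MM)\alphav .
\]
The variance term is handled exactly as in Lemma \ref{lem:bias-meta-test-I} via a Hanson--Wright bound on $\bmeps^\T \MM(\MM^\T \MM)^{-2}\MM^\T \bmeps$, yielding $\tilde{O}\!\left(r/(N_2 \eta^2 \|Y_0\|_2^2 \lambda_{\min}(\Sigma))\right)$ once Lemma \ref{lem:invertible} is invoked to control $\|(\MM^\T\MM)^{-1}\|_\op$. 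The bias term is handled as in Lemma \ref{lem:variance-meta-test-I}: write $M = \MM + E_M$ with $\|E_M\|_\op \le O(\|Y_0\|_2\sqrt{N_2 \lambda_{\max}(\Sigma)}\,\delta)$ by the analogue of Lemma \ref{lem:esitmate-v-error}, and combine with the operator norm bound on $(\MM^\T\MM)^{-1}\MM^\T$ to obtain $O(\frac{\lambda_{\max}(\Sigma)}{\lambda_{\min}(\Sigma)\eta^2}\|\alphav\|_2^2 \delta^2)$. Since $\|\hat{\alphav}\|_2^2 \le 2\|\alphav\|_2^2 + 2\|\hat{\alphav}-\alphav\|_2^2$, this also controls the perturbation term.

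Finally I would collect terms. Multiplying $\|\MM_0 - M_0\|_2^2 = O(\|X_0\|_2^2\|Y_0\|_2^2\delta^2)$ by $\|\hat{\alphav}\|_2^2 \le O(\|\alphav\|_2^2 + r/N_2 + \tfrac{\lambda_{\max}}{\lambda_{\min}}\tfrac{r\delta^2}{\eta^2}\|\alphav\|_2^2)$ and multiplying $\|M_0\|_2^2 \le r\|X_0\|_2^2\|Y_0\|_2^2$ by the bound on $\|\hat{\alphav}-\alphav\|_2^2$ yields two dominant contributions: one of the form $C_1 r \delta^2$ (coming from the bias in $\hat{\alphav}$ multiplied by $r\|X_0\|_2^2\|Y_0\|_2^2$) and one of the form $C_2 r/N_2$ (coming from the variance in $\hat{\alphav}$). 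Taking expectation over $X_0$ replaces $\|X_0\|_2^2$ with $\E[\|X_0\|_2^2]$ and yields the stated constants; the extra factors of $\lambda_{\max}/\lambda_{\min}$ and $1/\eta^2$ in $C_1, C_2$ relative to the naive bookkeeping arise from applying Lemma \ref{lem:invertible} and the $\|E_M\|_\op$ bound twice in the course of combining the bias/variance estimates with $\|M_0\|_2^2$ and $\|\MM_0 - M_0\|_2^2$.

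The main obstacle is ensuring that the invertibility and operator-norm bounds on $\MM^\T \MM$ (Lemma \ref{lem:invertible}) go through verbatim here; the key input is the non-degeneracy assumption $|Y_0^\T \hat{A^2}_i| \ge \eta\|Y_0\|_2$ combined with the sample-size requirement $N_2 \gtrsim (r+\log(2/\delta_2))(\lambda_{\max}/(\eta^2\lambda_{\min}))^2$, which is exactly what is needed so that the empirical Gram matrix $\tfrac{1}{N_2}\MM^\T\MM$ concentrates around $U^\T \Sigma U$ with $U_i = (Y_0^\T \hat{A^2_i})\hat{A^1_i}$ and inherits the lower eigenvalue $\eta^2\|Y_0\|_2^2\lambda_{\min}(\Sigma)$. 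Everything else is careful bookkeeping following the template of Theorem \ref{thm:meta-test-I-formal}.
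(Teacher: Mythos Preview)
Your decomposition $\Mo\hat\alphav - M_0\alphav = (\Mo - M_0)\hat\alphav + M_0(\hat\alphav - \alphav)$ is different from the paper's, which instead expands $\hat\alphav = (\MM^\T\MM)^{-1}\MM^\T(M\alphav+\bmeps)$ directly and writes
\[
\Mo\hat\alphav - M_0\alphav \;=\; (\Mo - M_0)\alphav \;-\; \Mo(\MM^\T\MM)^{-1}\MM^\T(\MM-M)\alphav \;+\; \Mo(\MM^\T\MM)^{-1}\MM^\T\bmeps,
\]
so that the noise term stays coupled with $\Mo$ and a single Hanson--Wright application to $\bmeps^\T G\bmeps$ with $G=\MM(\MM^\T\MM)^{-1}\Mo^\T\Mo(\MM^\T\MM)^{-1}\MM^\T$ gives one factor of $r$ via $\trace(G)\le r\|G\|_\op$, with $\|G\|_\op$ controlled by $\|\Mo\|_\op\le \|X_0\|_2\|Y_0\|_2$ (no $r$). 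Your two–term split is perfectly legitimate, but as written it does not reach the stated $O(C_2\,r/N_2)$.

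The concrete gap is your bound $\|M_0\|_2^2 \le r\|X_0\|_2^2\|Y_0\|_2^2$. This is loose by a factor of $r$: since $M_0 = X_0^\T U$ with $U_i=(Y_0^\T A^2_i)A^1_i$ having \emph{orthogonal} columns of norm at most $\|Y_0\|_2$, one has $\|M_0\|_2 \le \|X_0\|_2\|U\|_\op \le \|X_0\|_2\|Y_0\|_2$ (this is exactly the content of the paper's Lemma~\ref{lem:Mo} for $\Mo$). With your loose bound, the estimation term becomes $r\|X_0\|_2^2\|Y_0\|_2^2 \cdot \tilde O\!\bigl(r/(N_2\eta^2\|Y_0\|_2^2\lambda_{\min}(\Sigma))\bigr) = \tilde O(r^2/N_2)$, which is worse than the claimed $O(C_2\,r/N_2)$ by a full factor of $r$. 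Replacing it by $\|M_0\|_2^2 \le \|X_0\|_2^2\|Y_0\|_2^2$ fixes this and in fact yields constants at least as good as the paper's; your closing remark that the extra $\lambda_{\max}/\lambda_{\min}$ and $1/\eta^2$ factors ``arise from applying Lemma~\ref{lem:invertible} and the $\|E_M\|_\op$ bound twice'' is not how the constants actually appear and should be replaced by the straightforward bookkeeping once the correct $\|M_0\|$ bound is in place. A secondary point: in this section $\delta$ is the $\sin\theta$ bound on $\hat A^1,\hat A^2$ (not a Frobenius bound), so $\|\MM-M\|_\op$ and $\|\Mo-M_0\|_2$ each pick up a $\sqrt{r}$ via $\|E^j\|_F\le\sqrt{r}\,\|E^j\|_\op\le O(\sqrt{r}\,\delta)$; this is what produces the single factor $r$ in the $\delta^2$ term.
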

\begin{proof}
The error can be written as
\begin{align*}
    & \Mo \hat{\alphav} - M_{0} \alphav \\
    = & \Mo (\MM^\top \MM)^{-1} \MM^\top (M\alphav + \calE) - M_{0} \alphav \\
    = & \Mo (\MM^\top \MM)^{-1} \MM^\top (\MM + M - \MM) \alphav + \Mo(\MM^\top \MM)^{-1} \MM^\top \calE - M_{0} \alphav \\
    = & (\Mo - M_{0}) \alphav - \Mo (\MM^\top \MM)^{-1} \MM^\top (\MM - M)\alphav + \Mo (\MM^\top \MM)^{-1} \MM \calE.
\end{align*}
Thus, by Lemma \ref{lem:variance-error}, \ref{lem:Mhat-M-inverse}, \ref{lem:MM-M}, \ref{lem:Mhat-Mhat}, \ref{lem:Mo-M0}, \ref{lem:Mo}, we have
\begin{align*}
    & \norm{\Mo \hat{\alphav} - M_{0} \alphav}_{2} \\
    & \leq \norm{\Mo - M_{0}}_{\op} \norm{\alphav}_{2} + \norm{\Mo}_{\op} \norm{(\MM^\top \MM)^{-1}}_{\op} \norm{\MM}_{\op} \norm{\MM - M}_{\op}\norm{\alphav}_{2} \\
    & + \norm{\Mo (\MM^\top \MM)^{-1} \MM^\top \calE}_{2} \\
    & \leq O \left(\norm{\alphav}_{2} \sqrt{r} \delta \norm{X_{0}}_{2} \norm{Y_{0}}_{2} \right) + O \left( \norm{X_{0}}_{2} \norm{Y_{0}}_{2} \frac{1}{\eta^2 \norm{Y_{0}}_{2}^{2} \lambda_{\min}(\Sigma)} \frac{1}{N_{2}} \right. \\
    & \cdot \left. \sqrt{N_{2} \lambda_{\max}(\Sigma) }\norm{Y_{0}}_{2} \cdot \norm{Y_{0}}_{2} \delta \sqrt{N_{2} r \lambda_{max}(\Sigma)} \norm{\alphav}_{2} \right) \\
    & + O \left(\frac{\norm{X_{0}}_{2}}{\sqrt{\lambda_{\max}(\Sigma)}} \frac{1}{\sqrt{N_{2}}} \sqrt{r+\log 2/\delta_{2}}\frac{1}{\eta^2}\frac{\lambda_{\max}(\Sigma)}{\lambda_{\min}(\Sigma)} \right) \\
    & = O\left( \norm{X_{0}}_{2} \norm{Y_{0}}_{2} \delta \frac{\lambda_{\max}(\Sigma)}{\lambda_{\min}(\Sigma)} \frac{1}{\eta^2} \sqrt{r} \norm{\alphav}_{2}\right) + O \left( \norm{X_{0}}_{2} \frac{\sqrt{\lambda_{\max}(\Sigma)}}{\lambda_{\min}(\Sigma)} \frac{1}{\eta^2} \sqrt{r + \log 2/\delta_{2}} \frac{1}{\sqrt{N_{2}}} \right).
\end{align*}
\end{proof}

\begin{lemma}\label{lem:variance-error}
Suppose each covariate $x_i$ is mean-zero, satisfies $\E[xx^\top] = \Sigma$ and $\Sigma$-subgaussian, and $\varepsilon_{i}$'s are i.i.d. mean -zero, sub-gaussian variables with variance parameter 1, independent of $x_i$. If $\abs{Y_0^\top \hat{A^2}_i} \ge \eta \norm{Y_0}_2$ for all $i \in [r]$, and $N_{2} \geq O\left((r+\log 2/\delta_{2}) \left(\frac{1}{\eta^2}\frac{\lambda_{\max}(\Sigma)}{\lambda_{\min}(\Sigma)}\right)^2 \right)$, we have
\begin{equation*}
    \norm{\Mo (\MM^\top \MM)^{-1} \MM^\top \calE}_{2}^{2} \leq O \left(\frac{\norm{X_{0}}_{2}^{2}}{\lambda_{\max}(\Sigma)} \frac{1}{N_{2}} (r+\log 2/\delta_{2})\left(\frac{1}{\eta^2}\frac{\lambda_{\max}(\Sigma)}{\lambda_{\min}(\Sigma)}\right)^2 \right),
\end{equation*}
with probability at least $1-\delta_{2}/2$.
\end{lemma}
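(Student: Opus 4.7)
Since $\Mo\in\R^{1\times r}$, the quantity $\Mo(\MM^\top\MM)^{-1}\MM^\top\calE$ is a scalar, so the first move is Cauchy--Schwarz:
\[
\norm{\Mo(\MM^\top\MM)^{-1}\MM^\top \calE}_2^2 \le \norm{\Mo}_2^2 \cdot \norm{(\MM^\top\MM)^{-1}\MM^\top \calE}_2^2.
\]
This splits the task into (i) a deterministic factor depending only on $X_0,Y_0,\hat{A}^1,\hat{A}^2$, and (ii) a quadratic form in the sub-gaussian noise $\calE$. For (ii), write $\norm{(\MM^\top\MM)^{-1}\MM^\top\calE}_2^2 = \calE^\top M'\calE$ with $M' = \MM(\MM^\top\MM)^{-2}\MM^\top$, and apply the Hanson--Wright inequality in the same way as in Lemma~\ref{lem:bias-meta-test-I}, using $\trace(M') = \trace((\MM^\top\MM)^{-1}) \le r\norm{(\MM^\top\MM)^{-1}}_{\op}$, $\norm{M'}_F \le \sqrt{r}\norm{(\MM^\top\MM)^{-1}}_{\op}$, and $\norm{M'}_{\op} = \norm{(\MM^\top\MM)^{-1}}_{\op}$, to obtain
\[
\norm{(\MM^\top\MM)^{-1}\MM^\top \calE}_2^2 \le O\bigl((r+\log(2/\delta_2))\,\norm{(\MM^\top\MM)^{-1}}_{\op}\bigr)
\]
on an event of probability $\ge 1-\delta_2/4$.

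\textbf{The two remaining ingredients.} By the Khatri--Rao/Kronecker identity $(A\otimes B)^\top(C\odot D)=(A^\top C)\odot(B^\top D)$, the matrix $\MM=(Y_0\otimes\calX^\top)^\top(\hat{A}^2\odot\hat{A}^1)$ equals $(Y_0^\top\hat{A}^2)\odot(\calX\hat{A}^1)$, i.e.\ the matrix $\VV$ appearing in Section~\ref{sec:meta-test-1}. Consequently, Lemma~\ref{lem:invertible} (restated as Lemma~\ref{lem:Mhat-Mhat}) yields
\[
\norm{(\MM^\top\MM)^{-1}}_{\op} \le O\!\left(\frac{1}{N_2\,\eta^2\,\norm{Y_0}_2^2\,\lambda_{\min}(\Sigma)}\right)
\]
on an event of probability $\ge 1-\delta_2/4$, provided $N_2\gtrsim r(\lambda_{\max}(\Sigma)/(\eta^2\lambda_{\min}(\Sigma)))^2\log(2/\delta_2)$. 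For $\norm{\Mo}_2^2$, I would exploit the orthonormality of the columns of $\hat{A}^1$: writing $\Mo_i=(Y_0^\top\hat{A}^2_i)(X_0^\top\hat{A}^1_i)$, one has $\sum_i(X_0^\top\hat{A}^1_i)^2\le\norm{X_0}_2^2$ and $\max_i(Y_0^\top\hat{A}^2_i)^2\le\norm{Y_0}_2^2$, so $\norm{\Mo}_2^2\le\norm{X_0}_2^2\norm{Y_0}_2^2$ (this is essentially the content of Lemma~\ref{lem:Mo}, possibly presented with an additional slack factor so as to match the stated bound form).

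\textbf{Combining and main obstacle.} Multiplying the two pieces and taking a union bound over the Hanson--Wright event and the invertibility event gives
\[
\norm{\Mo(\MM^\top\MM)^{-1}\MM^\top\calE}_2^2 \le O\!\left(\frac{\norm{X_0}_2^2\,(r+\log(2/\delta_2))}{N_2\,\eta^2\,\lambda_{\min}(\Sigma)}\right),
\]
which implies the stated lemma (the extra $\lambda_{\max}(\Sigma)/(\eta^2\lambda_{\min}(\Sigma))$ factor in the target is a loose but valid upper bound, convenient for matching other terms in the overall meta-test analysis). The main obstacle is really the invertibility step: establishing a sharp lower bound on $\lambda_{\min}(\MM^\top\MM)$ at the correct rate $N_2\eta^2\norm{Y_0}_2^2\lambda_{\min}(\Sigma)$ requires applying the matrix-concentration result Lemma~\ref{lem:concentration-proj-covariance} to the ``reduced'' design $U\in\R^{d_1\times r}$ with columns $U_i=(Y_0^\top\hat{A}^2_i)\hat{A}^1_i$. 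Here the anti-concentration hypothesis $|Y_0^\top\hat{A}^2_i|\ge\eta\norm{Y_0}_2$ is exactly what forces $\lambda_{\min}(U^\top\Sigma U)\ge\eta^2\norm{Y_0}_2^2\lambda_{\min}(\Sigma)$, so that this deterministic quantity dominates the random perturbation once $N_2$ is large enough, as in the proof of Lemma~\ref{lem:invertible}.
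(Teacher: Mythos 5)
Your proof is correct, and it takes a cleaner decomposition than the paper's. The paper bounds the quadratic form $\calE^\top G\calE$ with $G = \MM(\MM^\top\MM)^{-1}\Mo^\top\Mo(\MM^\top\MM)^{-1}\MM^\top$ directly, controlling $\norm{G}_\op$ via the chained submultiplicativity bound $\norm{G}_\op \le \norm{\MM\MM^\top}_\op\,\norm{(\MM^\top\MM)^{-1}}_\op^2\,\norm{\Mo^\top\Mo}_\op$, which forces it to invoke Lemma~\ref{lem:Mhat-Mhat} for $\norm{\MM\MM^\top}_\op$ in addition to Lemmas~\ref{lem:Mhat-M-inverse} and~\ref{lem:Mo}. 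You instead apply Cauchy--Schwarz up front to peel off $\norm{\Mo}_2^2$, leaving Hanson--Wright to handle only $M'=\MM(\MM^\top\MM)^{-2}\MM^\top$, whose operator norm equals $\norm{(\MM^\top\MM)^{-1}}_\op$ exactly by the SVD of $\MM$. This eliminates the need for Lemma~\ref{lem:Mhat-Mhat} entirely and, as you observe, the resulting estimate is tighter than the stated bound by precisely the factor $\norm{\MM\MM^\top}_\op\norm{(\MM^\top\MM)^{-1}}_\op \sim \lambda_{\max}(\Sigma)/(\eta^2\lambda_{\min}(\Sigma))$ that the paper's looser submultiplicativity step wastes; the lemma as stated still follows since that factor is at least $1$. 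One small slip: the invertibility and $\norm{(\MM^\top\MM)^{-1}}_\op$ bound come from Lemma~\ref{lem:Mhat-M-inverse} (the $\MM$-notation analogue of Lemma~\ref{lem:invertible}), not Lemma~\ref{lem:Mhat-Mhat}, which is the $\norm{\MM\MM^\top}_\op$ bound you are actually avoiding.
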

\begin{proof}
Note that 
\begin{equation*}
    \norm{\Mo (\MM^\top \MM)^{-1}\MM^\top \calE}_{2}^{2} = \calE^\top G \calE,
\end{equation*}
with $G = \MM (\MM^\top \MM)^{-1} \Mo^{\top} \Mo (\MM^\top \MM)^{-1} \MM^\top$. By the Hanson-Wright inequality (\cite{Vershynin18}, lemma 6.2.1) we have $$P\left(\abs{\bmeps^\top G \bmeps - E[\bmeps^\top G \bmeps]} \ge t\right) \le 2 \exp\left(-c\min\left(\frac{t^2}{\norm{G}^2_F}, \frac{t}{\norm{G}_\op}\right) \right).$$ Thus, $\calE^\top G \calE \leq \E[\calE^\top G \calE] + O\left( \norm{G}_{F} \sqrt{\log (2/\delta_{1})} \right) + O\left( \norm{G}_{\op} \log (2/\delta_{1})\right)$, with probability at least $1-\delta_{1}/2$. By Lemma \ref{lem:Mhat-M-inverse} \ref{lem:Mhat-Mhat}, \ref{lem:Mo}, 
\begin{align*}
    \E [\calE^\top G \calE] & = \trace(\MM (\MM^\top \MM)^{-1} \Mo^\top \Mo (\MM^\top \MM)^{-1} \MM^\top) \\
    & \leq r \norm{G}_{\op} \\
    & \leq r \norm{\MM \MM^\top}_{\op} \norm{(\MM^\top \MM)^{-1}}_{\op}^{2} \norm{\Mo^\top \Mo}_{\op} \\
    & \leq O\left( \frac{r}{\eta^2} \frac{\lambda_{\max}(\Sigma)}{\lambda_{\min}(\Sigma)} \norm{X_{0}}_{2}^{2} \norm{Y_{0}}_{2}^{2} \frac{1}{\eta^2 \norm{Y_{0}}_{2}^{2} \lambda_{\min}(\Sigma) N_{2}} \right) \\
    & = O \left( \frac{r}{\eta^4} \frac{\lambda_{\max}(\Sigma)}{\lambda_{\min}^{2}(\Sigma)} \norm{X_{0}}_{2}^{2} \frac{1}{N_{2}} \right),
\end{align*}
with probability at least $1-\delta_{1}$, when $N_{2} \geq O\left((r+\log 2/\delta_{1}) \left(\frac{1}{\eta^2}\frac{\lambda_{\max}(\Sigma)}{\lambda_{\min}(\Sigma)}\right)^{2}\right)$. In addition, $\norm{G}_{F} \sqrt{\log 2/\delta_{1}} \leq \sqrt{r \log 2/\delta_{1}} \norm{G}_{\op}$. Therefore,
\begin{align*}
    \calE^\top G \calE & \leq O \left( r \norm{G}_{\op} + (\log 2/\delta_{1}) \norm{G}_{\op} + \sqrt{r \log 2/\delta_{1}} \norm{G}_{\op} \right) \\
    & \leq O \left( (r+\log 2/\delta_{1}) \norm{G}_{\op} \right) \\
    & \leq O \left( \frac{r + \log 2/\delta_{1}}{\eta^4} \frac{\lambda_{\max}(\Sigma)}{\lambda_{\min}^{2}(\Sigma)} \norm{X_{0}}_{2}^{2} \frac{1}{N_{2}}\right).
\end{align*}
\end{proof}

\begin{lemma}\label{lem:Mhat-M-inverse}
Suppose each covariate $x_i$ is mean-zero, satisfies $\E[xx^\top] = \Sigma$ and $\Sigma$-subgaussian, and $\abs{Y_0^\top \hat{A^2}_i} \ge \eta \norm{Y_0}_2$ for all $i \in [r]$. When $N_{2}\geq O\left((r+\log 1/\delta_{2}) \left(\frac{1}{\eta^2}\frac{\lambda_{\max}(\Sigma)}{\lambda_{\min}(\Sigma)}\right)^2 \right)$, the matrix $\MM^\top \MM$ is invertible and 
\begin{equation*}
    \norm{(\MM^\top \MM)^{-1}}_{\op} \leq O\left( \frac{1}{\eta^2 \norm{Y_{0}}_{2}^{2} \lambda_{\min}(\Sigma)} \frac{1}{N_{2}}\right),
\end{equation*}
with probability at least $1-\delta_{1}$.
\end{lemma}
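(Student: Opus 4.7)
The plan is to mimic the argument used for Lemma~\ref{lem:invertible}, adapting it to the meta-test matrix $\widehat{M}=(Y_{0}\otimes \calX^{\T})^{\T}(\hat{A}^{2}\odot \hat{A}^{1})\in\R^{N_{2}\times r}$. First I would introduce the $d_{1}\times r$ matrix $U$ whose $i$-th column is $U_{i}=(Y_{0}^{\T}\hat{A}^{2}_{i})\,\hat{A}^{1}_{i}$. By the Khatri--Rao structure, the $i$-th row of $\widehat{M}$ equals $X_{i}^{\T}U$, so
\[
\tfrac{1}{N_{2}}\widehat{M}^{\T}\widehat{M}=U^{\T}\bigl(\tfrac{1}{N_{2}}\calX^{\T}\calX\bigr)U, \qquad \E\bigl[\tfrac{1}{N_{2}}\widehat{M}^{\T}\widehat{M}\bigr]=U^{\T}\Sigma U.
\]
Because $\hat{A}^{1}$ is obtained from a top-$r$ SVD it has orthonormal columns, so $U^{\T}U$ is diagonal with entries $(Y_{0}^{\T}\hat{A}^{2}_{i})^{2}$. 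The hypothesis $\abs{Y_{0}^{\T}\hat{A}^{2}_{i}}\ge \eta\norm{Y_{0}}_{2}$ combined with $\norm{\hat{A}^{2}_{i}}_{2}=1$ then gives the sandwich
\[
\eta^{2}\norm{Y_{0}}_{2}^{2}\le \lambda_{\min}(U^{\T}U)\le \norm{U}_{\op}^{2}\le \norm{Y_{0}}_{2}^{2},
\]
and consequently $\lambda_{\min}(U^{\T}\Sigma U)\ge \eta^{2}\norm{Y_{0}}_{2}^{2}\,\lambda_{\min}(\Sigma)$ by a Rayleigh quotient substitution $w=Ux$ identical to the one in Lemma~\ref{lem:invertible}.

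Next I would invoke the subgaussian concentration result Lemma~\ref{lem:concentration-proj-covariance} with $A=B=U$. Since $\norm{U}_{\op}^{2}\le \norm{Y_{0}}_{2}^{2}$, this yields, with probability at least $1-\delta_{2}$,
\[
\Bigl\lVert \tfrac{1}{N_{2}}\widehat{M}^{\T}\widehat{M}-U^{\T}\Sigma U\Bigr\rVert_{\op}\le O\!\left(\lambda_{\max}(\Sigma)\,\norm{Y_{0}}_{2}^{2}\left(\sqrt{\tfrac{r+\log(1/\delta_{2})}{N_{2}}}+\tfrac{r+\log(1/\delta_{2})}{N_{2}}\right)\right).
\]
Weyl's inequality and the previous lower bound then give
\[
\lambda_{\min}\!\bigl(\tfrac{1}{N_{2}}\widehat{M}^{\T}\widehat{M}\bigr)\ge \eta^{2}\norm{Y_{0}}_{2}^{2}\lambda_{\min}(\Sigma)-O\!\left(\lambda_{\max}(\Sigma)\norm{Y_{0}}_{2}^{2}\sqrt{\tfrac{r+\log(1/\delta_{2})}{N_{2}}}\right).
\]
Choosing $N_{2}\ge O\bigl((r+\log(1/\delta_{2}))(\lambda_{\max}(\Sigma)/(\eta^{2}\lambda_{\min}(\Sigma)))^{2}\bigr)$ makes the perturbation term at most half of the deterministic lower bound, so the minimum eigenvalue is at least $\tfrac{1}{2}\eta^{2}\norm{Y_{0}}_{2}^{2}\lambda_{\min}(\Sigma)$. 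This simultaneously proves invertibility of $\widehat{M}^{\T}\widehat{M}$ and the operator-norm bound
\[
\norm{(\widehat{M}^{\T}\widehat{M})^{-1}}_{\op}\le \frac{2}{N_{2}\,\eta^{2}\norm{Y_{0}}_{2}^{2}\lambda_{\min}(\Sigma)},
\]
as claimed.

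The steps are essentially routine once the right quadratic form is identified; the mild obstacle is recognizing that the Khatri--Rao product reduces to multiplying the sample covariance by the orthogonal-column matrix $U$, after which the orthogonality of $\hat{A}^{1}$ and the lower bound $\abs{Y_{0}^{\T}\hat{A}^{2}_{i}}\ge \eta\norm{Y_{0}}_{2}$ do all the work. Otherwise the argument is a direct adaptation of the meta-test inversion lemma from the tensor-regression based estimator.
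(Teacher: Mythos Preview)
Your proposal is correct and follows essentially the same route as the paper: define $U$ with columns $U_i=(Y_0^{\T}\hat A^2_i)\hat A^1_i$, rewrite $\tfrac{1}{N_2}\widehat M^{\T}\widehat M=U^{\T}(\tfrac{1}{N_2}\calX^{\T}\calX)U$, bound $\lambda_{\min}(U^{\T}\Sigma U)$ via the Rayleigh-quotient substitution $w=Ux$, and control the fluctuation with Lemma~\ref{lem:concentration-proj-covariance}. The only cosmetic difference is your last step: you invoke Weyl's inequality and read off $\norm{(\widehat M^{\T}\widehat M)^{-1}}_{\op}=1/\lambda_{\min}(\widehat M^{\T}\widehat M)$ directly, whereas the paper routes the same conclusion through the inverse-perturbation Lemma~\ref{lem:inverse-formula}; your path is slightly shorter and yields the same bound up to constants.
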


\begin{proof}
Note that $\MM = (Y_{0} \otimes \calX^\top)^\top (\hat{A}^{2} \odot \hat{A}^1 ) = (Y_{0}^\top \otimes \calX) (\hat{A}^2 \odot \hat{A}^1) = Y_{0}^\top \hat{A}^{2} \odot \calX \hat{A}^{1}$. Thus, by defining matrix $U\in \R^{d_1 \times r}$ with columns $U_i = (Y_{0}^\top \hat{A}_{i}^{2})\hat{A}_{i}^{1}$, it can be written that $\MM^\top \MM = U^\top \calX^\top \calX U$. Note that the columns of $U$ are orthogonal with each other. Since $\E[\frac{1}{N_2} \MM^\top \MM] = \E[U^\top (\frac{1}{N_2} \calX^\top \calX) U] = U^\top \Sigma U$, we let $\frac{1}{N_{2}}\MM^\top \MM = \calE + U^\top \Sigma U$ with matrix $\calE$ satisfying $\E[\calE] = 0$. In addition, we have 
\begin{align*}
    & \norm{U}_{\op} = \norm{\hat{A}^{1} \textrm{diag}[(Y_{0}^\top \hat{A}_{i}^{2})_{i=1}^{r}]}_{\op} \\
    & \leq \norm{\hat{A}^{1}}_{\op} \underset{i \in [r]}{\max} |Y_{0}^\top \hat{A}_{i}^{2}| \\
    & \leq \norm{\hat{A}^{1}}_{\op} \cdot \norm{Y_{0}}_{2} \underset{i \in [r]}{\max}\norm{\hat{A}_{i}^{2}}_{2} \\
    & \leq \norm{\hat{A}^{1}}_{\op} \cdot \norm{Y_{0}}_{2} \left( \underset{i \in [r]}{\sum} \norm{\hat{A}_{i}^{2}}_{2}^{2} \right)^{\frac{1}{2}} \\
    & = 1\cdot \norm{Y_{0}}_{2} \cdot 1 = \norm{Y_{0}}_{2}.
\end{align*}
Thus, applying Lemma \ref{lem:concentration-proj-covariance}, we conclude that as long as $N_{2} \geq O(r+\log (1/\delta_1 ))$ we have 
$\norm{\calE}_{\op} \leq O\left(\lambda_{\max}(\Sigma) \norm{Y_{0}}_{2}^{2} \left(\sqrt{\frac{r+\log (1/\delta_1)}{N_{2}}}\right)\right)$ with probability at least $1-\delta_{1}$. Besides,
\begin{equation*}
    \lambda_{\min}(U^\top \Sigma U) = \min_{x \in \R^{r}} \frac{x^\top U^\top \Sigma U x}{x^\top x} \geq \eta^2  \norm{Y_{0}}_{2}^{2} \min_{\omega \in \R^{d_{1}}} \frac{\omega^\top \Sigma \omega}{\omega^\top \omega} = \eta^2 \norm{Y_{0}}_{2}^{2} \lambda_{\min}(\Sigma),
\end{equation*}
where the first inequality follows from substituting $\omega = Ux$ and observing
\begin{equation*}
    \omega^\top \omega = x^\top U^\top U x \geq \min_{i} |Y_{0}^\top \hat{A}_{i}^{2}|^2 x^\top x \geq \eta^2 \norm{Y_{0}}_{2}^{2} x^\top x.
\end{equation*}
Therefore,
\begin{align*}
    & \lambda_{\min}(\frac{1}{N_{2}}\MM^\top \MM) \geq \lambda_{\min}(U^\top \Sigma U) - \lambda_{\max}(\calE) \\
    & \geq \eta^2 \norm{Y_{0}}_{2}^{2} \lambda_{\min}(\Sigma) - \norm{\calE}_{\op} \\
    & \geq O\left(\eta^2 \norm{Y_{0}}_{2}^{2} \lambda_{\min}(\Sigma) - \lambda_{\max}(\Sigma)\norm{Y_{0}}_{2}^{2}  \sqrt{\frac{r+\log 1/\delta_{1}}{N_{2}}} \right) \\
    & = O\left(\norm{Y_{0}}_{2}^{2} \left(\eta^2 \lambda_{\min}(\Sigma) - \lambda_{\max}(\Sigma) \sqrt{\frac{r+\log 1/\delta_{1}}{N_{2}}} \right)\right).
\end{align*}
Therefore, as long as $N_{2} \geq O\left( (r+\log 1/\delta_{1}) \left( \frac{\lambda_{\max}(\Sigma)}{\eta^2 \lambda_{\min}(\Sigma)} \right)^{2} \right)$, $\frac{1}{N_{2}} \MM^\top \MM$ is invertible and so is $\MM^\top \MM$. Now, $(\MM^\top\MM)^{-1} = \frac{1}{N_{2}}(\frac{1}{N_{2}}\MM^\top \MM)^{-1} = \frac{1}{N_{2}}(\calE + U^\top \Sigma U)^{-1}$. Moreover,
\begin{equation*}
    \norm{(U^\top \Sigma U)^{-1}\calE}_{\op} \leq \norm{(U^\top \Sigma U)^{-1}}_{\op}\norm{\calE}_{\op} \leq \frac{\norm{\calE}_{\op}}{\lambda_{\min}(U^\top \Sigma U)} \leq O\left(\frac{\lambda_{\max}(\Sigma)}{\lambda_{\min}(\Sigma)} \frac{1}{\eta^2} \sqrt{\frac{r+ \log 1/\delta_{1}}{N_{2}}}\right).
\end{equation*}
Therefore, as long as $N_{2}\geq O\left((r+\log 1/\delta_{2}) \left(\frac{1}{\eta^2}\frac{\lambda_{\max}(\Sigma)}{\lambda_{\min}(\Sigma)}\right)^2 \right)$, we have $\norm{(U^\top \Sigma U)^{-1} \calE}_{\op} \leq 1/4$. Finally, applying Lemma \ref{lem:inverse-formula} we have $(\frac{1}{N_{2}} \MM^\top \MM)^{-1} = (U^\top \Sigma U)^{-1} + F$, where $\norm{F}_{\op} \leq \frac{1}{3} \norm{(U^\top \Sigma U)^{-1}}_{\op}$, and
\begin{equation*}
    \norm{(\MM^\top \MM)^{-1}} \leq \frac{4}{3N_{2}} \norm{(U^\top \Sigma U)^{-1}}_{\op} \leq \frac{4}{3N_{2}} \frac{1}{\eta^2 \norm{Y_{0}}_{2}^{2} \lambda_{\min}(\Sigma)}.
\end{equation*}
\end{proof}

\begin{lemma}\label{lem:MM-M}
Suppose each covariate $x_i$ is mean-zero, satisfies $\E[xx^\top] = \Sigma$ and $\Sigma$-subgaussian, and $\max \{\sin \theta(\hat{A}^{1}, A^{1}), \sin \theta (\hat{A}^{2}, A^{2})\} \leq \delta$, then if $N_{2}\geq O(r+\log 1/\delta_{1})$, we have
\begin{equation*}
    \norm{\MM - M}_{\op} \leq O\left( \norm{Y_{0}}_{2} \delta \sqrt{N_{2} r \lambda_{max}(\Sigma)} \right),
\end{equation*}
with probability at least $1-\delta_{1}$.
\end{lemma}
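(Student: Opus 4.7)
The plan is to mirror the proof of Lemma \ref{lem:esitmate-v-error}, which bounds exactly the same quantity $\norm{\MM - M}_\op$ but under a Frobenius-norm hypothesis on $\hat A^j - A^j$. The extra $\sqrt{r}$ factor in the present conclusion (relative to Lemma \ref{lem:esitmate-v-error}) is precisely the cost of converting the given $\sin\theta$ bound into a Frobenius-norm bound via the sin-theta theorem. Since $\hat A^j$ and $A^j$ each have orthonormal columns (the former by construction of Algorithm \ref{alg:mom_est}, which returns top-$r$ singular vectors), standard Davis--Kahan yields orthogonal matrices $R_j \in \R^{r\times r}$ with $\norm{\hat A^j R_j - A^j}_F \leq \sqrt{2r}\,\sin\theta(\hat A^j, A^j) \leq \sqrt{2r}\,\delta$. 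Absorbing these rotations into the column labelling of $\hat A^j$, I assume $\norm{\hat A^j - A^j}_F \leq \sqrt{2r}\,\delta$ and write $\hat A^j = A^j + E^j$ with $\norm{E^j}_F \leq \sqrt{2r}\,\delta$.

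Next, I use the column-wise ``product rule'' for the Khatri-Rao product to get the decomposition
\[
\MM - M = (Y_0^\top E^2)\odot(\calX A^1) + (Y_0^\top A^2)\odot(\calX E^1) + (Y_0^\top E^2)\odot(\calX E^1),
\]
and bound each term in operator norm separately. For the first term, define $V \in \R^{d_1 \times r}$ by $V_i = (Y_0^\top E^2_i)\,A^1_i$, so that $(Y_0^\top E^2)\odot(\calX A^1) = \calX V$. Since $A^1$ has orthonormal columns, $V$ has orthogonal columns with
\[
\norm{V}_F^2 = \sum_{i=1}^r (Y_0^\top E^2_i)^2 \leq \norm{Y_0}_2^2\,\norm{E^2}_F^2 \leq 2r\,\norm{Y_0}_2^2 \delta^2,
\]
and hence $\norm{V}_\op \leq \norm{V}_F \leq \sqrt{2r}\,\norm{Y_0}_2 \delta$. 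Applying Lemma \ref{lem:concentration-proj-covariance} with $A = B = V$, provided $N_2 = \Omega(r + \log(1/\delta_1))$ we obtain, with probability at least $1 - \delta_1/3$,
\[
\norm{V^\top \tfrac{\calX^\top \calX}{N_2} V}_\op \leq \norm{V^\top \Sigma V}_\op + O(\lambda_{\max}(\Sigma)\norm{V}_\op^2) = O(r\,\lambda_{\max}(\Sigma)\,\norm{Y_0}_2^2 \delta^2),
\]
so $\norm{\calX V}_\op \leq O(\sqrt{N_2\, r\, \lambda_{\max}(\Sigma)}\,\norm{Y_0}_2 \delta)$. Identical arguments yield the same bound for the second term (swap the roles of $(E^2, A^1)$ and $(A^2, E^1)$) and for the third term (which is also strictly higher-order in $\delta$). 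A union bound over the three events and the triangle inequality then give the stated conclusion.

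The main technical obstacle is the rotational alignment step, because the Khatri-Rao product $\hat A^2 \odot \hat A^1$ is \emph{not} invariant under independent right-rotations of its two factors, and the two separate SVDs in Algorithm \ref{alg:mom_est} produce $\hat A^1, \hat A^2$ whose columns are not intrinsically paired with the columns of $A^1, A^2$. The resolution is to fix the Procrustes-optimal rotations $R_1, R_2$ for each factor and observe that downstream, the meta-test prediction $\MM \hat\alpha$ is invariant under absorbing a compensating transformation into the least-squares vector $\hat\alpha$; so the $\op$-norm bound proved above with the aligned $\hat A^j$ is the only quantity used in the subsequent error analysis. Once this is granted, the remainder of the proof is a line-for-line adaptation of Lemma \ref{lem:esitmate-v-error} with $\delta$ replaced by $\sqrt{2r}\,\delta$ throughout.
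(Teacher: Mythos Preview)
Your overall strategy---the three-term Khatri--Rao expansion of $\MM - M$ followed by Lemma~\ref{lem:concentration-proj-covariance} applied to each piece---matches the paper's proof exactly, and the arithmetic leading to the final bound is the same. The one point of departure is how you pass from the $\sin\theta$ hypothesis to a norm bound on $E^j = \hat A^j - A^j$, and this is also precisely where your rotation worry enters.

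The paper sidesteps Procrustes alignment altogether: it bounds $\norm{\hat A^j - A^j}_\op$ \emph{directly} for the given $\hat A^j$, using the decomposition $A^j - \hat A^j = \hat A^j_\perp \hat A^{j\top}_\perp A^j - \hat A^j(I_r - \hat A^{j\top} A^j)$ and then arguing $\norm{\hat A^{j\top}_\perp A^j}_\op = \sin\theta$ together with $\norm{I_r - \hat A^{j\top} A^j}_\op \leq \sin^2\theta$. This yields $\norm{E^j}_\op \leq O(\delta)$ and hence $\norm{E^j}_F \leq \sqrt{r}\,\norm{E^j}_\op = O(\sqrt{r}\,\delta)$, the same Frobenius bound you obtain. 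Because this works without redefining $\hat A^j$, the Khatri--Rao non-invariance issue you flagged never arises, and the lemma is proved as stated. Your route, by contrast, establishes the bound only for the rotated factors $\hat A^j R_j$; the downstream-invariance argument you sketch---while correct for the ultimate prediction error---does not strictly prove the operator-norm bound for the original $\MM$. So your proof is essentially the paper's, modulo this detour; adopting the paper's direct $\norm{E^j}_\op$ bound would close the gap cleanly.
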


\begin{proof}
Write $\hat{A}^{1} = A^{1} + E^1$, $\hat{A}^2 = A^2 + E^2$. Note that $\hat{A}^{1}_{\perp}\hat{A}^{1\top}_{\perp} + \hat{A}^1 \hat{A}^{1\top} = I_{d_{1}}$, we have 
\begin{align*}
    \norm{A^1 - \hat{A}^1}_{\op} & = \norm{\hat{A}_{\perp}^{1}\hat{A}_{\perp}^{1\top}A^{1} - \hat{A}^{1}(I_{r} - \hat{A}^{1\top}A^{1})}_{\op} \leq \norm{\hat{A}_{\perp}^{1}}_{\op} \norm{\hat{A}_{\perp}^{1\top}A^{1}}_{\op} + \norm{\hat{A}^{1}}_{\op} \norm{I_{r} - \hat{A}^{1\top}A^{1}}_{\op} \\
    & \leq 1 \cdot \sin \theta(\hat{A}^{1}, A^{1}) + 1 \cdot \sin^{2} \theta(\hat{A}^{1}, A^1 ) = O(\delta),
\end{align*}
where the last inequality is due to
\begin{align*}
    \norm{I_{r} - \hat{A}^{1\top}A^{1}}_{\op} & = \lambda_{\max}(I_{r} - \hat{A}^{1\top}A^{1}) = 1 - \lambda_{\min}(\hat{A}^{1\top}A^{1}) = 1 - \cos \theta(\hat{A}^{1}, A^{1}) \\
    & = \frac{1 - \cos^{2}\theta(\hat{A}^{1}, A^{1})}{1 + \cos \theta(\hat{A}^{1}, A^{1})} = \frac{\sin^2 \theta (\hat{A}^{1}, A^{1})}{1 + \cos \theta(\hat{A}^{1}, A^{1})} \leq \sin^{2} \theta(\hat{A}^{1}, A^1) = \delta^2,
\end{align*}
where $\theta(\hat{A}^{1}, A^{1})$ is the principal angle between column subspaces of $\hat{A}^{1}$ and $A^{1}$. Therefore, $\norm{E^{1}}_{\op} \leq O(\delta)$, in the same way, $\norm{E^{2}}_{\op} \leq O(\delta)$. Now,
\begin{align*}
    \norm{\MM - M}_{\op} & = \norm{(Y_{0}\otimes \calX^{\top})^{\top} (\hat{A}^{2} \odot \hat{A}^{1}) - (Y_{0} \otimes \calX^{\top})^{\top}(A^2 \odot A^1)}_{\op} \\
    & = \norm{(Y_{0}^{\top}\hat{A}^{2} \odot \calX \hat{A}^{1}) - (Y_{0}^\top A^{2} \odot \calX A^{1})}_{\op} \\
    & = \norm{(Y_{0}^\top (A^2 + E^2 )\odot \calX (A^1 + E^1)) - (Y_{0}^{\top} A^2 \odot \calX A^1)}_{\op} \\
    & \leq \norm{Y_{0}^\top A^2 \odot \calX E^1}_{\op} + \norm{Y_{0}^\top E^2 \odot \calX A^1}_{\op} + \norm{Y_{0}^\top E^{2} \odot \calX E^{1}}_{\op}.
\end{align*}
Consider the first term,
\begin{align*}
    & \norm{Y_{0}^\top A^2 \odot \calX E^1}_{\op}^{2} = \lambda_{\max}((Y_{0}^\top A^2 \odot \calX E^{1})^\top (Y_{0}^\top A^2 \odot \calX E^1)) \\
    & = \lambda_{\max}(U^\top \calX^\top \calX U) = N_{2} \lambda_{\max}(U^\top (\frac{1}{N_{2}} \calX^\top \calX)U),
\end{align*}
where $U\in \R^{d_{1}\times r}$ has columns $U_{i} = (Y_{0}^\top A_{i}^{2})E_{i}^{1}$ with $A_{i}^{2}$ and $E_{i}^{1}$ being columns of $A^{2}$ and $E^{1}$. Note that $\norm{U}_{\op} \leq \norm{Y_{0}}_{2} \norm{E^{1}}_{F} \leq \norm{Y_{0}}_{2} \sqrt{r} \norm{E^1}_{\op} = O(\sqrt{r} \norm{Y_{0}}_{2} \delta)$, by Lemma \ref{lem:concentration-proj-covariance}, $\lambda_{\max}(U^\top (\frac{1}{N_{2}} \calX^\top \calX)U)$ is uppper bounded by $O\left( \norm{U^\top \Sigma U}_{\op} + \lambda_{\max}(\Sigma) r \norm{Y_{0}}_{2}^{2} \delta^2 \sqrt{\frac{r+\log 1/\delta_{1}}{N_{2}}} \right)$.  Moreover, $\norm{U^\top \Sigma U}_{\op} \leq \lambda_{\max}(\Sigma) \norm{U}_{\op}^{2} \leq \lambda(\Sigma)r\norm{Y_{0}}_{2}^{2}\delta^2$. Thus, when $N_{2} \geq O(r+\log 1/2\delta_{1})$, $\lambda_{max}(U^\top (\frac{1}{N_{2}}\calX^\top \calX)U) \leq O(\lambda_{\max}(\Sigma) r \norm{Y_{0}}_{2}^{2} \delta^2)$. Therefore, $\norm{Y_{0}^\top A^2 \odot \calX E^{1}}_{\op} \leq O\left( \delta \norm{Y_{0}}_{2} \sqrt{N_{2} r \lambda_{\max}(\Sigma)} \right)$.

The second term $\norm{Y_{0}^\top E^2 \odot \calX A^1}_{\op}$ and the third term $\norm{Y_{0}^\top E^{2} \odot \calX E^{1}}_{\op}$ can be shown in the same way having an upper bound of the same magnitude.
\end{proof}

\begin{lemma}\label{lem:Mhat-Mhat}
Suppose each covariate $x_i$ is mean-zero, satisfies $\E[xx^\top] = \Sigma$ and $\Sigma$-subgaussian, and $N_{2} \geq O(r + \log 1/\delta_{1})$. Then
\begin{equation*}
    \norm{\MM\MM^\top}_{\op} \leq O\left( N_{2} \lambda_{\max}(\Sigma) \norm{Y_{0}}_{2}^{2} \right),
\end{equation*}
with probability at least $1-\delta_{1}$.
\end{lemma}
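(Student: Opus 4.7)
The plan is to observe that $\norm{\MM\MM^\top}_{\op} = \norm{\MM^\top\MM}_{\op}$, and then to bound the latter by passing to the same concentration-of-covariance setup that is already used in Lemma~\ref{lem:bound-vv-norm} and in the first part of the proof of Lemma~\ref{lem:Mhat-M-inverse}. In particular, since $\MM = Y_0^\T \hat{A}^2 \odot \calX \hat{A}^1$, defining $U \in \R^{d_1 \times r}$ with columns $U_i = (Y_0^\T \hat{A}_i^2)\hat{A}_i^1$ gives the factorization $\MM^\T \MM = U^\T \calX^\T \calX\, U$, and hence
$$
\norm{\MM^\T \MM}_{\op} = N_2 \cdot \lambda_{\max}\!\left( U^\T \left(\tfrac{1}{N_2}\calX^\T \calX\right) U \right).
$$

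Next I would verify the two structural properties of $U$ needed to invoke Lemma~\ref{lem:concentration-proj-covariance}: (i) the columns of $U$ are orthogonal, because the columns of $\hat{A}^1$ are orthonormal; and (ii) $\norm{U}_{\op} \le \norm{Y_0}_2$, using that $\hat{A}^1$ has orthonormal columns and that $|Y_0^\T \hat{A}_i^2| \le \norm{Y_0}_2 \norm{\hat{A}_i^2}_2 \le \norm{Y_0}_2$. Applying Lemma~\ref{lem:concentration-proj-covariance} with these bounds yields, with probability at least $1-\delta_1$,
$$
\left\| U^\T \tfrac{1}{N_2}\calX^\T \calX\, U - U^\T \Sigma U \right\|_{\op} \le O\!\left( \lambda_{\max}(\Sigma)\, \norm{Y_0}_2^2 \sqrt{\tfrac{r+\log(1/\delta_1)}{N_2}} \right),
$$
which is $O(\lambda_{\max}(\Sigma)\norm{Y_0}_2^2)$ once $N_2 \ge O(r + \log(1/\delta_1))$.

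Finally, I would combine this with the deterministic bound $\norm{U^\T \Sigma U}_{\op} \le \lambda_{\max}(\Sigma)\norm{U}_{\op}^2 \le \lambda_{\max}(\Sigma)\norm{Y_0}_2^2$ via the triangle inequality, to conclude
$$
\lambda_{\max}\!\left( U^\T \tfrac{1}{N_2}\calX^\T \calX\, U \right) \le O(\lambda_{\max}(\Sigma)\norm{Y_0}_2^2),
$$
and then multiply through by $N_2$ and use $\norm{\MM\MM^\T}_{\op} = \norm{\MM^\T\MM}_{\op}$ to obtain the claimed bound. There is no real obstacle here: the argument is a near-verbatim adaptation of Lemma~\ref{lem:bound-vv-norm}, with $\VV$ replaced by $\MM$, the only minor points being the verification that $U$ satisfies the hypotheses of Lemma~\ref{lem:concentration-proj-covariance} and the standard identity equating the operator norms of $\MM\MM^\T$ and $\MM^\T\MM$.
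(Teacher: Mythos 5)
Your proposal is correct and matches the paper's proof essentially verbatim: both factor $\MM^\T\MM = U^\T \calX^\T\calX\, U$ with $U_i = (Y_0^\T\hat{A}_i^2)\hat{A}_i^1$, check that $U$ has orthogonal columns with $\norm{U}_\op \le \norm{Y_0}_2$, invoke Lemma~\ref{lem:concentration-proj-covariance}, and combine with the deterministic bound on $\norm{U^\T\Sigma U}_\op$. There is no meaningful difference in approach or in the steps.
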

\begin{proof}
Write 
\begin{align*}
    \norm{\MM\MM^\top}_{\op} & = \lambda_{\max}(\MM^\top \MM) = \lambda_{\max}((Y_{0}^\top\hat{A}^{2} \odot \calX \hat{A}^{1})^\top (Y_{0}^\top \hat{A}^{2} \odot \calX \hat{A}^{1})) \\
    & = \lambda_{\max}(U^\top (\calX^\top \calX) U) = N_{2} \lambda_{\max}(U^\top (\frac{1}{N_{2}}\calX^\top \calX)U),
\end{align*}
where $U$ has orthogonal columns $U_{i} = (Y_{0}^\top \hat{A}_{i}^{2})\hat{A}_{i}^{1}$. Since $\norm{U}_{\op} \leq \norm{Y_{0}}_{2}\underset{i \in [r]}{\max} \norm{\hat{A}_{i}^{2}}_{2} \leq \norm{Y_{0}}_{2}$. By Lemma \ref{lem:concentration-proj-covariance}, when $N_{2} \geq O(r+\log 1/\delta_{1})$,
\begin{align*}
    \lambda_{\max}(U^\top (\frac{1}{N_{2}}\calX^\top \calX)U) \leq O \left( \norm{U^\top \Sigma U}_{\op} + \lambda_{\max}(\Sigma) \norm{Y_{0}}_{2}^{2}\sqrt{\frac{r+\log 1/\delta_{1}}{N_{2}}} \right) \leq O(\lambda_{\max}(\Sigma)\norm{Y_{0}}_{2}^{2}).
\end{align*}
Therefore, $\norm{\MM\MM^\top}_{\op} \leq O(N_{2}\lambda_{\max}(\Sigma)\norm{Y_{0}}_{2}^{2})$ with probability at least $1-\delta_{1}$.
\end{proof}

\begin{lemma}\label{lem:Mo-M0}
If $\max \{\sin \theta(\hat{A}^{1}, A^{1}), \sin \theta (\hat{A}^{2}, A^{2})\} \leq \delta$, then
\begin{equation*}
    \norm{\Mo - M_{0}}_{\op} \leq O\left(\sqrt{r} \delta \norm{X_{0}}_{2} \norm{Y_{0}}_{2}\right).
\end{equation*}
\end{lemma}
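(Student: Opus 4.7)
The plan is to mimic the proof of Lemma~\ref{lem:MM-M}. First I would write $\hat{A}^1 = A^1 + E^1$ and $\hat{A}^2 = A^2 + E^2$. As established in the derivation of Lemma~\ref{lem:MM-M}, the assumption $\max\{\sin\theta(\hat{A}^1,A^1),\sin\theta(\hat{A}^2,A^2)\} \le \delta$ combined with the orthogonal decomposition $A^j = \hat{A}^j \hat{A}^{j\top} A^j + \hat{A}^j_\perp \hat{A}^{j\top}_\perp A^j$ and the identity $1-\cos\theta = \sin^2\theta/(1+\cos\theta)$ yields $\norm{E^j}_{\op} \le O(\delta)$; since $E^j$ has rank at most $r$, this also gives $\norm{E^j}_F \le \sqrt{r}\,\norm{E^j}_{\op} \le O(\sqrt{r}\,\delta)$.

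Next I would expand by bilinearity of the Khatri-Rao product:
\begin{align*}
\Mo - M_0 = (Y_0^\top A^2) \odot (X_0^\top E^1) + (Y_0^\top E^2) \odot (X_0^\top A^1) + (Y_0^\top E^2) \odot (X_0^\top E^1).
\end{align*}
Because each factor in each Khatri-Rao product is a $1 \times r$ row vector, the product reduces to the entrywise product, and in particular the operator norm of each term coincides with its Euclidean norm as a length-$r$ vector.

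To bound the first term, I would apply Cauchy-Schwarz together with the unit-norm column property of $A^2$:
\begin{align*}
\norm{(Y_0^\top A^2) \odot (X_0^\top E^1)}_{\op}^2 = \sum_{i=1}^r (Y_0^\top A^2_i)^2 (X_0^\top E^1_i)^2 \le \max_{i}(Y_0^\top A^2_i)^2 \cdot \norm{X_0}_2^2 \,\norm{E^1}_F^2 \le O(r\delta^2)\,\norm{Y_0}_2^2 \,\norm{X_0}_2^2.
\end{align*}
The second term is bounded symmetrically (swapping the roles of the two factors), and the third term is a product of two $O(\delta)$ contributions, hence is $O(\delta^2 \norm{X_0}_2 \norm{Y_0}_2)$ and strictly dominated by the first two. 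A triangle inequality then delivers the claimed $O(\sqrt{r}\,\delta\,\norm{X_0}_2 \norm{Y_0}_2)$ bound.

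There is no substantive obstacle here: the argument is just a combination of the perturbation bound $\norm{E^j}_{\op} \le O(\delta)$ (inherited from the $\sin\theta$ hypothesis exactly as in Lemma~\ref{lem:MM-M}) with elementary Cauchy-Schwarz estimates for row-vector Khatri-Rao products. The only mild subtlety is that $\sin\theta$ is a subspace distance, so the columns of $\hat{A}^j$ and $A^j$ must be aligned so that $\hat{A}^{j\top}A^j$ has nonnegative singular values before one can write $\norm{\hat{A}^j - A^j}_{\op} \le O(\delta)$; this is precisely the alignment already implicit in the proof of Lemma~\ref{lem:MM-M}, and it carries over unchanged.
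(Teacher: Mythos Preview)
The proposal is correct and follows essentially the same approach as the paper: write $\hat{A}^j = A^j + E^j$, expand the Khatri--Rao difference into three cross terms, and bound each using $\norm{E^j}_F \le \sqrt{r}\,\norm{E^j}_{\op} \le O(\sqrt{r}\,\delta)$ together with Cauchy--Schwarz. Your observation that each term is a $1\times r$ row vector (so the operator norm is just the Euclidean norm) is exactly what the paper encodes via $\lambda_{\max}(U^\top X_0 X_0^\top U)$ with $U_i = (Y_0^\top A_i^2)E_i^1$; the two computations are identical in content.
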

\begin{proof}
Let $\hat{A}^{2} = A^{2} + E^{2}$, $\hat{A}^{1} = A^{1} + E^{1}$, then \begin{equation*}
    \norm{\Mo - M_{0}}_{\op} \leq \norm{Y_{0}^\top A^{2} \odot X_{0}^\top E^{1}}_{\op} + \norm{Y_{0}^\top E^{2} \odot X_{0}^\top A^{1}}_{\op} + \norm{Y_{0}^\top E^{2} \odot X_{0}^\top E^{1}}_{\op}.
\end{equation*}
The first term $\norm{Y_{0}^\top A^{2} \odot X_{0}^\top E^{1}}_{\op}^{2} = \lambda_{\max}(U^\top X_{0} X_{0}^\top U)$, where $U$ has columns $U_{i} = (Y_{0}^\top A_{i}^{2})E_{i}^{1}$. Using the upper bound of $\norm{E^{1}}_{\op}$ in Lemma \ref{lem:MM-M} we have $\norm{U}_{\op} \leq \norm{Y_{0}}_{2} \norm{E^{1}}_{F} \leq O(\sqrt{r} \norm{Y_{0}}_{2} \delta)$. Therefore,
\begin{equation*}
    \norm{Y_{0}^\top A^{2} \odot X_{0}^\top E^{1}}_{2} \leq \sqrt{\norm{U}_{\op}^{2} \norm{X_{0}}_{2}^{2}} \leq O\left( \sqrt{r} \norm{X_{0}}_{2} \norm{Y_{0}}_{2} \delta \right).
\end{equation*}
Thus, $\norm{\Mo - M_{0}}_{\op} \leq O(\sqrt{r} \delta \norm{X_{0}}_{2} \norm{Y_{0}}_{2})$. 
\end{proof}

\begin{lemma}\label{lem:Mo}
\begin{equation*}
    \norm{\Mo}_{\op} \leq \norm{X_{0}}_{2} \norm{Y_{0}}_{2}.
\end{equation*}
\end{lemma}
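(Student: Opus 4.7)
The plan is straightforward since $\Mo = Y_0^\top \hat{A}^2 \odot X_0^\top \hat{A}^1$ is the Khatri-Rao product of two row vectors of length $r$, which by definition is the elementwise product. Hence $\Mo$ is a $1 \times r$ matrix, so its operator norm coincides with its Euclidean norm, and
\begin{equation*}
\norm{\Mo}_\op^2 \;=\; \sum_{i=1}^{r} \bigl(Y_0^\top \hat{A}^2_i\bigr)^2 \bigl(X_0^\top \hat{A}^1_i\bigr)^2.
\end{equation*}

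First I would pull out the $Y_0$ factor via Cauchy-Schwarz on each coordinate, using that the columns of $\hat{A}^2$ returned by Algorithm~\ref{alg:mom_est} are unit vectors (they are columns of the orthonormal $U$ in a top-$r$ SVD): this gives $(Y_0^\top \hat{A}^2_i)^2 \le \norm{Y_0}_2^2 \norm{\hat{A}^2_i}_2^2 \le \norm{Y_0}_2^2$ uniformly in $i$. Substituting this upper bound and factoring it out yields
\begin{equation*}
\norm{\Mo}_\op^2 \;\le\; \norm{Y_0}_2^2 \sum_{i=1}^{r} \bigl(X_0^\top \hat{A}^1_i\bigr)^2 \;=\; \norm{Y_0}_2^2 \, \norm{\hat{A}^{1\top} X_0}_2^2.
\end{equation*}

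Next I would use the orthonormality of the columns of $\hat{A}^1$, which implies $\hat{A}^{1\top}\hat{A}^1 = I_r$ and therefore $\norm{\hat{A}^{1\top}}_\op = 1$. Consequently $\norm{\hat{A}^{1\top} X_0}_2^2 \le \norm{X_0}_2^2$. Combining the two inequalities gives $\norm{\Mo}_\op^2 \le \norm{X_0}_2^2 \norm{Y_0}_2^2$, and the bound follows by taking square roots. There is no real obstacle here: the only thing to be careful about is remembering that the operator norm of a row vector equals its $\ell_2$ norm and invoking the orthonormal-column property of the SVD-based estimators $\hat{A}^1$, $\hat{A}^2$ consistently.
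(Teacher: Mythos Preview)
Your proof is correct and follows essentially the same approach as the paper: both use that $\hat{A}^1,\hat{A}^2$ have orthonormal columns to bound $(Y_0^\top \hat{A}^2_i)^2\le\norm{Y_0}_2^2$ and $\sum_i(X_0^\top\hat{A}^1_i)^2\le\norm{X_0}_2^2$. The only cosmetic difference is that the paper packages the computation as $\Mo=X_0^\top U$ with $U_i=(Y_0^\top\hat{A}^2_i)\hat{A}^1_i$ (a matrix it reuses elsewhere) and bounds $\norm{X_0^\top U}_{\op}\le\norm{X_0}_2\norm{U}_{\op}\le\norm{X_0}_2\norm{Y_0}_2$, while you work coordinate-wise.
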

\begin{proof}
\begin{equation*}
    \norm{\Mo}_{\op} = \norm{Y_{0}^\top \hat{A}^{2} \odot X_{0}^\top \hat{A}^{1}}_{\op} = \norm{X_{0}^\top U}_{\op} \leq \norm{X_{0}}_{2} \norm{U}_{\op} \leq \norm{X_{0}}_{2} \norm{Y_{0}}_{2},
\end{equation*}
where $U$ has columns $U_{i} = (Y_{0}^\top \hat{A}_{i}^{2})\hat{A}_{i}^{1}$.
\end{proof}

\end{document}